\definecolor{shadecolor}{gray}{0.90}
\declaretheoremstyle[
headfont=\normalfont\bfseries,
notefont=\mdseries, notebraces={(}{)},
bodyfont=\normalfont,
postheadspace=0.5em,
spaceabove=1pt,
mdframed={
  skipabove=8pt,
  skipbelow=8pt,
  hidealllines=true,
  backgroundcolor={shadecolor},
  innerleftmargin=4pt,
  innerrightmargin=4pt}
]{shaded}
\declaretheorem[style=shaded,within=section]{definition}
\declaretheorem[style=shaded,sibling=definition]{theorem}
\declaretheorem[style=shaded,sibling=definition]{proposition}
\declaretheorem[style=shaded,sibling=definition]{assumption}
\declaretheorem[style=shaded,sibling=definition]{corollary}
\declaretheorem[style=shaded,sibling=definition]{lemma}
\declaretheorem[style=shaded,sibling=definition]{remark}
\newcommand{\R}{\mathbb{R}} % Reals
\newcommand{\N}{\mathbb{N}} % Naturals
\newcommand{\cD}{{\cal D}}
\newcommand{\cO}{{\cal O}}
\newcommand{\cW}{{\cal W}}
\newcommand{\mD}{{\bf D}}
\newcommand{\mI}{{\bf I}}
\newcommand{\zeros}{{\bf 0}}
\newcommand{\eqdef}{\coloneqq} 
\newcommand{\dotprod}[1]{\left< #1\right>} % product
\newcommand{\norm}[1]{ \left\| #1 \right\|}      % norm 
\DeclareMathOperator{\argmin}{argmin}        % argmin
\newcommand{\E}[1]{\mathbb{E}\left[#1\right] } 
\newcommand{\EE}[2]{\mathbb{E}_{#1}\left[#2\right] }
\title{Function Value Learning: Adaptive Learning Rates Based on the Polyak Stepsize and Function Splitting in ERM}
\author{Guillaume Garrigos \\ 
guillaume.garrigos@lpsm.paris\\
Université Paris Cité and Sorbonne Université, CNRS\\
       Laboratoire de Probabilités, Statistique et Modélisation\\
       F-75013 Paris, France\\
\and Robert M. Gower \\
rgower@flatironinstitute.org \\
       Center for Computational Mathematics\\
      Flatiron Institute, New York 
      \and Fabian Schaipp \\   fabian.schaipp@tum.de \\
       Department of Mathematics\\
      Technical University of Munich}
\begin{document}

\maketitle

\newcommand{\FUVAL}{\texttt{FUVAL}}

\begin{abstract}
Here we develop variants of SGD (stochastic gradient descent) with an adaptive step size that make use of the sampled loss values.  In particular, we focus on solving a finite sum-of-terms problem, also known as empirical risk minimization.  We first detail an idealized adaptive method called \texttt{SPS}$_+$ that makes use of the sampled loss values and assumes knowledge of the sampled loss at optimality.  This \texttt{SPS}$_+$ is a minor modification of the SPS (Stochastic Polyak Stepsize) method,  where the step size is enforced to be positive.  We then show that \texttt{SPS}$_+$ achieves the best known rates of convergence for SGD in the Lipschitz non-smooth. We then move onto to develop \FUVAL{}, a variant of \texttt{SPS}$_+$ where the loss values at optimality are gradually learned, as opposed to being given.  We give three viewpoints of \FUVAL{},  as a projection based method, as a variant of the prox-linear method, and then as a particular online SGD method. We then present a convergence analysis of \FUVAL{} and experimental results. The shortcomings of our work is that the convergence analysis of \FUVAL{} shows no advantage over SGD. Another shortcomming is that currently only the full batch version of \FUVAL{} shows a minor advantages of GD (Gradient Descent) in terms of sensitivity to the step size. The stochastic version shows no clear advantage over SGD. We conjecture that large mini-batches are required to make \FUVAL{} competitive.

Currently the new \FUVAL{} method studied in this paper does not offer any clear theoretical or practical advantage.
We have chosen to make this draft available online nonetheless because of some of the analysis techniques we use, such as the non-smooth analysis of \texttt{SPS}$_+$, and also to show  an apparently interesting approach that currently does not work.
\end{abstract}

%\rob{Before paper submission}
%\begin{enumerate}
%    \item Remove non-smooth analysis from SGD part. Consider removing all the analysis
%%    \item Remove Section~\ref{app:lagrangian-penalty}.
%\end{enumerate}

 \section{Introduction}

Consider the empirical risk minimization problem
\begin{equation} \label{eq:main}
w^* \in \underset{{w \in \R^d}}{\argmin} \; f(w), \quad  f(w):=\frac{1}{n}\sum_{i=1}^n f_i(w),
\end{equation}
where we assume  that $f$ is bounded below,  continuously differentiable, and the set of minimizers  $\cW^* \subset \R^d$ is nonempty. We denote the optimal value of~\eqref{eq:main} by $f_* := f(w^*) \in \R$. Let $w^0$ be a given initial point.

%
%\begin{assumption}\label{ass:basic}
%Let $ {\rm{argmin}}_{w\in \R^d} f(w) \neq \emptyset$. Further, let $f_i$ be differentiable, and $ {\rm{argmin}}_{w\in \R^d} f_i(w) \neq \emptyset$ for all $i\in \{1,\dots , n\}$.  
%\end{assumption}

Here we consider iterative stochastic gradient methods that also make use of the loss values $f_i(w^t)$.
Despite the loss value $f_i(w^t)$ being key for monitoring the progress of methods for solving~\eqref{eq:main},  these values are seldom used in the updates of stochastic methods.  Notable exceptions are line search procedures,  and  \texttt{SGD} with a Polyak stepsize~\cite{SPS} whose iterates are given by
\begin{equation}\label{eq:spsloizou}
w^{t+1} \; = \; w^t - \frac{f_i(w^t) - \inf f_i}{\norm{\nabla f_i(w^t)}^2} \nabla f_i(w^t),
\end{equation}
where $i$ is sampling i.i.d and uniformly from $\{1, \ldots, n\}$ at each iteration. 
The issue with~\eqref{eq:spsloizou} is that requires knowing $\inf f_i$.
Another closely related method,  given in~\cite{MOTAPS}, is 
\begin{equation}\label{eq:sps}
w^{t+1} \; = \; w^t - \frac{f_i(w^t) -  f_i(w^*)}{\norm{\nabla f_i(w^t)}^2} \nabla f_i(w^t).
\end{equation}
We will call~\eqref{eq:sps}  SPS method, though this acronym is also used in other work to refer to~\eqref{eq:spsloizou}.
The method~\eqref{eq:sps} has now two issues:  Again the $ f_i(w^*)$ are often not known,  and the resulting step  size $\frac{f_i(w^t) -  f_i(w^*)}{\norm{\nabla f_i(w^t)}^2} $ may be negative.

Our objective here to develop methods that, like~\eqref{eq:spsloizou} and~\eqref{eq:sps}, make use of the loss values, but unlike these methods, does not require knowing  $\inf f_i$ or  $f_i(w^*)$.

\subsection{Function Splitting and Variable Splitting}

We will design our method by using projections onto constraints. To do this, we first need to re-write~\eqref{eq:main} in such a way that each data point (or batch) is split across constraints. 
One way to do this is to use a  \emph{variable splitting} trick which creates duplicates of the variables $x_i \in \R^d$ for $i=1,\ldots, n$ as follows
 \begin{align}
 \min_{w, x_1,\ldots, x_n}\frac{1}{n}&\sum_{i=1}^n f_i(x_i) \\
s.t. \quad w &=  x_i, \quad \mbox{for }i=1,\ldots, n.  \label{eq:varsplittau1}
 \end{align}
By creating a copy of the variables for each $f_i$, and thus for each data point, we can use a coordinate descent method or ADMM to arrive at an incremental method.  This approach is well  suited for the distributed setting~\cite{BoydPCPE11} where each $x_i$ could be stored on an $i$th distributed compute node.

Here we take a different approach and use a  function splitting trick, which  introduces $n$ slack variables $s_i \in \R$ for $i=1,\ldots n,$ and splits the loss function across multiple rows\footnote{Row here refers to the viewpoint that the data is represented as a matrix of shape $n_{\text{samples}}\times n_{\text{features}}$.} as follows
\begin{align}
\min_{w\in \R^d, s \in \R^{n}}  \frac{1}{n}\sum_{i=1}^n s_i& \; \nonumber \\
%  s.t\quad \frac{1}{n}\sum_{i=1}^n \alpha_i & \; = \;  \tau. \nonumber \\
s.t.\quad  f_i(w) & \;\leq \; s_i, \label{eq:functionsplit_vanilla}
\end{align}
where each
$s_i$ is the \emph{target loss} for the $i$th data point.   The solution to~\eqref{eq:functionsplit_vanilla} is equivalent to that of~\eqref{eq:main},  since at optimality the inequality constraints must be satisfied with equality.
By splitting the function, we have also split the \emph{data} across rows since each $f_i(w)$ depends on a separate $i$-th data point (or batch). This simple fact 
allows for the design of incremental methods for solving~\eqref{eq:functionsplit_vanilla} based on subsampling.  Furthermore,  if the $f_i$ functions are convex,  then~\eqref{eq:functionsplit_vanilla} is a convex program.

 \subsection{Background}

This work follows a line of work on the Stochastic Polyak stepsize which was re-ignited with~\cite{SPS,ALI-G,SGDstruct}. 
Earlier work on the Polyak step size in the deterministic setting started with Polyak himself~\cite{polyak1987introduction}. In~\cite{Hazanpolyak} the authors also developed a method for learning the optimal loss value on the fly for the deterministic Polyak step size method.

Recent work on the stochastic Polyak step size include~\cite{MOTAPS},  which shows how to make use of the optimal total loss value $f(w^*)$ within the stochastic setting.  In~\cite{MOTAPS} that authors also developed their method through a projection viewpoint and as an online SGD method, both of which we leverage here.  

Furthermore~ variants of the stochastic Polyak step size are connected to model based methods~\cite{ALI-G,Chada-accel-model-2021,asi2019importance} and to bundle methods~\cite{Borat}. We also develop a model based viewpoint of our \FUVAL{} method, and use the theory developed in~\cite{Davis2019} to analyse our method in the non-smooth setting.

Our approach is also closely related to~\cite{slackpolyak2022}, where the authors solve the \emph{approximate interpolation} equations
\begin{align}\label{eq:minmax}
    \min s \quad \mbox{subject to } f_i(w) \leq s, \quad \mbox{for }i=1,\ldots, n.
\end{align}
This objective~\eqref{eq:minmax} is apparently similar to ours~\eqref{eq:functionsplit_vanilla}, but the fundamental difference is that our objective is a reformulation of~\eqref{eq:main} whereas \eqref{eq:minmax} is only approximately equivalent to solving~\eqref{eq:main} under a so called $\epsilon$--approximate interpolation condition.

Other techniques for developing an adaptive stepsize include a stochastic line search~\cite{vaswani2019painless}, using local smoothness estimates to create an adaptive scheduling~\cite{pmlr-v119-malitsky20a},  coin tossing techniques~\cite{orabona2019modern},  and variants of AdaGrad~\cite{ADAGRAD},  which arguably include  the notorious Adam method~\cite{ADAM}. But here we will not discus these approaches, and consider them orthogonal techniques.  
%Although we do compare our methods to Adam in the experimental section.

\section{Convergence knowing  the $f_i(w^*)$'s}

Before moving on to analysing our new method, we first analyse a variant of the \texttt{SPS} method that requires knowing the $f_i(x^*)$'s.  Let the  \texttt{SPS}$_+$ method be given by
\begin{equation}\label{eq:spspos}
w^{t+1} \; = \; w^t - \frac{(f_{i}(w^t) - f_{i}(w^*))_+}{\norm{\nabla f_{i}(w^t)}^2} \nabla f_{i}(w^t),
\end{equation}
where $i\in[n]$ is sampled uniformly at random and we denote $a_+ = \max \{a,0 \}.$
The only difference between  \texttt{SPS}$_+$ and \texttt{SPS} in~\eqref{eq:sps} is that we have taken the positive part of $f_{i}(w^t) - f_{i}(w^*)$. 
This   \texttt{SPS}$_+$ variant can be motivated using an upper bound derived from star convexity, or as a particular projection method,  as we show next.

 \subsection{Star convex viewpoint}

Consider the iterates of SGD given by
\[w^{t+1} =w^t - \gamma_t \nabla f_{i}(w^t), \]
where $\gamma_t>0$ are positive learning rates that we need to choose.
We will now choose $\gamma_t$ that gives the best one step progress towards the solution for star-convex functions.
\begin{assumption}[Star-convex functions] Let $f_i$ be such that for all $w\in \R^d$ and all $w^* \in \mathcal{W}^*$
\begin{equation}\label{eq:stari}
f_i(w^*) \geq f_i(w) + \dotprod{\nabla f_{i}(w), w^*-w }, \quad \mbox{for }i=1,\ldots, n.
\end{equation}
\end{assumption}

That is,  expanding the squares we have that
\begin{align}\label{eq:iterateexpsgd}
\norm{w^{t+1} -w^*}^2  = \norm{w^t -w^*}^2 - 2\dotprod{\gamma_t  \nabla f_i(w^t), w^t-w^* } + \norm{\gamma_t  \nabla f_i(w^t)}^2.
\end{align}
Using star-convexity~\eqref{eq:stari} and that $\gamma_t >0$,  we have that
\begin{align}\label{eq:polyakmot}
\norm{w^{t+1} -w^*}^2  & \overset{\eqref{eq:stari}}{\leq} \norm{w^t -w^*}^2 - 2\gamma_t(f_i(w^t) -f_i(w^*)) + (\gamma_t)^2 \norm{  \nabla f_i(w^t)}^2.
 \end{align}
 We now determine $\gamma_t$ by minimizing the right hand side of the above.
 \begin{lemma}
 The step size $\gamma\geq 0$ that minimizes the right hand sides of~\eqref{eq:polyakmot} is given by 
 \begin{eqnarray}\label{eq:SPSplusstep}
  \gamma_t = \frac{(f_i(w^t) -f_i(w^*))_+}{\norm{  \nabla f_i(w^t)}^2}.
 \end{eqnarray}
 \end{lemma}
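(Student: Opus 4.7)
The plan is to view the right hand side of~\eqref{eq:polyakmot} as a one-variable quadratic in $\gamma \geq 0$ and carry out the constrained minimization explicitly. Write
\begin{equation*}
g(\gamma) \eqdef \norm{w^t - w^*}^2 - 2\gamma (f_i(w^t) - f_i(w^*)) + \gamma^2 \norm{\nabla f_i(w^t)}^2.
\end{equation*}
Since the coefficient $\norm{\nabla f_i(w^t)}^2$ of $\gamma^2$ is nonnegative, $g$ is a convex quadratic (assume $\nabla f_i(w^t) \neq 0$; otherwise the bound is trivial and any $\gamma_t \geq 0$ works, and the formula collapses to the convention $0/0 = 0$).

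Setting $g'(\gamma) = 0$ gives the unconstrained minimizer
\begin{equation*}
\gamma^\star = \frac{f_i(w^t) - f_i(w^*)}{\norm{\nabla f_i(w^t)}^2}.
\end{equation*}
The proof then splits into two cases depending on the sign of $f_i(w^t) - f_i(w^*)$. If $f_i(w^t) \geq f_i(w^*)$, then $\gamma^\star \geq 0$ is feasible and optimal, so $\gamma_t = \gamma^\star = (f_i(w^t) - f_i(w^*))_+ / \norm{\nabla f_i(w^t)}^2$. Otherwise $\gamma^\star < 0$, and since $g$ is convex and increasing on $[\gamma^\star, \infty) \supseteq [0, \infty)$, the minimizer on $[0, \infty)$ is the boundary point $\gamma_t = 0$, which again agrees with $(f_i(w^t) - f_i(w^*))_+/\norm{\nabla f_i(w^t)}^2$ since the positive part vanishes.

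There is no real obstacle here: the only subtlety is handling the two sign regimes uniformly via the positive-part notation, and noting that the constraint $\gamma \geq 0$ imposed at the start of the derivation (needed so that star-convexity gives the inequality direction in~\eqref{eq:polyakmot}) is exactly what forces the projection onto $[0, \infty)$ and hence the appearance of $(\,\cdot\,)_+$.
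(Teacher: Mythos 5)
Your proof is correct and takes the same approach as the paper: view the bound as a convex quadratic in $\gamma$, compute the unconstrained minimizer $\gamma^\star = (f_i(w^t)-f_i(w^*))/\norm{\nabla f_i(w^t)}^2$, and handle the sign of the numerator by projecting onto $[0,\infty)$, which is exactly what the positive-part notation captures. The only cosmetic difference is that you justify the $\gamma^\star<0$ case via monotonicity of the convex quadratic on $[\gamma^\star,\infty)$, whereas the paper phrases it as $\gamma=0$ being the KKT point with active constraint; these are the same observation.
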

 \begin{proof}
To solve
 \begin{equation}
    \gamma_t \;=\; \underset{\gamma \geq 0}{\argmin} \; q(\lambda) \;:=\; -2\gamma(f_i(w^t) -f_i(w^*)) + \gamma^2 \norm{  \nabla f_i(w^t)}^2,
 \end{equation}
 we first take the derivative in $\gamma$ and compute the solution \emph{without} the positivity constraint, which gives
 \begin{eqnarray}\label{eq:lohoz9hz4s}
 \hat \gamma = \frac{f_i(w^t) -f(w^*)}{\norm{  \nabla f_i(w^t)}^2}.
 \end{eqnarray}
%  This is a valid step size if $f_i(w^t) -f(w^*) \geq 0.$ 
  Since this is the unconstrained solution, we have that $q(\hat \gamma) \leq \min_{\gamma \geq 0} q(\gamma).$ Thus it is the solution so long as it does not violate that positivity constraint, that is so long as $f_i(w^t) -f(w^*) \geq 0.$ Alternatively, the other candidate solution is given by $\gamma=0$  which is a KKT point with active constraint. Putting these two alternatives together, we have that the solution is given by~\eqref{eq:SPSplusstep}.
 
%  When~\eqref{eq:lohoz9hz4s} is a valid step size, we need to compare this candidate solution $ \gamma' $ to $\gamma=0$, which is the stationary point when the constraint is active. To compare first note that $q(0) =0$, and 
% plugging~\eqref{eq:lohoz9hz4s}  this back into $q(\gamma)$ we have that
% \begin{align}
%     q(\gamma') &= -2\frac{(f_i(w^t) -f_i(w^*))^2}{\norm{  \nabla f_i(w^t)}^2} + \frac{(f_i(w^t) -f_i(w^*))^2}{\norm{  \nabla f_i(w^t)}^2} % \nonumber \\
%     &= -\frac{(f_i(w^t) -f_i(w^*))^2}{\norm{  \nabla f_i(w^t)}^2}.\label{eq:o8hxrjx89r}
% \end{align}
% That is, $  q(\gamma') \leq 0 = q(\lambda)$. Thus $\gamma'$ is always the best step size so long as $f_i(w^t) -f_i(w^*) >0.$ Otherwise the solution is given by $\gamma =0$. 
 \end{proof}
 
This derivation of \texttt{SPS}$_+$ is almost identical to the derivation of SPS given in~\cite{polyak1987introduction,SPS,ALI-G}. The only difference is that we have explicitly used the positive constraint $\gamma\geq 0$.

 \subsection{Projection viewpoint}
 
We can also derive \texttt{SPS}$_+$ as a projection method for solving nonlinear \emph{inequalites}. The content of this subsection is not particularly novel (cf.\ \cite{SGDstruct}) but we repeat it here in order to motivate our method. 
Indeed,  first note that we can re-write our empirical risk problem~\eqref{eq:main} as the following system of nonlinear inequalities
\begin{equation}\label{eq:inequalites}
\mbox{Find $w\in \R^d$ such that: } \qquad f_i(w) \; \leq \; f_i(w^*), \quad \mbox{for }i=1,\ldots, n.
\end{equation}
To see the equivalence between~\eqref{eq:inequalites} and~\eqref{eq:main} first note that any solution $w$ to~\eqref{eq:inequalites} must be one where all the constraints are saturated such that $f_i(w) = f_i(w^*)$ for  $i=1,\ldots, n.$ Otherwise if a single constraint was not saturated with say $f_1(w)< f_1(w^*)$ then we would have $\frac{1}{n} \sum_{i=1}^n f_i(w) < f(w^*),$ which is not possible by definition of $w^*.$

We can now focus on solving~\eqref{eq:inequalites},  for which we devise an iterative projection method.  At each iteration,  we first sample 
 $j \in \{1, \ldots, n\}$ i.i.d and the corresponding $j$th constraint $f_j(w) \leq f_j(w^*).$ We then try to take one step towards satisfying this constraint.  Since this is still a potentially difficult nonlinear constraint,  we linearize this constraint, and then project our previous iterate onto this linearization, that is 
\begin{align}\label{eq:projsps}
\begin{split}
w^{t+1} = & \underset{w \in \R^d }{\rm{argmin}} \norm{w-w^t}^2 \\
& \mbox{ subject to } f_j(w^t) + \dotprod{\nabla f_j(w^t), w-w^t} \; \leq\; f_j(w^*). 
\end{split}
\end{align} 
 The solution to~\eqref{eq:projsps} is given by the \texttt{SPS}$_+$ update~\eqref{eq:spspos},  which follows by applying Lemma~\ref{lem:L2ineqconst}.
 
 This viewpoint is closely related to the viewpoint of SPS as a Newton--Raphson method~\cite{ALI-G,MOTAPS}, where $\leq$ in \eqref{eq:projsps} is replaced by an equality constraint.  Through this viewpoint,  \texttt{SPS}$_+$ can also been seen as an extension of Motzkin's method~\cite{Motzkin} for linear feasibility.

\subsection{Convergence analysis}
Next we analyse \texttt{SPS}$_+$ and show that it achieves the best possible rate for any adaptive SGD method for Lipschitz and convex functions.
Since the proof does not require that the objective function be a finite sum,  we give the statement and proof for minimizing a general expectation.

\begin{restatable}[Convergence of \texttt{SPS}$_+$]{theorem}{theosps}
\label{theo:sps}
Consider the problem of solving
$$w^{*} \in \underset{w\in\R^d}{\argmin} \;\;\EE{x\sim \cD }{f_{x}(w)}$$ where $x$ is sampled data and $\cD$ is a an unknown data distribution. 
Let $w^t$ be the iterates of \texttt{SPS}$_{+}$ where
\begin{equation}
w^{t+1} \; = \; w^t - \frac{(f_{x}(w^t) - f_{x}(w^*))_+}{\norm{\nabla f_{x}(w^t)}^2} \nabla f_{x}(w^t),
\end{equation}
where $x\sim \cD$ is sampled i.i.d at each iteration. 
 If  $f_x$ is star-convex around $w^*$ then the iterates are Fej\'er monotonic
\begin{eqnarray}
    \norm{w^{t+1} -w^*}^2 \; \leq \; \norm{w^t -w^*}^2 -  \frac{(f_x(w^t) - f_x(w^*))_+^2}{\norm{\nabla f_x(w^t)}^2} .
\end{eqnarray}

 Furthermore, we have that
 \begin{enumerate}
     \item If $f_x$ is $G$--Lipschitz then
       \begin{align}\label{eq:SPSposconvlip}
       \min_{t=1, \ldots,T} \E{f(w^t) - f(w^*)}& \leq \frac{G}{\sqrt{T}}\norm{w^0 -w^*}.
 \end{align}
 \item If $f_x$ is $L$--smooth  and the interpolation condition given by
 \begin{eqnarray} \label{eq:interpolation}
 f_x(w^*) &=& \inf_w f_x(w),  \quad \mbox{for every }x \in \mbox{supp}(\cD)
 \end{eqnarray}
 holds then
   \begin{align}\label{eq:SPSposconvsmooth}
      \min_{t=0, \ldots, T-1}\E{ f(w^t) - f(w^*)}
 & \leq \frac{2L}{T}\norm{w^0 -w^*}^2.
 \end{align}
    
 \end{enumerate}
\end{restatable}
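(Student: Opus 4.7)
The proof splits into three parts. \emph{Part 1 (Fej\'er monotonicity).} I would expand $\norm{w^{t+1}-w^*}^2$ using the identity~\eqref{eq:iterateexpsgd} with $\gamma_t$ given by~\eqref{eq:SPSplusstep}. If $f_x(w^t)\leq f_x(w^*)$, then $\gamma_t=0$, the iterate is unchanged, and the subtracted term in the claimed bound also vanishes, so the inequality holds with equality. Otherwise $\gamma_t=(f_x(w^t)-f_x(w^*))/\norm{\nabla f_x(w^t)}^2$; applying star-convexity to the cross term and substituting this $\gamma_t$ reproduces exactly the stated inequality. This is essentially the computation already carried out in deriving~\eqref{eq:polyakmot}, now read as a one-step descent identity rather than an upper bound to minimize over $\gamma$.

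\emph{Part 2 (Lipschitz case).} Taking total expectation in the Fej\'er inequality, lower-bounding $\norm{\nabla f_x(w^t)}^2$ by $G^2$, and telescoping over $t=0,\ldots,T-1$ gives
\[ \sum_{t=0}^{T-1}\E{(f_x(w^t)-f_x(w^*))_+^2}\;\leq\; G^2\norm{w^0-w^*}^2. \]
The only non-routine step is to pass from the squared individual loss gap to $(\E{f(w^t)-f(w^*)})^2$. I would do this with two Jensen applications: since $(a)_+\geq a$, the conditional expectation $\EE{x}{(f_x(w^t)-f_x(w^*))_+\mid w^t}$ is at least $f(w^t)-f(w^*)\geq 0$; a conditional Jensen on $(\cdot)^2$ then gives $\EE{x}{(f_x(w^t)-f_x(w^*))_+^2\mid w^t}\geq (f(w^t)-f(w^*))^2$, and a further unconditional Jensen yields $\sum_t(\E{f(w^t)-f(w^*)})^2\leq G^2\norm{w^0-w^*}^2$. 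The standard averaging bound $(\min_t a_t)^2\leq \tfrac{1}{T}\sum_t a_t^2$ then produces the $G/\sqrt{T}$ rate.

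\emph{Part 3 (Smooth interpolation case).} The key input is the standard smoothness inequality $\norm{\nabla f_x(w)}^2\leq 2L(f_x(w)-\inf f_x)$, valid for any $L$-smooth function bounded below (one gradient step with stepsize $1/L$ proves it). Interpolation~\eqref{eq:interpolation} gives $\inf f_x=f_x(w^*)$, so $f_x(w^t)-f_x(w^*)\geq 0$ and the positive part is redundant. Cancelling one factor in the squared numerator of the Fej\'er inequality yields
\[ \frac{(f_x(w^t)-f_x(w^*))_+^2}{\norm{\nabla f_x(w^t)}^2}\;\geq\; \frac{f_x(w^t)-f_x(w^*)}{2L}. \]
Taking expectation turns the right-hand side into $(f(w^t)-f(w^*))/(2L)$; telescoping over $t=0,\ldots,T-1$ and taking the minimum delivers the $2L/T$ bound. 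I expect no step to be a serious obstacle: the only mildly delicate point is the two-Jensen chain in Part 2, which ensures the telescoped sum controls $(\E{f(w^t)-f(w^*)})^2$ so that the averaging trick gives a $1/\sqrt{T}$ rate rather than something weaker.
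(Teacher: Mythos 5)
Your proposal is correct and follows essentially the same route as the paper: expand squares, apply star-convexity, case-split on the sign of $f_x(w^t)-f_x(w^*)$ to get the Fej\'er descent, telescope, and then convert via Jensen-type inequalities to a bound on $\min_t\E{f(w^t)-f(w^*)}$.

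One small point worth noting: in Part 3 your argument is actually a touch cleaner than the paper's. You cancel one power of $f_x(w^t)-f_x(w^*)$ against the smoothness bound $\|\nabla f_x(w^t)\|^2\le 2L\bigl(f_x(w^t)-\inf f_x\bigr)$ \emph{before} taking expectation, so the per-term lower bound $\frac{(f_x(w^t)-f_x(w^*))_+^2}{\|\nabla f_x(w^t)\|^2}\ge\frac{f_x(w^t)-f_x(w^*)}{2L}$ is linear in the loss gap and taking expectation is immediate. The paper instead keeps the quadratic numerator under the expectation and uses a ratio-of-expectations/Cauchy--Schwarz step of the form $(\E A)^2/\E B\le\E[A^2/B]$, which lands in the same place but is less transparent. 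In Part 2 your chain (monotonicity of $(\cdot)_+$, then conditional Jensen on $y\mapsto y^2$, then unconditional Jensen) is a rearrangement of the paper's (Jensen on the convex function $y\mapsto y_+^2$, drop the positive part, then Jensen on $y\mapsto y^2$); both are fine and give the same $G/\sqrt{T}$ rate.
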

\begin{proof}
Expanding the squares and using star-convexity we have that
\[
\norm{w^{t+1} -w^*}^2  \; \overset{\eqref{eq:stari}}{\leq} \;\norm{w^t -w^*}^2 - 2\gamma_t(f_x(w^t) -f_x(w^*)) + (\gamma_t)^2 \norm{  \nabla f_x(w^t)}^2,\]
where $\gamma_t =  \frac{(f_{x}(w^t) - f_{x}(w^*))_+}{\norm{\nabla f_{x}(w^t)}^2} $ is the \texttt{SPS}$_{+}$ step size.
Substituting in $\gamma^t $ gives
\begin{align}\label{eq:polyakmot2}
\norm{w^{t+1} -w^*}^2  & \leq \norm{w^t -w^*}^2 - 2\frac{(f_x(w^t) - f_x(w^*))_+}{\norm{\nabla f_x(w^t)}^2} (f_x(w^t) -f(w^*)) + \frac{(f_x(w^t) - f_x(w^*))_+^2}{\norm{\nabla f_x(w^t)}^2}.
%\\
%&= \norm{w^t -w^*}^2 -\big(2(f_x(w^t) -f_x(w^*)) -(f_x(w^t) -f(w^*))_+\big)  \frac{(f_x(w^t) - f_x(w^*))_+}{\norm{\nabla f_x(w^t)}^2}.
 \end{align}
If $f_x(w^t) - f_x(w^*) >0 $ then  $(f_x(w^t) - f_x(w^*))_+ = f_x(w^t) - f_x(w^*)$,  thus re-arranging
   \begin{align}\label{eq:polyakmot22}
\frac{(f_x(w^t) - f_x(w^*))_+^2}{\norm{\nabla f_x(w^t)}^2}  & \leq \norm{w^t -w^*}^2 -\norm{w^{t+1} -w^*}^2.
 \end{align}
 Alternatively,  $f_x(w^t) - f_x(w^*) \leq 0 $ then $(f_x(w^t ) - f_x(w^*))_+ =0$ and consequently
   \begin{align}\label{eq:polyakmot23}
\frac{(f_x(w^t) - f_x(w^*))_+^2}{\norm{\nabla f_x(w^t)}^2}  =  0 & \leq \norm{w^t -w^*}^2 - \norm{w^{t+1} -w^*}^2 .
 \end{align}
Thus in either case we have that
\begin{eqnarray}
    \norm{w^{t+1} -w^*}^2 \; \leq \; \norm{w^t -w^*}^2 -\frac{(f_x(w^t) - f_x(w^*))_+^2}{\norm{\nabla f_x(w^t)}^2}.
\end{eqnarray}
Dividing by $T$ and summing up for $t=0, \ldots, T-1$ on both sides and using telescopic cancellation gives
  \begin{align}\label{eq:polyakmot22}
\frac{1}{T}\sum_{t=0}^{T-1}\frac{(f_x(w^t) - f_x(w^*))_+^2}{\norm{\nabla f_x(w^t)}^2}  & \leq \frac{1}{T}\norm{w^0 -w^*}^2.
 \end{align}
 
 Now we consider one of the following assumptions.
 \begin{enumerate}
     \item 
If $f_i$ is $G$--Lipschitz, then $\norm{\nabla f_x(w)}^2 \leq G^2$ and consequently
  \begin{align}\label{eq:polyakmot224}
\frac{1}{T}\sum_{t=0}^{T-1}\frac{(f_x(w^t) - f_x(w^*))_+^2}{G^2}  & \leq \frac{1}{T}\sum_{t=0}^{T-1}\frac{(f_x(w^t) - f_x(w^*))_+^2}{\norm{\nabla f_x(w^t)}^2}  \;  \leq \frac{1}{T}\norm{w^0 -w^*}^2.
 \end{align}
 Multiplying through by $G^2$ and taking expectation gives
   \begin{align}\label{eq:polyakmot2254}
\frac{1}{T}\sum_{t=0}^{T-1}\E{(f_x(w^t) - f_x(w^*))_+^2}  & \leq  \frac{G^2}{T}\norm{w^0 -w^*}^2.
 \end{align}
 Using Jensen's, and that $x \mapsto x_+^2$ is a convex function we have that
 $ \mathbb{E}[f(x)] \geq f(\mathbb{E}[X])$
 \[(f(w^t) - f(w^*))_+^2 \; = \; (\EE{t}{f_x(w^t) - f_x(w^*)})_+^2  \; \leq \;\EE{t}{(f_x(w^t) - f_x(w^*))_+^2} \]
 This combined with~\eqref{eq:polyakmot2254} gives
    \begin{align}\label{eq:polyakmot22544}
\min_{t=1, \ldots,T} \E{(f(w^t) - f(w^*))_+^2} \; \leq  \frac{1}{T}\sum_{t=0}^{T-1}\E{(f(w^t) - f(w^*))_+^2}  & \leq  \frac{G^2}{T}\norm{w^0 -w^*}^2.
 \end{align}
 We now can drop the positive part since $f(w^t) \leq f(w^*)$ by definition. 

 Again, using the convexity of $x \mapsto x^2$ and Jensen's, we have that 
     \begin{align}\label{eq:polyakmot225445}
\min_{t=1, \ldots,T} \E{f(w^t) - f(w^*)}^2 \; \leq  \min_{t=1, \ldots,T} \E{(f(w^t) - f(w^*))^2} & \leq  \frac{G^2}{T}\norm{w^0 -w^*}^2.
 \end{align}

 Taking the square root and using that
 \[ \sqrt{\min_{t=1, \ldots,T} \E{f(w^t) - f(w^*)}^2} = \min_{t=1, \ldots,T} \sqrt{\E{f(w^t) - f(w^*)}^2}= \min_{t=1, \ldots,T} \E{f(w^t) - f(w^*)}\]
 gives the result.
 \item 
 This result almost follows from by the Theorem 4.4 in~\cite{SGDstruct}, but there is a subtle additional assumption on positivity of the loss, and Theorem 3.4~\cite{SPS} where the method is slightly different. So we provide the proof here for completion.
 
If $f_x$ is convex and $L$--smooth, then 
 by~Lemma~\ref{lem:convsmoothinter} we have that
\begin{equation}
 (f_x(w) - \inf f_x)_+ \geq   f_x(w) - \inf f_x \geq \frac{1}{2L} \norm{\nabla f_x (w)}^2 \enspace.
\end{equation}
Using the above in~\eqref{eq:polyakmot22} gives
  \begin{align}\label{eq:tempnioznioen}
      \min_{t=0, \ldots, T-1}\frac{\E{ (f_x(w^t) - f_x(w^*))_+}^2}{\E{f(w^t) -\inf f_x} }  \; \leq \;
\min_{t=0, \ldots, T-1}\E{ \frac{(f_x(w^t) - f_x(w^*))_+^2}{f_x(w^t) -\inf f_x }} & \leq \frac{2L}{T}\norm{w^0 -w^*}^2.
 \end{align}
 Finally using the interpolation condition~\eqref{eq:interpolation} gives the result.

% From~\eqref{eq:polyakmot23} we have that $\delta_t \eqdef \norm{w^t -w^*}^2$ is monotonically decreasing. Consequently, using the convexity of $f_x(w)$ we have that
% \begin{align}
%     f_x(w^t) -f_x(w^*) &\leq \dotprod{\nabla f_x(w^t), w^t -w^*} \; \leq \norm{\nabla f_x(w^t)} \norm{w^t -w^*} \nonumber \\
%     & \leq \norm{\nabla f_x(w^t)} \norm{w^0 -w^*}.
% \end{align}

% Since $f_x(w)$ is smooth we have that
% \begin{align}
%     f_x(w^{t+1}) \leq f_x(w^t) -\gamma \frac{(f_x(w^t) -f_x(w^*))_+}{\norm{  \nabla f_x(w^t)}^2}+
%     \frac{L_{\max}\gamma^2}{2} \frac{(f_x(w^t) -f_x(w^*))_+^2}{\norm{  \nabla f_i(w^t)}^2} 
% \end{align}
 \end{enumerate}

\end{proof}
The result~\eqref{eq:SPSposconvlip} for Lipschitz functions in the deterministic setting was already known,  see for instance Theorem 8.17 in~\cite{Beck2017}.  This same result has also been proven in Theorem C.1 in~\cite{SPS},  but under the additional assumption that interpolation holds. Under interpolation,  we have that $f_i(w^*) = \inf f_i,$ and consequently one can  use of the step size $\frac{f_{i}(w^t) -\inf f_{i}}{\norm{\nabla f_{i}(w^t)}^2} =  \frac{f_{i}(w^t) - f_{i}(w^*)}{\norm{\nabla f_{i}(w^t)}^2}$ since it is positive. The novelty in our proof is that we take the positive part of the step size, and are thus able to prove the same result without the additional interpolation assumption.
%We are unaware of any reference that proves~\eqref{eq:SPSposconvlip} in a stochastic setting such as ours.  

The rate of convergence in the Lipschitz case~\eqref{eq:SPSposconvlip} is only off by $\frac{1}{2}$ as compared to optimal rate for any online stochastic gradient method which is $\frac{G \norm{w^0-w^*}}{2\sqrt{T}}$,  see Theorem 5.1 in~\cite{orabona2019modern}. Furthermore, 
to achieve the same rate in~\eqref{eq:SPSposconvlip} with SGD under the same assumptions,  one needs to know a radial distance $D >0$ such that $w^* \in \{ w \;: \; \norm{w-w^0} \leq D\} $.  With knowledge of this distance $D$, each step of SGD is then interlaced with a projection onto the ball  $ \{ w \;: \; \norm{w-w^0} \leq D\} $.   The \texttt{SPS}$_+$ method needs to know the values $f_i(w^*)$ (or $f_x(w^*)$ in the proper stochastic setting),  but not the diameter $D.$

% As for the smooth setting in~\eqref{eq:SPSposconvsmooth}, our rate is a factor of $2$ better than  the rate of convergence SGD under interpolation, see Theorem D.6. in~\cite{SGDstruct}.
% The only advantage is in what information the algorithms needs access to. 

As for the smooth setting under interpolation in~\eqref{eq:SPSposconvsmooth},  this matches the best rate of SGD for when we know the smoothness constant $L$.~\cite{vaswani2018fast,SGDstruct}.

Though Theorem~\ref{theo:sps} achieves the best rates in each setting,  estimating the $f_i(w^*)$ values can be difficult.
The main question of this paper is \emph{Can we learn the  $f_i(w^*)$ values on the fly?}.  This is what we attempt in the remaining of the paper.
% The answer we will give is,  not quite. 

%But we use Theorem~\ref{theo:sps} as a spring board into ...

On a bibliographic note,  the smooth convergence rate~\eqref{eq:SPSposconvsmooth}, was already given in Theorem 4.4 in~\cite{SGDstruct} which analyses \texttt{SPS}. We give the result again because we analyse \texttt{SPS}$_+$ instead, which required some very minor changes to the proof.

\section{\FUVAL{}: An adaptive Function Value Learning method}

We now let go of having to know the $f_i(x^*)$'s and use the function splitting re-formluation~\eqref{eq:functionsplit_vanilla} to develop our new method \FUVAL{}, stated in Algorithm \ref{alg:fuval}.  We give three different viewpoints of our method based on \emph{projections},  \emph{prox-linear} and the \emph{online SGD} methods. As such, we actually present three different methods that are very similar and united in Algorithm \ref{alg:fuval}. Each viewpoint will reveal a natural motivation for the choice of some of the hyperparameters in Algorithm \ref{alg:fuval}. While \FUVAL{} seemingly needs the four parameters $\delta_t,~\lambda_t,~c,~\gamma$, we will explain that a natural choice is $\gamma=1,~ c\in \{1, + \infty\}$ and that $\delta_t$ and $\lambda_t$ can be parameterized jointly using only one single hyperparameter (cf.\ Remark \ref{rem:scale}).

\begin{algorithm}
\caption{\FUVAL{}}
\label{alg:fuval}
\begin{algorithmic}[1]
\State {\bf Inputs:}  step sizes $\lambda_t >0,$ $\delta_t >0$, parameter $\gamma \in [0,1]$, penalty multiplier $c \geq 1$.
\State {\bf Initialize:} $w^0 \in\mathbb{R}^d$ and $s_i^0 \in \mathbb{R}$ for $i=1,\ldots, n.$
\For{$t =0,\ldots, T$} 
\State Sample $j_t$ randomly from $[n]$.
\State $\tau_t = \min\Big\{c, \frac{\big(f_{j_t}(w^t) - s_{j_t}^t + \delta_t\big)_+}{\delta + \lambda \|\nabla f_{j_t}(w^t)\|^2 } \Big\}$
\State $\displaystyle    w^{t+1} \;= w^t - \gamma \tau_t \lambda_t \nabla f_{j_t}(w^t) $
\State $\displaystyle   
    s^{t+1}_i \;= \begin{cases}
      s_i^t - \gamma \delta_t(\tau_t-1), \quad &\text{if } i = j_t, \\
      s_i^t, \quad &\text{else.}
    \end{cases}
$
\EndFor
\State {\bf Output:} $w^{T+1}, s^{T+1}$
\end{algorithmic}
\end{algorithm}

\subsection{Projection viewpoint}

By leveraging the equivalence between~\eqref{eq:functionsplit_vanilla} and our original sum-of-terms problem~\eqref{eq:main}, we can now solve ~\eqref{eq:functionsplit_vanilla} using an incremental projection method. This projection method is analogous to the projection viewpoint of \texttt{SPS}$_+$ given in~\eqref{eq:projsps}.

To develop an incremental method for solving~\eqref{eq:functionsplit_vanilla},    
 at each iteration we sample $j \in \{1, \ldots, n\}$ i.i.d and the $j$-th constraint.  We then linearize the constraint and project our current iterate $(w^t,s^t)$ onto this constraint as follows
\begin{align}\label{eq:funcvallearn}
\begin{split}
w^{t+1}, s^{t+1} = & \underset{w \in \R^d,  s \in \R^n}{\rm{argmin}}~ s_j + \frac{1}{2\lambda} \norm{w- w^t}^2+\frac{1}{2\delta } \Vert s- s^t \Vert^2 \\
&\mbox{ subject to }f_j(w^t)+\dotprod{\nabla f_j(w^t),w -w^t } \leq s_j,
\end{split}
\end{align}
where $\delta>0$ and $\lambda>0$ are parameters.  We need both of these two parameters to match up \emph{units} in the objective,  as we detail in the following remark.

\begin{remark}[Scale Invariance]\label{rem:scale}
Our goal is to have a method that is scale invariant,  in the sense that it should work equally well for minimizing $f(w)$ or any scaled version such as $100\cdot f(5\cdot w)$.  To achieve this, 
we need to choose $(\lambda,\delta)$ so that~\eqref{eq:funcvallearn} has the same units as $f_j$.
Thus $s_j$ must have the same units as $f$ due to the constraint.  Consequently, and informally, 
by choosing units$(\delta) = \mbox{units}(f)$ and units$(\lambda) = \mbox{units}(w)^2/\mbox{units}(f)$ we have that the units match the objective.   
For example,  let $f_i$ be $G_i$--Lipschitz and let $G_{\max} = \max_{i=1,\ldots, n} G_i.$
To match units,  we could set
\begin{eqnarray}
\lambda = \eta \, \frac{\norm{w^0}^2}{G_{\max} } \quad \mbox{and}\quad  \delta = \eta \, G_{\max}, 
\end{eqnarray}
where $\eta >0$ is our \emph{dimensionless} tunable parameter. If estimates of $G_{\max}$ are not available then
\begin{eqnarray}
\lambda = \eta \, \frac{\norm{w^0}^2}{f(w^0)} \quad \mbox{and}\quad  \delta =\eta \, f(w^0).
\end{eqnarray}
We use this insight to set default choices for $\delta$ and $\lambda$ later on in Section~\ref{sec:numerics}
\end{remark}
% Since $s_j$ is also in the objective,  the units of the objective must be that of $f.$ Consequently $\lambda$ must be proportional to $1/f$ and $\delta$ must be proportional to $f.$
The projection~\eqref{eq:funcvallearn} also has a convenient 
 solution.
\begin{restatable}[Projection Update]{lemma}{projupdate}
\label{lem:projupdate}
The solution to~\eqref{eq:funcvallearn} is given by
\begin{align}\label{eq:projupdate}
    \begin{split} 
  \tau_t &= \frac{(f_j(w^t) - s_j^t + \delta)_+}{\delta+\lambda \Vert \nabla f_j(w^t) \Vert^2},  \\
    w^{t+1} &= w^t - \lambda \tau_t \nabla f_j(w^t),   \\
    s_j^{t+1} &= s_j^t - \delta + \delta\tau_t,    \\
    s_i^{t+1} &= s_i^t \text{ for } i \neq j.
    \end{split}
\end{align}
\end{restatable}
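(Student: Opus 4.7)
The plan is to solve the constrained optimization problem in \eqref{eq:funcvallearn} using the KKT conditions. Since the objective is strictly convex (sum of a linear term in $s_j$ and two quadratic terms) and the constraint is affine, Slater's condition is trivially satisfied, so the KKT conditions are both necessary and sufficient for optimality, and the minimizer is unique.

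First I would form the Lagrangian
\[
L(w,s,\mu) \;=\; s_j + \frac{1}{2\lambda}\|w-w^t\|^2 + \frac{1}{2\delta}\|s-s^t\|^2 + \mu\bigl(f_j(w^t) + \langle \nabla f_j(w^t), w-w^t\rangle - s_j\bigr),
\]
with $\mu\geq 0$. Setting $\nabla_w L = 0$ immediately gives $w^{t+1} = w^t - \lambda\mu\,\nabla f_j(w^t)$, which matches the claimed form with $\tau_t = \mu$. Differentiating in $s_i$ for $i\neq j$ gives $s_i^{t+1} = s_i^t$, and differentiating in $s_j$ gives $s_j^{t+1} = s_j^t + \delta(\mu-1) = s_j^t - \delta + \delta\mu$, which matches the stated update. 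So everything reduces to identifying $\mu$.

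Next I would handle the two KKT cases. If the constraint is active, substituting the expressions for $w^{t+1}$ and $s_j^{t+1}$ into $f_j(w^t) + \langle \nabla f_j(w^t), w^{t+1}-w^t\rangle = s_j^{t+1}$ gives
\[
f_j(w^t) - \lambda\mu\|\nabla f_j(w^t)\|^2 \;=\; s_j^t - \delta + \delta\mu,
\]
and solving for $\mu$ yields
\[
\mu \;=\; \frac{f_j(w^t) - s_j^t + \delta}{\delta + \lambda\|\nabla f_j(w^t)\|^2}.
\]
This value is admissible ($\mu\geq 0$) precisely when the numerator is nonnegative. If instead $\mu=0$, then $w^{t+1}=w^t$ and $s_j^{t+1}=s_j^t-\delta$, and primal feasibility becomes $f_j(w^t) \leq s_j^t - \delta$, i.e., $f_j(w^t)-s_j^t+\delta\leq 0$. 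Combining both branches, $\mu$ in general equals the positive part of the expression above divided by $\delta + \lambda\|\nabla f_j(w^t)\|^2$, which is exactly $\tau_t$.

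I do not expect any serious obstacle: the constraint is affine (so qualification is automatic), the objective is strongly convex (so the minimizer exists and is unique), and the case analysis simply produces the $(\cdot)_+$ in the final formula. The only minor care point is verifying that the case $\mu=0$ and the case $\mu>0$ are mutually exclusive and cover all situations, which follows from complementary slackness and the sign of $f_j(w^t)-s_j^t+\delta$.
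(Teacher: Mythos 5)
Your proof is correct. You derive the solution directly from the KKT conditions: stationarity in $(w,s)$ gives the parametrized updates in terms of the multiplier $\mu$, and complementary slackness with the case analysis $\mu>0$ (active constraint) versus $\mu=0$ (inactive constraint) produces exactly the positive-part formula $\tau_t$. The paper proceeds differently: it completes the square to absorb the linear term $s_j$ into the quadratic, rewriting the objective as $\frac{1}{2\lambda}\|w-w^t\|^2 + \frac{1}{2\delta}(s_j-(s_j^t-\delta))^2 + \text{const}$ plus the separable terms in $s_i$ for $i\neq j$, and then recognizes the reduced problem as an instance of a pre-proved auxiliary result (Lemma~\ref{lem:slackL2ineqconst}) on Euclidean projection onto a halfspace with a slack variable, applied with the substitutions $\delta\leftarrow\lambda/\delta$ and $s^0\leftarrow s^t-\delta$. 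Both are sound; your derivation is self-contained and makes explicit where the $(\cdot)_+$ comes from via complementary slackness, while the paper's route is shorter on this particular lemma because it offloads the case analysis to a shared projection lemma that is reused elsewhere in the appendix.
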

Clearly, \eqref{eq:projupdate} is \FUVAL{} with $\lambda_t=\lambda$, $\delta_t=\delta$, $\gamma=1$, $c=\infty$.

%The above proof also provides another interpretation of the method in~\eqref{eq:funclearnprojgrad} which first takes a step to decease the slack variable through  $s_i \mapsto s_i -\delta$, then projects onto the linearized constraint. 

\subsection{Prox-Linear viewpoint}
%%%%%%%
%

Here we provide another viewpoint of our method as a variant of the prox-linear  method \cite{Davis2019, Drusvyatskiy2019}. This viewpoint is based on solving the  $\ell_1$-penalty reformulation of~\eqref{eq:functionsplit_vanilla} given by
%
%Instead of problem \eqref{eq:functionsplit_vanilla}, for given $c \geq 0$, consider the $\ell_1$-penalty version given by 
\begin{align}
    \label{prob:pos-part}
    \min_{w \in \mathbb{R}^d,s\in \mathbb{R}^n} g(w,s), \quad g(w,s) :=\frac{1}{n}\sum_{i=1}^n \Big(s_i + c\, (f_i(w) - s_i)_+ \Big) ,
\end{align}
where $c \geq 0$ is the penalty parameter. 
When $c\geq 1$, solving~\eqref{prob:pos-part}
is equivalent to solving~\eqref{eq:functionsplit_vanilla}.

\begin{restatable}[Equivalent Penalty Problem]{lemma}{equivpenalty}
 \label{lem:equivpenalty}
Let $f_i$ be convex for all $i\in[n]$. Let $(w^*,s^*) \in \mathbb{R}^{d+n}$ be a solution to \eqref{prob:pos-part}. Then necessarily $c \geq 1$ and $s_i^*\leq f_i(w^*)$ for all $i\in[n]$. Further, $w^*$ is a global minimum of $f$ and moreover $g(w^*,s^*) = f(w^*)$.
\end{restatable}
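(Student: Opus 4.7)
The plan is to exploit the separability of $g(w,\cdot)$ in $s$ and reduce to a sequence of one-dimensional inner minimizations. For fixed $w$ I would write $h(w) := \inf_{s \in \R^n} g(w,s) = \frac{1}{n}\sum_{i=1}^n \inf_{s_i \in \R} \phi_i(s_i)$ with $\phi_i(s_i) := s_i + c(f_i(w) - s_i)_+$, and then study the outer problem $\min_w h(w)$.

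The inner problem splits at the breakpoint $s_i = f_i(w)$: on $(-\infty, f_i(w)]$ we have $\phi_i(s_i) = (1-c)s_i + c f_i(w)$, and on $[f_i(w), \infty)$ we have $\phi_i(s_i) = s_i$. First I would observe that for $c < 1$ the left branch is strictly decreasing and unbounded below as $s_i \to -\infty$, so $\inf g = -\infty$; since a minimizer is assumed to exist, $c \geq 1$ is necessary. For $c \geq 1$, the left branch is nonincreasing and the right branch strictly increasing, so $\inf_{s_i} \phi_i(s_i) = f_i(w)$, attained exactly at $s_i = f_i(w)$ when $c > 1$ and on the whole half-line $(-\infty, f_i(w)]$ when $c = 1$. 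In either case any optimal $s_i^*$ satisfies $s_i^* \leq f_i(w^*)$.

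Plugging back in, the reduced outer objective is $h(w) = \frac{1}{n}\sum_i f_i(w) = f(w)$, so minimizing $g$ in $(w,s)$ is equivalent to minimizing $f$ in $w$, which gives that $w^*$ is a global minimizer of $f$. Finally, evaluating at $(w^*,s^*)$ and using $s_i^* \leq f_i(w^*)$ to unfreeze the positive part yields $g(w^*,s^*) = \frac{1}{n}\sum_i [(1-c)s_i^* + c f_i(w^*)]$, which collapses to $f(w^*)$ either because $c = 1$ (so the coefficient of $s_i^*$ vanishes) or because $c > 1$ forces $s_i^* = f_i(w^*)$ at the optimum.

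I do not anticipate a real obstacle here; the whole argument is a one-dimensional case analysis and in fact does not use convexity of the $f_i$. The convexity hypothesis in the statement is presumably a carryover from the surrounding analysis, or is intended to guarantee existence of $(w^*,s^*)$, but it is not needed for the equivalence itself.
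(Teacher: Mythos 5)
Your proof is correct, and it takes a genuinely different route from the paper. The paper derives the necessary first-order optimality conditions for the (convex) penalty problem using subdifferential calculus — applying the chain rule for the pointwise maximum $(\cdot)_+$, doing a case distinction on the sign of $f_i(w)-s_i$, and then invoking convexity at the end to turn $0\in\partial f(w^*)$ into global optimality. You instead exploit separability in $s$: for fixed $w$ the inner minimization over each $s_i$ is a one-dimensional piecewise-linear problem whose explicit solution gives $\inf_{s_i}\phi_i(s_i) = f_i(w)$ when $c\geq 1$ (with $\inf = -\infty$ when $c<1$), so the reduced problem is exactly $\min_w f(w)$ and all four conclusions drop out by comparing $f(w^*)\leq g(w^*,s^*)\leq f(w)$ for all $w$. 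Your approach buys two things: it is more elementary (no subdifferential machinery), and — as you correctly observe — it establishes the statement without using convexity of the $f_i$ at all, since the argument is a pure ``$\inf$-over-$(w,s)$ equals $\inf$-over-$w$-of-$\inf$-over-$s$'' exchange. The paper's convexity hypothesis is genuinely needed only for its own proof technique (so that stationarity implies global optimality), not for the result. One minor stylistic streamlining available to you: once you know $s^*$ attains the inner infimum at $w^*$, you get $g(w^*,s^*)=\inf_s g(w^*,s)=f(w^*)$ immediately, without the closing case split on $c=1$ versus $c>1$.
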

\begin{remark}
    The fact that a solution to \eqref{eq:functionsplit_vanilla} is a solution to \eqref{prob:pos-part} if $c\geq 1$ can be seen through the connection to exact penalty functions \cite[Section 17.2]{Nocedal2006}. In fact, $g(w,s)$ is the $\ell_1$-penalty function of problem \eqref{eq:functionsplit_vanilla}.
\end{remark}
Because of the above equivalence between~\eqref{eq:functionsplit_vanilla} and ~\eqref{prob:pos-part},  we focus on solving the penalty problem~\eqref{prob:pos-part}.   One way to minimize~\eqref{prob:pos-part} would be to use SGD (stochastic subgradient descent).  Let $g_i(w,s) := s_i + c (f_i(w) - s_i)_+$.  Thus~\eqref{prob:pos-part} is equivalent to minimizing $\frac{1}{n} \sum_{i=1}^n g_i(w,s).$ To abbreviate let $u = (w,s).$
At each iteration SGD samples a data point $j\in \{1,\ldots, n\}$ and from a given $u^t = (w^t,s^t)$ updates the parameters according to
\begin{equation}\label{eq:sgdpenalty}
u^{t+1} \; =\; \underset{{u= (w,s)\in \R^{d+n}}}{\argmin}  g_j(u^t) +\dotprod{ v, u-u^t} +  \frac{1}{2\lambda_t} \norm{u-u^t}^2 ~~ \text{for} ~~ v\in \partial g_j(u^t),
\end{equation}
where $\lambda_t>0$ are the learning rates. Here, $\partial g_j$ is a suitable subdifferential (e.g.\ the convex subdifferential if $f_j$ is convex).  The closed form solution to~\eqref{eq:sgdpenalty} is the well known SGD update.  The issue with~\eqref{eq:sgdpenalty} is that it approximates $g_j(u)$ by its local linearization,  that is
\[  g_i(u) \approx g_i(u^t) + \dotprod{v, u-u^t}, \quad v\in \partial g_i(u^t).\]

We can build a more accurate approximation, or \emph{model}, of $g_i(u) $ by exploiting the positive term $c (f_i(w) - s_i)_+$.  Indeed, a more accurate approximation of $g_i(u) $ is given by
\begin{equation} \label{eq:modelprox}
 g_i(w,s) \approx  s_{i} + c\Big(f_{i}(w^t)+ \langle \nabla f_{i}(w^t), w-w^t \rangle - s_{i}\Big)_+,
\end{equation} 
where we linearized the term within the positive part,  as opposed to linearizing $ g_i(w,s) $ as was done in the SGD method.  Using the better approximation~\eqref{eq:modelprox} together with a proximal update,  gives the following update 
\begin{align}\label{eqn:two-scale-update}
    w^{t+1}, s^{t+1} = \argmin_{w, s} s_{j_t} + c\Big(f_{j_t}(w^t)+ \langle \nabla f_{j_t}(w^t), w-w^t \rangle - s_{j_t}\Big)_+ + \frac{1}{2\lambda_t}\|w-w^t\|^2 + \frac{1}{2\delta_t}\|s-s^t\|^2,
\end{align}
where $\lambda_t>0$ and $\delta_t>0$ are tunable parameters.  This update~\eqref{eqn:two-scale-update} is a variant of the prox-linear method as we detail in Section~\ref{sec:model-prox}.  
 Indeed~\eqref{eqn:two-scale-update} can be seen as a proximal method where the proximal operator is computed with respect to the metric induced by the diagonal matrix
\begin{equation}\label{D:metric}
    \mD := 
    \begin{pmatrix}
    \frac{1}{\lambda} \mI_d & \zeros_{d,n} \\
    \zeros_{n,d} & \frac{1}{\delta} \mI_n
    \end{pmatrix}.
\end{equation}
Fortunately~\eqref{eqn:two-scale-update} has a closed form solution,  which we give in the following lemma.

\begin{restatable}[Prox-Linear Update]{lemma}{modelupdate}
 \label{lem:modelupdate}
The closed form solution to~\eqref{eqn:two-scale-update}
is given by
\begin{align}\label{eq:prox-lin-method}
\begin{split}
    \tau_t &:= \min\Big\{ c, \frac{\big(f_{j_t}(w^t) - s_{j_t}^t + \delta_t\big)_+}{\delta_t +\lambda_t\|\nabla f_{j_t}(w^t)\|^2} \Big\}, \\
    w^{t+1} &= w^t - \tau_t  \lambda_t\nabla f_{j_t}(w^t), \\
     s^{t+1}_j &=  s_j^t - \delta_t + \tau_t \delta_t, \quad \text{if } j = j_t,  \\
     s^{t+1}_j &=  s_j^t, \quad \text{if } j \neq j_t, 
\end{split}
\end{align}
\end{restatable}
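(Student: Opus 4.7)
The plan is to reduce the joint minimization to a one-dimensional problem in a scalar dual variable $\tau \in [0,c]$ via a Fenchel/saddle-point representation of the positive part, then solve in closed form. First, observe that for every $i \neq j_t$, the variable $s_i$ appears only in the term $\frac{1}{2\delta_t}(s_i - s_i^t)^2$, so first-order optimality immediately gives $s_i^{t+1} = s_i^t$, and we may restrict attention to the unknowns $w$ and $s_{j_t}$.

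Next, I would exploit the elementary identity $c (x)_+ = \max_{\tau \in [0,c]} \tau x$ to rewrite the objective in~\eqref{eqn:two-scale-update} as the saddle problem
\begin{align*}
\min_{w,s} \max_{\tau \in [0,c]} \Big\{ s_{j_t} + \tau \big( f_{j_t}(w^t) + \langle \nabla f_{j_t}(w^t), w - w^t\rangle - s_{j_t} \big) + \tfrac{1}{2\lambda_t}\|w-w^t\|^2 + \tfrac{1}{2\delta_t}\|s-s^t\|^2 \Big\}.
\end{align*}
Because the inner objective is affine (hence concave) in $\tau$ on the compact interval $[0,c]$ and strongly convex in $(w, s_{j_t})$, Sion's minimax theorem applies and we may swap $\min$ and $\max$. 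For fixed $\tau$, setting the gradients in $w$ and $s_{j_t}$ to zero yields the closed-form inner minimizers
\begin{align*}
w(\tau) = w^t - \tau \lambda_t \nabla f_{j_t}(w^t), \qquad s_{j_t}(\tau) = s_{j_t}^t + (\tau - 1)\delta_t,
\end{align*}
which already match the claimed updates once we identify $\tau$ with the optimal dual $\tau_t$.

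It then remains to carry out the outer maximization in $\tau$. Substituting the above back into the saddle objective, the cross-terms collapse and after grouping constant, linear and quadratic parts in $\tau$ one obtains the concave quadratic
\begin{align*}
\Phi(\tau) = s_{j_t}^t - \tfrac{\delta_t}{2} + \tau\big(f_{j_t}(w^t) - s_{j_t}^t + \delta_t\big) - \tfrac{\tau^2}{2}\big(\delta_t + \lambda_t \|\nabla f_{j_t}(w^t)\|^2\big).
\end{align*}
Its unconstrained maximizer is $\tau^\star = (f_{j_t}(w^t) - s_{j_t}^t + \delta_t)/(\delta_t + \lambda_t\|\nabla f_{j_t}(w^t)\|^2)$; clipping to $[0,c]$ gives exactly $\tau_t = \min\{c, (f_{j_t}(w^t) - s_{j_t}^t + \delta_t)_+/(\delta_t + \lambda_t\|\nabla f_{j_t}(w^t)\|^2)\}$, where the positive part arises from the $\tau \geq 0$ constraint and the $\min$ with $c$ from the $\tau \leq c$ constraint. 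Plugging this $\tau_t$ into $w(\tau)$ and $s_{j_t}(\tau)$ yields the claimed formulas.

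The only subtle point I anticipate is the need for the minimax swap; alternatively one can bypass it by using the convex subdifferential of $c(\,\cdot\,)_+$, which equals $\{c\}$ on $(0,\infty)$, $\{0\}$ on $(-\infty,0)$, and $[0,c]$ at $0$, and verifying the KKT conditions case by case—yielding the same three regimes ($\tau_t = 0$, $\tau_t \in (0,c)$, $\tau_t = c$) as above. Either route is routine once the saddle-point/subdifferential viewpoint is adopted, so I do not expect any serious obstacle beyond the bookkeeping of substituting and simplifying $\Phi(\tau)$.
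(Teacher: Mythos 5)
Your proposal is correct, and it takes a genuinely different route from the paper. The paper's proof proceeds by a change of variables $\hat s = \sqrt{\lambda_t/\delta_t}\, s$ (which equalizes the two proximal weights), a completion of the square in $\hat s_{j_t}$, and then an invocation of Lemma~\ref{lem:max-of-lin-update} (Proposition~3 of the ALI-G paper) to solve the resulting single-metric problem, followed by undoing the rescaling. You instead bypass the rescaling entirely by writing $c(x)_+ = \max_{\tau \in [0,c]} \tau x$, forming the Lagrangian saddle problem, invoking Sion's minimax theorem to swap $\min$ and $\max$ (valid since the inner objective is affine in $\tau$ over the compact interval $[0,c]$ and strongly convex in $(w, s_{j_t})$), solving the strongly convex inner problem in closed form for each $\tau$, and then maximizing the resulting concave quadratic $\Phi(\tau)$ over $[0,c]$. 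I checked the algebra: the inner minimizers $w(\tau) = w^t - \tau\lambda_t\nabla f_{j_t}(w^t)$ and $s_{j_t}(\tau) = s_{j_t}^t + \delta_t(\tau - 1)$ are correct, and substituting back indeed collapses to $\Phi(\tau) = s_{j_t}^t - \tfrac{\delta_t}{2} + \tau a - \tfrac{b}{2}\tau^2$ with $a = f_{j_t}(w^t) - s_{j_t}^t + \delta_t$ and $b = \delta_t + \lambda_t\|\nabla f_{j_t}(w^t)\|^2$, whose constrained maximizer over $[0,c]$ is exactly the claimed $\tau_t$. The one point worth making explicit (you gesture at it, but it deserves a sentence) is that after the minimax swap the primal argmin of the original problem coincides with $(w(\tau_t), s_{j_t}(\tau_t))$: this follows because the saddle value is attained, $L(\cdot, \tau_t)$ is strongly convex so its minimizer is unique, and that unique minimizer must also be the primal optimizer by the standard saddle-point argument. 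Your approach buys self-containedness (no external lemma, no rescaling trick) at the cost of one more abstract ingredient (Sion/strong duality), and it also exposes $\tau_t$ directly as the optimal Lagrange multiplier, which gives a cleaner interpretation of the clipping at $0$ and $c$ than the paper's route. Either way the result is established, and your subdifferential/KKT alternative in the last paragraph would work just as well.
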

Clearly, ~\eqref{eq:prox-lin-method} is \FUVAL{} with $\gamma=1$.
%Because of the additional cap on the step size, we refer
The difference between~\eqref{eqn:two-scale-update} and the  standard prox-linear method is that we have two tunable parameters $\lambda_t>0$ and $\delta_t>0$,  instead of just one parameter where $\lambda_t = \delta_t$ in the standard prox-linear method.  We introduce two parameters so that we can arrive at a scale-invariant method,  see Remark~\ref{rem:scale}.  
%Furthermore, the only difference between~\eqref{eqn:two-scale-update} and~\eqref{eq:projupdate} is that in~\eqref{eqn:two-scale-update} the stepsize $\tau_t$ is capped by a maximum value $c$.  This difference can be removed by choosing $c$ large enough, which is permissible since  Lemma~\ref{lem:equivpenalty} guarantees that for every $c\geq 1$ the penalty problem~\ref{prob:pos-part} is equivalent to~\eqref{eq:functionsplit_vanilla}.

Using the connection to model-based methods,  we adapt the convergence theory provided by~\cite{Davis2019} to arrive at the following Corollary. This Corollary also follows closely the proof of Theorem 5.2 in~\cite{meng2023modelbased}.
\begin{restatable}[Prox-Linear Convergence]{corollary}{fmodelbasedconvhat}
 \label{cor:fmodelbasedconvhat}
%Let the assumptions of Proposition \ref{prop:apply-davis} hold. 
Let $f_i$ be  convex and $G_i$-Lipschitz for all $i\in[n]$.
If $c\geq1$ for all $i\in[n]$, $\lambda_t=\frac{\lambda}{\sqrt{t+1}}$ and $\delta_t=\frac{\delta}{\sqrt{t+1}}$, then, for $T\in \N$, the iterates~\eqref{eq:prox-lin-method} satisfy
\begin{align} \label{eqn:estimate-convex-hat}
        \mathbb{E}\Big[f(\bar{w}^{T}) - f(w^*)\Big] \leq \frac{\tfrac{1}{\lambda}\|w^0-w^*\|^2 +\frac{1}{\delta}\|{s}^0-{s}^*\|^2}{4 (\sqrt{T+2}-1) } + \frac{1}{n}\sum_{i=1}^n \Big(1+  \sqrt{\frac{\lambda}{\delta}G_i^2 + 1}\Big)\frac{\delta(1+\ln(T+1))}{2\sqrt{T+2}-2},
\end{align}
where $\bar{w}^{T} := \frac{1}{T+1}\sum_{t=0}^T \lambda_t w^{t+1}$.
\end{restatable}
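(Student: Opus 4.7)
\textbf{Proof plan for Corollary~\ref{cor:fmodelbasedconvhat}.}
The plan is to treat \eqref{eq:prox-lin-method} as the model-based stochastic method of~\cite{Davis2019} applied to the separable problem \eqref{prob:pos-part}, with stochastic one-sided lower model
\[ \tilde g_i(w,s;u') \;:=\; s_i + c\bigl(f_i(w') + \langle \nabla f_i(w'), w-w'\rangle - s_i\bigr)_+, \]
and prox-metric $\mD_t := \operatorname{diag}(\mI_d/\lambda_t,\mI_n/\delta_t)$. First I would record the two standard properties that are the heart of the argument: by convexity of $f_i$ and monotonicity of $(\cdot)_+$, the model is a lower bound, $\tilde g_i(u;u') \leq g_i(u)$ for every $u$, with equality at $u=u'$. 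Using these two properties, together with the fact that $u^{t+1}$ is the $\mD_t$-proximal minimizer of $\tilde g_{j_t}(\cdot;u^t)$ and the three-point identity for strongly convex minimization, I would deduce, for every $u^* \in \argmin g$,
\[ \tilde g_{j_t}(u^{t+1};u^t) - g_{j_t}(u^*) \;\leq\; \tfrac{1}{2}\|u^t-u^*\|_{\mD_t}^2 - \tfrac{1}{2}\|u^{t+1}-u^*\|_{\mD_t}^2 - \tfrac{1}{2}\|u^{t+1}-u^t\|_{\mD_t}^2. \]

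Next I would relate $\tilde g_{j_t}(u^{t+1};u^t)$ to $g_{j_t}(u^t) = \tilde g_{j_t}(u^t;u^t)$ via a Lipschitz estimate on the model in the dual norm $\|\cdot\|_{\mD_t^{-1}}$. A direct subgradient calculation gives $\|\partial \tilde g_{j_t}(\cdot;u^t)\|_{\mD_t^{-1}}^2 \lesssim \lambda_t G_{j_t}^2 + \delta_t$, hence
\[ g_{j_t}(u^t) - \tilde g_{j_t}(u^{t+1};u^t) \;\leq\; \sqrt{\lambda_t G_{j_t}^2 + \delta_t}\,\|u^t-u^{t+1}\|_{\mD_t}, \]
and then a suitably weighted Young's inequality absorbs the $\|u^{t+1}-u^t\|_{\mD_t}^2$ term against the negative one from the three-point identity. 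The net bound reads
\[ g_{j_t}(u^t) - g_{j_t}(u^*) \;\leq\; \tfrac{1}{2}\|u^t-u^*\|_{\mD_t}^2 - \tfrac{1}{2}\|u^{t+1}-u^*\|_{\mD_t}^2 + \tfrac{1}{2}\bigl(\lambda_t G_{j_t}^2 + \delta_t\bigr). \]

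I would then multiply through by $\lambda_t$ and exploit the fact that the schedule forces the ratio $\lambda_t/\delta_t = \lambda/\delta$ to be constant. This makes the metric-weighted terms collapse to $\tfrac{1}{2}\|w^t-w^*\|^2 + \tfrac{\lambda}{2\delta}\|s^t-s^*\|^2$, giving a genuinely telescoping recurrence. Summing over $t=0,\ldots,T$, conditioning on the past and taking $\mathbb{E}_{j_t}$ so that $\mathbb{E}_{j_t} g_{j_t} = g$, I would use $\sum_{t=0}^T \lambda_t \delta_t = \lambda\delta \sum_{t=0}^T \tfrac{1}{t+1} \leq \lambda\delta(1+\ln(T+1))$ and an analogous bound on $\sum \lambda_t^2 G_i^2$. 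Dividing both sides by $\sum_{t=0}^T \lambda_t \geq 2\lambda(\sqrt{T+2}-1)$ and invoking Jensen's inequality on the convex functional $g$ (together with Lemma~\ref{lem:equivpenalty}, which gives $g(w^t,s^t)\geq f(w^t)$ for $c\geq 1$ and $g(w^*,s^*)=f(w^*)$) transfers the estimate from $g$ on the average iterate to $f(\bar w^T)-f(w^*)$, yielding \eqref{eqn:estimate-convex-hat}.

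The main obstacle I anticipate is purely accountancy: producing the precise factor $\tfrac{1}{n}\sum_i(1+\sqrt{\lambda G_i^2/\delta+1})$ rather than the cruder $\tfrac{1}{n}\sum_i(1+\lambda G_i^2/\delta)$ that the naive Young's inequality delivers. This calls for splitting the model-Lipschitz bound into its $w$-contribution (whose subgradient has norm at most $G_{j_t}$) and its $s$-contribution (of norm at most one for $c=1$) and optimizing the Young's weight \emph{inside the expectation} at each $t$, so that the linear-in-$G_{j_t}$ term is preserved under the square root. All other ingredients (telescoping of the $\mD_t$-norms, the $\ln(T+1)$ from the harmonic sum, and the $\sqrt{T+2}-1$ in the denominator from $\sum \lambda_t$) are standard once the recurrence above is in place.
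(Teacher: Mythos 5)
Your plan is sound and, at heart, takes the same route as the paper: both treat~\eqref{eq:prox-lin-method} as the one-sided model method of~\cite{Davis2019} applied to the penalty objective $g$, obtain a one-step recursion in a metric encoding the $(\lambda_t,\delta_t)$ scales, sum/telescope under the $1/\sqrt{t+1}$ schedule, and pass from $g$ to $f$ via Lemma~\ref{lem:equivpenalty} (together with Lemma~\ref{lem:f-and-g}). The paper, however, handles the two-scale prox by a change of variables $\hat s := \sqrt{\lambda_t/\delta_t}\,s$, $\hat f_i := \sqrt{\lambda/\delta}\,f_i$, which reduces the update to the \emph{standard} single-parameter prox-linear method with step $\rho_t=\sqrt{\lambda_t\delta_t}$; after that rescaling, the Davis bound (via Lemma~\ref{lem:one-sided-model}, Proposition~\ref{prop:apply-davis}, Corollary~\ref{cor:fmodelbasedconv}) is invoked verbatim, and one simply substitutes back. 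You instead keep the iteration in its original coordinates and re-derive the three-point recursion directly in the $\mD_t$-norm, noticing that the constancy of $\lambda_t/\delta_t$ is what makes the telescope work. These are two routes to the same estimate; the rescaling trick is shorter because it lets one cite the off-the-shelf theorem, whereas your route is more transparent about why the ratio $\lambda_t/\delta_t$ must be constant.

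One remark on your final paragraph, where you anticipate trouble in getting the per-index factor $1+\sqrt{\lambda G_i^2/\delta+1}$ rather than a squared quantity. That worry is misdirected: the paper's own proof produces the \emph{squared} model-Lipschitz constant $\hat{\mathsf{M}}^2 = \frac1n\sum_i\big(1+c\sqrt{\lambda G_i^2/\delta+1}\big)^2$ in the noise term (see~\eqref{eqn:estimate-convex-temp} and Proposition~\ref{prop:apply-davis}), and the linear-looking factor that appears in the statement of the corollary is in fact a transcription slip (the square and the $c$ multiplier are dropped when the bound is written down). So the ``naive Young's inequality'' you dismiss is exactly what the argument uses, and there is no need to optimize the Young weight inside the expectation. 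You should carry the squared quantity through and flag the discrepancy in the stated bound rather than try to sharpen your estimate to match a constant that the proof does not actually deliver. Relatedly, check the normalization in the definition of $\bar w^T$: with weights $\lambda_t$ the average must be divided by $\sum_{t=0}^T\lambda_t$ (as you correctly do), not by $T+1$, for Jensen to apply.
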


%%%%%%%%%%%%%%%%SGD SECTION
\subsection{An online SGD viewpoint}

Our final viewpoint of our method is as a type of online SGD method.  We use this viewpoint to establish convergence of our method for smooth and convex functions.  But first,  we need to introduce a relaxation step into our method.

%\subsubsection{Relaxation}

 By relaxation, we mean that, instead of doing the update
\[w^{t+1} = w^t + d^t\]
where $d^t$ is the update vector, we shrink the size of the update using a  relaxation parameter   $\gamma \in \; ] 0, \;1]$  and update according to
\[w^{t+1} = w^t + \gamma d^t.\]
Applying the relaxation step in both variables $w$ and $s$ we have in~\eqref{eq:projupdate} we arrive at
\begin{equation}\label{alg:relaxedversion}
%    \tag{APS} 
    \begin{cases}
  \tau_t & = \; \displaystyle    \frac{(f_{j_t}(w^t)  -s_{j_t}^t+ \delta)_+}{\delta+\lambda \Vert \nabla f_{j_t}(w^t) \Vert^2}\\
 w^{t+1} &= \;w^t -\gamma \lambda \tau_t \nabla f_{j_t}(w^t) \\
  s_{j_t}^{t+1} &=\; s_{j_t}^t + \gamma \delta( \tau_t -1)\\
 s_i^{t+1} &=\; s_i^t \quad \text{ for all } i \neq j_t, \\
    \end{cases}
\end{equation}
where $j_t$ is sampled i.i.d at each iteration from $\{1,\ldots, n\}.$
Clearly, ~\eqref{alg:relaxedversion} is \FUVAL{} with $\lambda_t=\lambda$, $\delta_t=\delta$, $c=\infty$.

%We refer to~\eqref{alg:relaxedversion} as relaxed \FUVAL{}, due to the additional relaxation parameter.

\subsubsection{Online SGD}

The method~\eqref{alg:relaxedversion} can also be interpreted as an online SGD method applied to minimizing
\begin{equation}\label{eq:L1sgd}
\min_{w \in \R^d, s\in \R^n} \phi_{t}(w,s), \quad \phi_{t}(w,s) :=\frac{1}{n} \sum_{i=1}^n \underbrace{\left(\frac{1}{2} \frac{(f_{i}(w) - s_i+ \delta)_+^2}{\delta +  \lambda\Vert \nabla f_{i}(w^t) \Vert^2} +  s_i    \right)}_{=:\phi_{i,t}(w,s)}  .
\end{equation}
What stands out about~\eqref{eq:L1sgd} is that the objective function now depends on $t$ through $\norm{\nabla f_i(w^t)}^2$ on the denominator.  Despite this dependency on $t$,  we show in the next lemma  solving this online convex problem \eqref{eq:L1sgd} is equivalent to solving our original problem~\eqref{eq:main}.

\begin{restatable}[Equivalent Online Problem]{lemma}{commonminimizers}
%[Common minimizers]
\label{L:common minimizers}
Let  $t \in \mathbb{N}$ and let $f$ be convex.
A given $w^* \in \mathbb{R}^d$ is a  minimizer of $f$ if and only if $(w^*,s^*_t)$ is a  minimizer of $\phi_t$, where
 \begin{eqnarray}\label{eq:sstar}
 s_{i,t}^* := (s_t^*)_i = f_i(w^*) - \lambda \Vert \nabla f_i(w^t) \Vert^2, \quad \mbox{for }i=1,\ldots, n,
 \end{eqnarray}
 \end{restatable}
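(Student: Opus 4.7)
The plan is to prove the lemma by a \emph{partial minimization} argument in the $s$-variable. The key observation is that $\phi_t$ separates additively across the $s_i$'s once $w$ is fixed, so I would first solve $\min_{s \in \R^n}\phi_t(w,s)$ in closed form, then observe that the resulting value function in $w$ is simply $f(w)$ shifted by a constant independent of $w$.

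More precisely, set $\alpha_i := \delta + \lambda \|\nabla f_i(w^t)\|^2 > 0$. For each $i$, the one-variable function
$s_i \mapsto \psi_i(s_i) := \tfrac{1}{2\alpha_i}(f_i(w) - s_i + \delta)_+^2 + s_i$
is convex (composition of the convex, nondecreasing $y\mapsto y_+^2$ with the affine function $-s_i + f_i(w) + \delta$, plus a linear term). I would minimize $\psi_i$ by splitting the line: on the half-line $\{s_i \geq f_i(w) + \delta\}$, $\psi_i(s_i) = s_i$ is strictly increasing; on $\{s_i \leq f_i(w) + \delta\}$, $\psi_i$ is a strictly convex quadratic whose derivative $-(f_i(w) - s_i + \delta)/\alpha_i + 1$ vanishes exactly at
\[ s_i^\star(w) \;=\; f_i(w) - \lambda \|\nabla f_i(w^t)\|^2, \]
which indeed lies in that half-line since $\delta,\lambda\|\nabla f_i(w^t)\|^2 \geq 0$. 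A direct computation gives $\psi_i(s_i^\star(w)) = f_i(w) + \tfrac12(\delta - \lambda\|\nabla f_i(w^t)\|^2)$, and this is strictly smaller than the boundary value $f_i(w)+\delta$, so $s_i^\star(w)$ is the unique global minimizer of $\psi_i$.

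Summing over $i$ and dividing by $n$, I would obtain
\[ \min_{s \in \R^n}\phi_t(w,s) \;=\; f(w) + C_t, \qquad C_t := \tfrac{\delta}{2} - \tfrac{\lambda}{2n}\sum_{i=1}^n \|\nabla f_i(w^t)\|^2, \]
where the minimum in $s$ is uniquely attained at $s^\star(w) := (s_i^\star(w))_{i=1}^n$. Because $C_t$ does not depend on $w$, taking $\min_w$ on both sides shows $\min_{w,s}\phi_t = \min_w f + C_t$, and any minimizer of $\phi_t$ must be of the form $(w^\star, s^\star(w^\star))$ with $w^\star\in \argmin f$. Specializing $w = w^*$ in the formula for $s^\star$ recovers the definition of $s^*_t$ in~\eqref{eq:sstar}, so $(w^*, s^*_t)$ minimizes $\phi_t$ if and only if $w^*$ minimizes $f$, which is exactly the claimed equivalence.

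The only subtle step is the uniqueness of the inner minimizer $s_i^\star(w)$, since $(\cdot)_+^2$ is not strictly convex; I would expect to handle this cleanly by the half-line case analysis above rather than by invoking joint convexity (which, incidentally, would require each $f_i$ to be convex—the lemma only assumes convexity of $f$, so the partial-minimization route is actually the more natural one).
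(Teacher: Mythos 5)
Your proof is correct, and it takes a genuinely different route from the paper. The paper argues via first-order optimality: it writes down the stationarity conditions for $\phi_t$ in both $w$ and $s$, shows the $s$-conditions force $\frac{(f_i(w^*)-s_{i,t}^*+\delta)_+}{\delta+\lambda\|\nabla f_i(w^t)\|^2}=1$ for each $i$, plugs that back into the $w$-condition to get $\frac1n\sum_i\nabla f_i(w^*)=0$, and then invokes ``critical point $\Leftrightarrow$ minimizer,'' which requires $\phi_t$ to be convex jointly in $(w,s)$ --- and hence, via Lemma~\ref{L:convexity surrogate}, convexity of every $f_i$, not merely of $f$. Your partial-minimization argument instead eliminates $s$ by a one-dimensional convexity argument that holds for \emph{any} $f_i$ (since $\phi_{i,t}(w,\cdot)$ is always convex in $s_i$), obtains $\min_s\phi_t(w,s)=f(w)+C_t$ with $C_t$ constant in $w$, and reads off the equivalence of minimizers together with the formula~\eqref{eq:sstar}. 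This is more elementary, it does not actually use convexity of $f$ or the $f_i$ at all (so your version removes, rather than adds, an implicit assumption in the paper's proof), and it simultaneously gives the marginal-infimum identity that the paper only establishes afterwards as equation~\eqref{eq:minphi3} of Lemma~\ref{L:lower bouds phiit} --- and there by circularly invoking Lemma~\ref{L:common minimizers}, so your reorganization is arguably the cleaner logical order. One small stylistic point: your closing sentence ``so $(w^*,s^*_t)$ minimizes $\phi_t$ if and only if $w^*$ minimizes $f$'' compresses two directions; it is worth spelling out that $(\Leftarrow)$ follows because $\phi_t(w^*,s^\star(w^*))=f(w^*)+C_t=\min_{w,s}\phi_t$, while $(\Rightarrow)$ follows from the chain $f(w^*)+C_t\le\phi_t(w^*,s^*_t)=\min_{w,s}\phi_t=\min_w f+C_t$.
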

Because~\eqref{eq:L1sgd} depends on $t$,  we need to use an online method to  solve~\eqref{eq:L1sgd}, such as online SGD. 
%is equivalent~\eqref{alg:relaxedversion}.
%\rob{If we leave the $\frac{1}{n} \sum_{i=1}^n s_i$ outside of the sampling, then this is equivalent to SPSL1 method but keeping the whole objective.}
\begin{restatable}[Online SGD Equivalence]{lemma}{onlineSGDequivalence}
\label{L:online SGD equivalence}
Let $\gamma  \in ]0,1]$.  
The method~\eqref{alg:relaxedversion}  is equivalent to applying the online SGD method to~\eqref{eq:L1sgd} given by
\begin{align}
    w^{t+1} &= w^t - \gamma \lambda \nabla_w \phi_{j_t,t}(w^t,s^t)  \nonumber \\
    s^{t+1} &= s^t - \gamma \delta \nabla_s \phi_{j_t,t}(w^t,s^t).\label{eq:sgdview}
\end{align}
\end{restatable}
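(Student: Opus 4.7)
The plan is to show that the two updates coincide by directly computing the partial gradients of the sampled summand $\phi_{j_t,t}$ at the current iterate $(w^t,s^t)$ and plugging them into the online SGD recursion.

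First I would fix an iteration $t$ and a sampled index $j_t$, and introduce the shorthand
\[
D_t \;:=\; \delta + \lambda\,\|\nabla f_{j_t}(w^t)\|^2.
\]
The key observation is that in the definition of $\phi_{i,t}$, the quantity $\|\nabla f_i(w^t)\|^2$ depends only on $w^t$, not on the free variable $w$. Hence, when computing $\nabla_w \phi_{j_t,t}(w,s)$ and $\nabla_s \phi_{j_t,t}(w,s)$, the denominator $D_t$ is a constant in $(w,s)$. Moreover, $\phi_{j_t,t}(w,s)$ depends on $s$ only through the coordinate $s_{j_t}$.

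Next I would differentiate $\phi_{j_t,t}$. Using the chain rule on the composition $a \mapsto \tfrac{1}{2}(a)_+^2$ with $a = f_{j_t}(w) - s_{j_t} + \delta$ (this is differentiable everywhere, with derivative $(a)_+$), evaluating at $(w^t,s^t)$ gives
\[
\nabla_w \phi_{j_t,t}(w^t,s^t) \;=\; \frac{(f_{j_t}(w^t)-s_{j_t}^t+\delta)_+}{D_t}\,\nabla f_{j_t}(w^t) \;=\; \tau_t\,\nabla f_{j_t}(w^t),
\]
and, coordinatewise in $s$,
\[
\bigl(\nabla_s \phi_{j_t,t}(w^t,s^t)\bigr)_i \;=\; 0 \quad (i\neq j_t), \qquad \bigl(\nabla_s \phi_{j_t,t}(w^t,s^t)\bigr)_{j_t} \;=\; 1 - \tau_t,
\]
where $\tau_t$ is exactly the quantity defined in \eqref{alg:relaxedversion}.

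Finally I would substitute these expressions into the SGD updates \eqref{eq:sgdview}. For the $w$-variable this yields $w^{t+1} = w^t - \gamma\lambda\,\tau_t \nabla f_{j_t}(w^t)$, and for the $s$-variable it yields $s_{j_t}^{t+1} = s_{j_t}^t - \gamma\delta(1-\tau_t) = s_{j_t}^t + \gamma\delta(\tau_t-1)$ while $s_i^{t+1} = s_i^t$ for $i\neq j_t$. These are exactly the updates of \eqref{alg:relaxedversion}, which establishes the equivalence. There is no real obstacle in this proof; the only subtlety is noting that $\tfrac{1}{2}(\cdot)_+^2$ is continuously differentiable (its kink is smoothed by the square), so no subgradient selection issue arises.
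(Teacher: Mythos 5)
Your proof is correct and follows essentially the same route as the paper's: compute $\nabla_w\phi_{j_t,t}$ and $\nabla_s\phi_{j_t,t}$ at $(w^t,s^t)$, identify them with $\tau_t\nabla f_{j_t}(w^t)$ and $(1-\tau_t)e_{j_t}$, and substitute into the SGD recursion. The paper simply cites the already-displayed gradient formula \eqref{eq:gradphiit} rather than re-deriving it, and your added remark about the differentiability of $\tfrac12(\cdot)_+^2$ is a fair clarification.
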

Note that the online SGD method in the above lemma applies a different stepsize in the $w$ and $s$ variables.
Another way to see this  is as  an  online SGD method in the metric induced by $\mD$~\eqref{D:metric}, in other words
\[z^{t+1} = z^t - \gamma \mD^{-1} \nabla \phi_{j_t,t}(z^t)\]
where $z^t = (w^t, s^t)$. Will we use this viewpoint in proving convergence.

\subsubsection{Convergence}
We now use this connection to online SGD to provide a convergence theory for smooth and convex functions.
\begin{restatable}[SGD Convergence]{theorem}{cvconvexsmooth}
\label{T:cvconvexsmooth}
Let  $f_i$ be convex and $L_{\max}$--smooth for $i \in [n].$
%Let Assumptions \ref{ass:convexity} and \ref{ass:lipschitz} hold.
Let $w^0 \in \mathbb{R}^d$ and $s^0 \in \mathbb{R}^n$ be such that $s_i^0 \geq \inf f_i$.
Let $(w^t,s^t)$ be the sequence generated by the Algorithm \eqref{alg:relaxedversion}, with parameters $\lambda \in ]0,\frac{1}{2 L_{\max}}[$ and $\gamma \in (0,1)$.
Let $T >0$ and let $\bar w^T:=\frac{1}{T}\sum_{t=0}^{T-1} w^t$.
Let $w^* \in {\rm{argmin}}~f$, $s_i^*:= f_i(w^*)$, %$\sigma^2 := \mathbb{E}\left[ \Vert \nabla f_i(w^*)\Vert^2 \right]$, 
and $\sigma:= \inf f - \frac{1}{n}\sum_{i=1}^n \inf f_i$.
It follows that
\begin{equation*}
    \mathbb{E}[
    f(\bar w^T) - \inf f  
    ]
    \leq
    % \frac{\tfrac{1}{\lambda}\norm{w^{0} -w^*}^2+\tfrac{1}{\delta}\norm{s^{0} -s^*}^2}{2\gamma(1-\gamma)\lambda(1- 2\lambda L_{\max})T}
    % + 
    \frac{\tfrac{1}{\lambda}\norm{w^{0} -w^*}^2+\tfrac{1}{\delta}\norm{s^{0} -s^*}^2}{2\gamma(1-\gamma)(1- \lambda L_{\max} )T}
    +  \sigma \frac{\gamma + \lambda L_{\max}(1-\gamma)}{(1-\gamma)(1- \lambda L_{\max})}
    .
\end{equation*}
\end{restatable}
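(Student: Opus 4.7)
The plan is to exploit the online SGD equivalence proved in Lemma \ref{L:online SGD equivalence}: the iterates can be written as $z^{t+1} = z^t - \gamma \mD^{-1} \nabla \phi_{j_t, t}(z^t)$ where $z^t = (w^t, s^t)$ and $\mD$ is the diagonal preconditioner from \eqref{D:metric}. I would work throughout in the metric $\|z\|_\mD^2 = \tfrac{1}{\lambda}\|w\|^2 + \tfrac{1}{\delta}\|s\|^2$, whose dual norm $\|\cdot\|_{\mD^{-1}}^2 = \lambda\|\cdot\|_w^2 + \delta\|\cdot\|_s^2$ is the natural one for the stochastic gradients. Choosing the reference point $z^\star = (w^*, s^*)$ with $s_i^* = f_i(w^*)$ (which is what appears in the right-hand side of the claimed bound), a straightforward expansion of $\|z^{t+1}-z^\star\|_\mD^2$ gives the one-step identity
\[
\mathbb{E}_t \|z^{t+1} - z^\star\|_\mD^2 = \|z^t - z^\star\|_\mD^2 - 2\gamma \langle \nabla \phi_t(z^t), z^t - z^\star\rangle + \gamma^2 \,\mathbb{E}_t \|\nabla \phi_{j_t,t}(z^t)\|_{\mD^{-1}}^2.
\]

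For the inner-product term I would use convexity of each $\phi_{j_t, t}$ in $(w,s)$ (which holds because $x \mapsto x_+^2$ is convex nondecreasing composed with the convex map $(w,s)\mapsto f_{j_t}(w)-s_{j_t}+\delta$), yielding $\langle \nabla \phi_t(z^t), z^t-z^\star\rangle \geq \phi_t(z^t)-\phi_t(z^\star)$. For the variance term, using the closed forms $\nabla_w \phi_{j_t,t}(z^t) = \tau_t \nabla f_{j_t}(w^t)$ and $\nabla_{s_{j_t}}\phi_{j_t,t}(z^t) = 1 - \tau_t$, we get
\[
\|\nabla \phi_{j_t,t}(z^t)\|_{\mD^{-1}}^2 = \lambda \tau_t^2 \|\nabla f_{j_t}(w^t)\|^2 + \delta(1-\tau_t)^2.
\]
The key algebraic identity $\tau_t (\delta + \lambda \|\nabla f_{j_t}(w^t)\|^2) = (f_{j_t}(w^t) - s_{j_t}^t + \delta)_+$ lets me rewrite $\lambda \tau_t^2 \|\nabla f_{j_t}(w^t)\|^2$ in terms of $\phi_{j_t,t}(z^t) - s_{j_t}^t$. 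Combined with $L_{\max}$-smoothness in the form $\|\nabla f_i(w^t)\|^2 \leq 2L_{\max}(f_i(w^t) - \inf f_i)$, this produces an upper bound of the shape $\mathbb{E}_t\|\nabla \phi_{j_t,t}(z^t)\|_{\mD^{-1}}^2 \leq 2\lambda L_{\max} [\phi_t(z^t)-\phi_t(z^\star)] + (\text{terms in } \sigma \text{ and } \delta)$, where the $\sigma$ contribution comes precisely from replacing $f_i(w^*)$ by $\inf f_i$ when invoking smoothness.

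Combining these two bounds with the recursion, I would split $-2\gamma [\phi_t(z^t) - \phi_t(z^\star)]$ into a $-2\gamma(1-\gamma)[\cdots]$ piece that survives and a $-2\gamma^2[\cdots]$ piece that absorbs the $2\gamma^2 \lambda L_{\max}[\phi_t(z^t)-\phi_t(z^\star)]$ from the variance term; requiring positivity of the net coefficient is exactly the hypothesis $\lambda L_{\max} < 1$, which together with $\gamma \in (0,1)$ explains the factor $\gamma(1-\gamma)(1-\lambda L_{\max})$ in the denominator of the stated bound. A direct computation of $\phi_t(z^t)-\phi_t(z^\star)$ with $s_i^* = f_i(w^*)$ then gives $\phi_t(z^t)-\phi_t(z^\star) \geq f(w^t) - f(w^*)$ up to corrections controlled by $\sigma$ (using the initialization $s_i^0 \geq \inf f_i$, which by an induction on the $s$-update is preserved along the iterates). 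Summing over $t=0,\ldots,T-1$, telescoping the $\|z^t-z^\star\|_\mD^2$ differences, dividing by $T$, and applying Jensen's inequality $f(\bar w^T) \leq \tfrac{1}{T}\sum_{t=0}^{T-1} f(w^t)$ yields the claimed estimate.

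The main obstacle is the bookkeeping in the variance-absorption step. The adaptive coefficient $\tau_t$ couples $f_{j_t}(w^t)-s_{j_t}^t$ with $\|\nabla f_{j_t}(w^t)\|^2$ in a nonlinear way, so extracting an upper bound on $\|\nabla \phi_{j_t,t}(z^t)\|_{\mD^{-1}}^2$ that is simultaneously (i) linear in $\phi_t(z^t) - \phi_t(z^\star)$, (ii) has the coefficient $2\lambda L_{\max}$ needed to match the convexity-progress term, and (iii) leaves only a residual proportional to $\sigma$ (and not to uncontrolled quantities such as $\|s^t\|^2$) is the delicate piece. Once that bound is in hand, the remainder is a standard telescoping argument.
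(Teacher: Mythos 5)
Your high-level architecture is the right one, and it is the paper's: work in the $\mD$-metric via the online SGD equivalence (Lemma~\ref{L:online SGD equivalence}), expand $\|z^{t+1}-z^*\|_\mD^2$, apply convexity of $\phi_{j_t,t}$ for the inner-product term, control $\mathbb{E}_t\|\nabla\phi_{j_t,t}(z^t)\|_{\mD^{-1}}^2$, telescope, and finish with Jensen. However, the two technical claims you make in the ``bookkeeping'' step are both wrong, and between them they lose the $(1-\lambda L_{\max})$ factor in a way that the argument cannot recover.

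First, the variance bound. The paper does not, and cannot, prove an estimate of the shape $\mathbb{E}_t\|\nabla\phi_{j_t,t}(z^t)\|_{\mD^{-1}}^2 \leq 2\lambda L_{\max}[\phi_t(z^t)-\phi_t(z^*)] + (\text{residual in }\sigma,\delta)$. The algebraic identity you quote gives something stronger and cleaner: an \emph{exact} identity (Lemma~\ref{L:smoothness phiit})
\[
\|\nabla\phi_{j_t,t}(z^t)\|_{\mD^{-1}}^2 = 2\bigl(\phi_{j_t,t}(z^t) - \hat s^{t+1}_{j_t} - \tfrac{\delta}{2}\bigr),
\]
with coefficient exactly $2$ on $\phi_{j_t,t}(z^t)$. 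Here $\hat s^{t+1}_{j_t} = s^t_{j_t}+\delta(\tau_t-1)$ is the unrelaxed slack update, and its role is essential: the initialization $s_i^0\geq\inf f_i$ and the invariant $\hat s^{t+1}_i\geq\inf f_i$ (Lemma~\ref{L:lower bound slack variable}, which itself requires $\lambda\leq\tfrac{1}{2L_{\max}}$) are precisely what makes the residual after absorption into $-2\gamma^2[\phi_t(z^t)-\phi_t(z^*)]$ collapse to $\bar\sigma_t^* := \mathbb{E}[\phi_t(z^*)] - \mathbb{E}[\hat s^{t+1}_{j_t}] - \tfrac{\delta}{2} \leq \sigma$. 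No smoothness is invoked to produce this variance bound; the coefficient stays $2$, the surviving progress coefficient is $2\gamma(1-\gamma)$, and you are left with noise $2\gamma^2\sigma$. If you instead insert a factor $2\lambda L_{\max}$ as you propose, the difference $2(1-\lambda L_{\max})\phi_{j_t,t}(z^t)$ that you have discarded depends on the iterate and is not controllable by $\sigma$ and $\delta$; the claimed bound would not hold.

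Second, the transfer from $\phi_t$ to $f$. Your statement ``$\phi_t(z^t)-\phi_t(z^*) \geq f(w^t)-f(w^*)$ up to corrections controlled by $\sigma$'' is incorrect, and it is precisely where the $(1-\lambda L_{\max})$ in the denominator actually originates. The correct relation (Lemmas~\ref{L:lower bouds phiit}--\ref{lem:lowerphi}) is that $\phi_t(z^t) \geq f(w^t)+\tfrac{\delta}{2}-\tfrac{\lambda}{2n}\sum_i\|\nabla f_i(w^t)\|^2$ and $\phi_t(z^*)\leq\inf f+\tfrac{\delta}{2}$, and combining with $\|\nabla f_i(w^t)\|^2\leq 2L_{\max}(f_i(w^t)-\inf f_i)$ gives
\[
\phi_t(z^t)-\phi_t(z^*) \geq (1-\lambda L_{\max})(f(w^t)-\inf f) - \lambda L_{\max}\sigma.
\]
The error term is not just ``controlled by $\sigma$'' --- there is a coefficient $(1-\lambda L_{\max})$ degrading the dominant $f(w^t)-\inf f$ term. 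This is the only place smoothness enters. Consequently, your accounting, which puts $\lambda L_{\max}$ into the variance absorption and has no $(1-\lambda L_{\max})$ degradation in the $\phi$-to-$f$ transfer, would, if it closed, yield a \emph{better} constant than the theorem states. That is a strong signal that the step does not close; indeed it does not, for the reason given in the first paragraph. Fixing the variance bound to the exact identity and fixing the $\phi$-to-$f$ transfer to carry the $(1-\lambda L_{\max})$ factor recovers exactly the claimed estimate, which is the paper's route.
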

Theorem~\ref{T:cvconvexsmooth} shows that the method~\eqref{alg:relaxedversion} enjoys a $O(1/T)$ convergence upto a noise radius proportional to $\sigma.$ This is the same rate of convergence for SGD,  see Theorem 4.1 in~\cite{SGDstruct}.  
This result also shows that, like SPS~\cite{SPS} and SGD, our method~\eqref{alg:relaxedversion} also converges at a rate of $O(1/T)$ under \emph{interpolation}. That is, from Lemma 4.15, item 2 in~\cite{handbook2023},  when $\sigma =0$ we have that
\[ 0= f(w^*) - \frac{1}{n} \sum_{i=1}^n \inf f_i = \frac{1}{n} \sum_{i=1}^n (f_i(w^*) - \inf f_i).\]
Since by definition we also have $\inf f_i \leq f_i(w^*)$, the above shows that $w^* \in \argmin_{w\in\R^d} f_i(w) .$  This means, that at the optimal point $w^*$, the loss over every data point $f_i(w)$ is also minimized. 

Because of this, the noise radius is zero for models that satisfy interpolation and we obtain the following result.
\begin{corollary}
Let the assumptions and notations of Theorem \ref{T:cvconvexsmooth} be in place. If interpolation holds, i.e.\ $\sigma=0$, then
\begin{equation*}
    \mathbb{E}[
    f(\bar w^T) - \inf f  
    ]
    \leq
    \frac{\tfrac{1}{\lambda}\norm{w^{0} -w^*}^2 + 
    \tfrac{1}{\delta} \sum\limits_{i=1}^n
    (s_i^0 - \inf f_i)^2
    }{2\gamma(1-\gamma)(1- \lambda L_{\max} )T}
    .
\end{equation*}
\end{corollary}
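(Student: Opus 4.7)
The proof is essentially an application of Theorem \ref{T:cvconvexsmooth} together with the characterization of interpolation. First, I would invoke the theorem directly: since all its hypotheses (convexity, $L_{\max}$-smoothness, $\lambda \in ]0,\tfrac{1}{2L_{\max}}[$, $\gamma \in (0,1)$, $s_i^0 \geq \inf f_i$) are assumed to hold here, the bound
\begin{equation*}
\mathbb{E}[f(\bar w^T) - \inf f] \leq \frac{\tfrac{1}{\lambda}\norm{w^{0} -w^*}^2+\tfrac{1}{\delta}\norm{s^{0} -s^*}^2}{2\gamma(1-\gamma)(1- \lambda L_{\max} )T} + \sigma \frac{\gamma + \lambda L_{\max}(1-\gamma)}{(1-\gamma)(1- \lambda L_{\max})}
\end{equation*}
applies. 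Setting $\sigma = 0$ immediately kills the additive noise term.

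The remaining task is to simplify $\norm{s^0 - s^*}^2$ under interpolation. Recall $s_i^* = f_i(w^*)$ by the definition in Theorem \ref{T:cvconvexsmooth}. As noted in the discussion following the theorem (using Lemma 4.15 of \cite{handbook2023}), $\sigma = 0$ together with $\inf f_i \leq f_i(w^*)$ and $f(w^*) = \tfrac{1}{n}\sum_i f_i(w^*)$ forces $f_i(w^*) = \inf f_i$ for every $i$. Hence $s_i^* = \inf f_i$ and
\begin{equation*}
\norm{s^0 - s^*}^2 = \sum_{i=1}^n (s_i^0 - \inf f_i)^2.
\end{equation*}

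Substituting these two simplifications into the bound of Theorem \ref{T:cvconvexsmooth} yields the claimed inequality. There is no real obstacle here: the corollary is a direct specialization of the main theorem, so the only thing to verify carefully is the identification $s_i^* = \inf f_i$, which is exactly the content of interpolation in this sum-of-terms setting.
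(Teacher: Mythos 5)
Your proof is correct and mirrors the paper's reasoning: the corollary is obtained by specializing Theorem \ref{T:cvconvexsmooth} to $\sigma = 0$, and the identification $s_i^* = f_i(w^*) = \inf f_i$ follows from the argument sketched just before the corollary in the paper (using that $\sigma = \tfrac{1}{n}\sum_i (f_i(w^*) - \inf f_i) = 0$ with each summand nonnegative). No gaps.
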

%Theorem~\ref{T:cvconvexsmooth}
\section{Numerical experiments}
\label{sec:numerics}
% \subsection{Full batch experiments}
In the case of full batch sampling \FUVAL{} is a new adaptive gradient method given by
\begin{align}
    \tau_t &:= \min\Big\{ c, \frac{\big(f(w^t) - s^t + \delta_t\big)_+}{\lambda_t\|\nabla f(w^t)\|^2+\delta_t} \Big\}, \nonumber\\
    w^{t+1} &= w^t - \tau_t  \lambda_t\nabla f(w^t), \nonumber\\
     s^{t+1} &=  s^t - \delta_t + \tau_t \delta_t, \label{eq:prox-lin-method-ful}
\end{align}
where $s^t$ should converge to $\inf f$. We test four distinct datasets from  LIBSVM~\cite{chang2011libsvm}, namely \texttt{mushrooms},  \texttt{ijcnn1},  \texttt{colon},  and \texttt{covtype}.
We will consider three different choices for the parameters $(\delta,\lambda)$:
\begin{itemize}
    \item a \emph{naive} setting, where $\delta= \lambda = c$ for some $c>0$ ;
    \item a \emph{unit invariant function value} setting, where $\delta = c f(w^0)$ and $\lambda = \frac{c}{f(w^0)}$, for some $c>0$ ;
    \item a \emph{unit invariant gradient} setting, where $\delta = c f(w^0)$ and $\lambda = \frac{c f(w^0)}{\Vert \nabla f(w^0) \Vert^2}$, for some $c>0$.
\end{itemize}
For the full batch experiments, we perform a grid search over $c$, and for each value we run the algorithms for $200$ iterations. For each experiment, we run the algorithms for $20$ epochs (the number of iterations is equal to $20$ times the number of data points in the dataset), and compare the \FUVAL{} methods to GD (gradient descent). For gradient descent we do a grid search over the step size. We then compare the sensitivity of the methods with respect to their only tunable parameter ($c$ or stepsize). We refer to this tunable parameter as the stepsize\_factor, and plot it against suboptimality $f(w^T)-f^*$ after 20 epochs, see Figure~\ref{fig:gd vs fuval}. We found that the variants of \FUVAL{} enjoy a wider settings of good parameters on the \texttt{colon} and \texttt{covtype} data sets, but had a very comparable sensitivity to GD on \texttt{mushrooms} and \texttt{ijcnn1}.

We also compared \FUVAL{} to SGD in Figure~\ref{fig:sgd vs fuval}. But here we found that SGD tended to be less sensitive to tuning its stepsize. We conjecture that this is because we used single element sampling, and thsu the $\alpha_i$'s in \FUVAL{} are re-visited too infrequently and thus become stale.

% \guillaume{A few aesthetic issues to correct ultimately : I leave the four plots but we might want only FUVAL unit invariant gradient ; text in graphs should be bigger? ; replace stepsize factor x-label with whatever we come up with in the paper.}

\def\mywidth{0.45\textwidth} % to scale all figures

\begin{figure}
    \centering
    \includegraphics[width=\mywidth]{./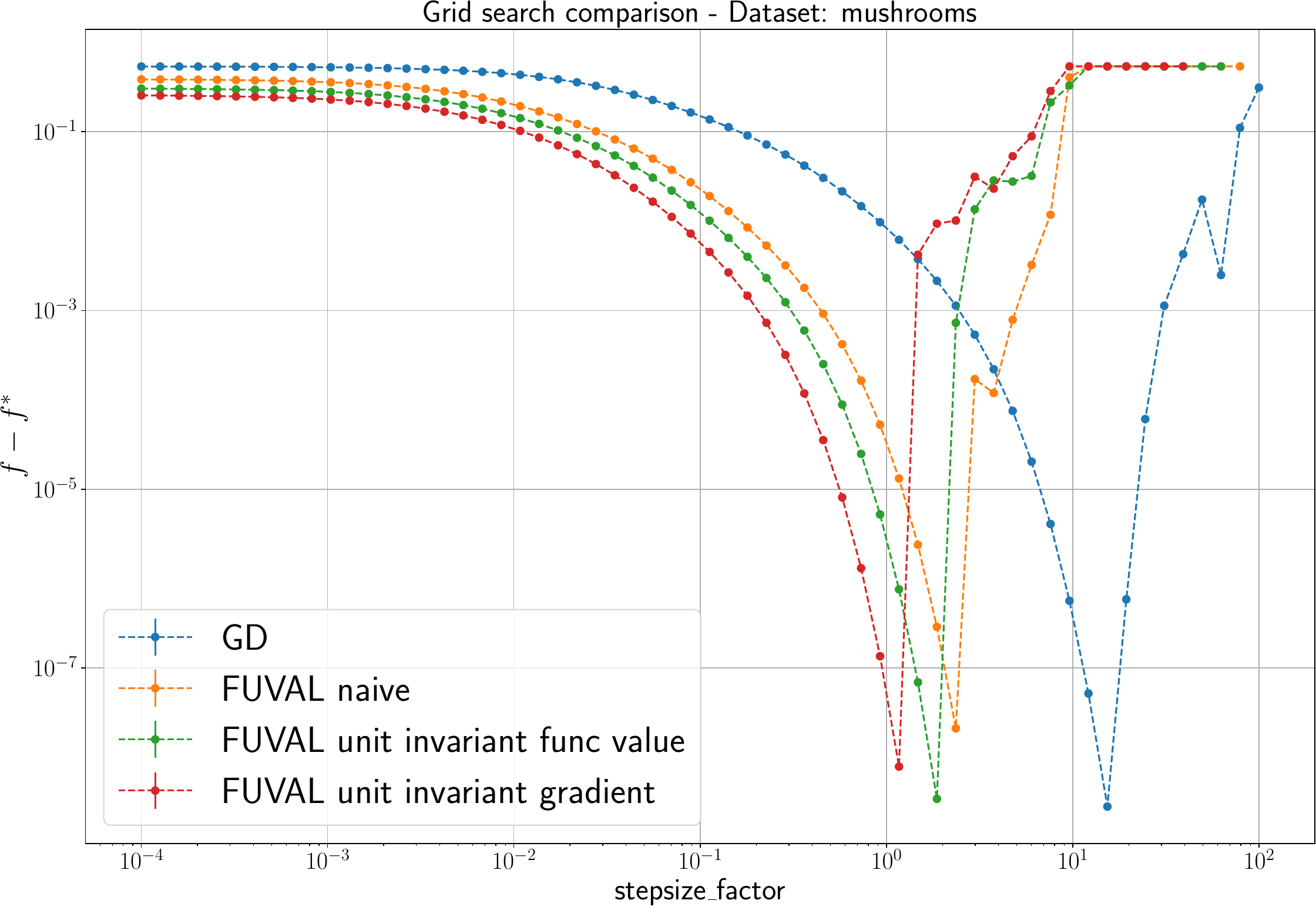}
    \hfill
    \includegraphics[width=\mywidth]{./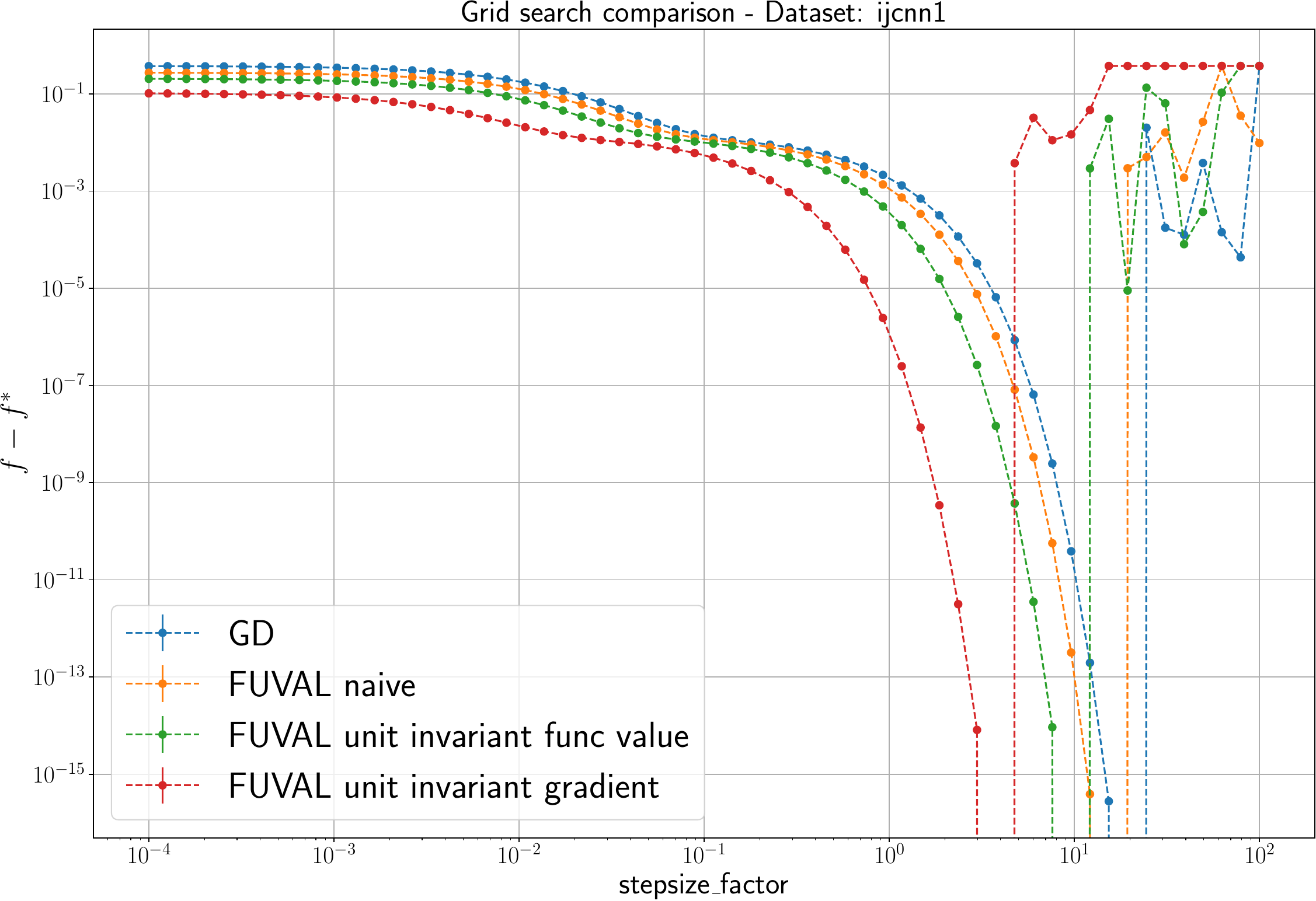} \\
    \includegraphics[width=\mywidth]{./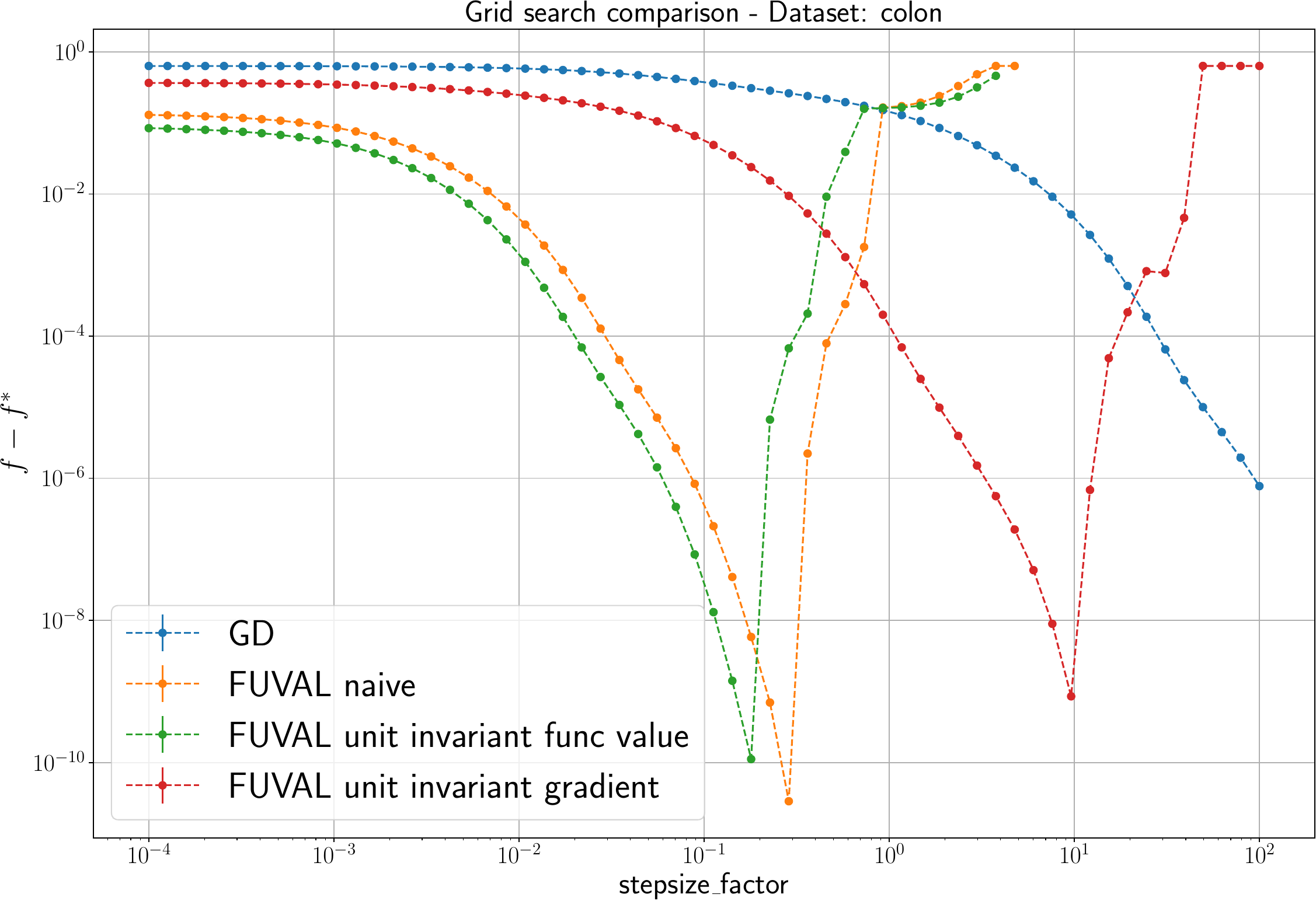}
    \hfill
    \includegraphics[width=\mywidth]{./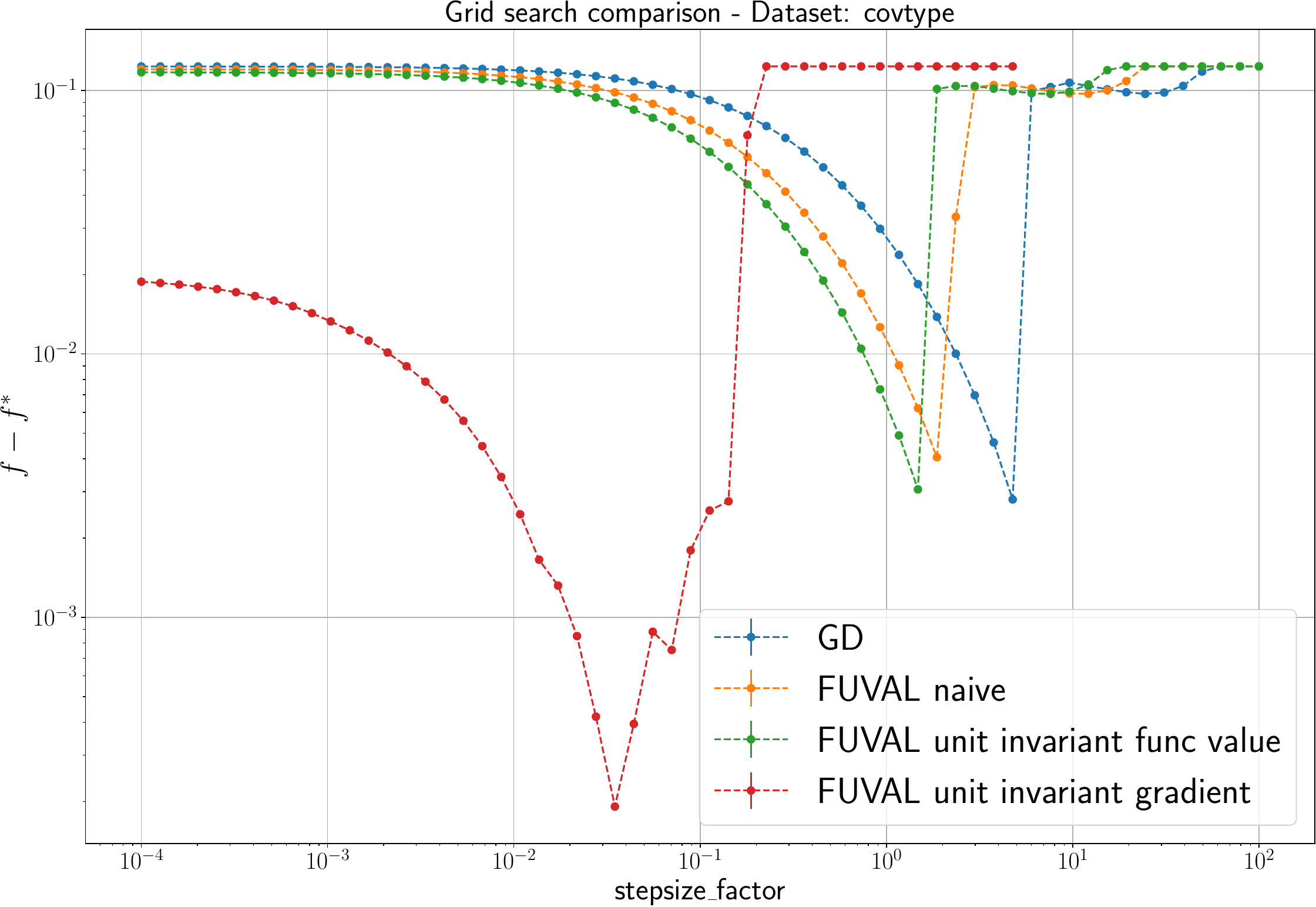}    
    \caption{Comparing GD vs. full batch \FUVAL{} in terms of sensitivity to their only tunable parameter: $c$ for \FUVAL{} and stepsize for GD.}
    \label{fig:gd vs fuval}
\end{figure}

\begin{figure}
    \centering
    \includegraphics[width=\mywidth]{./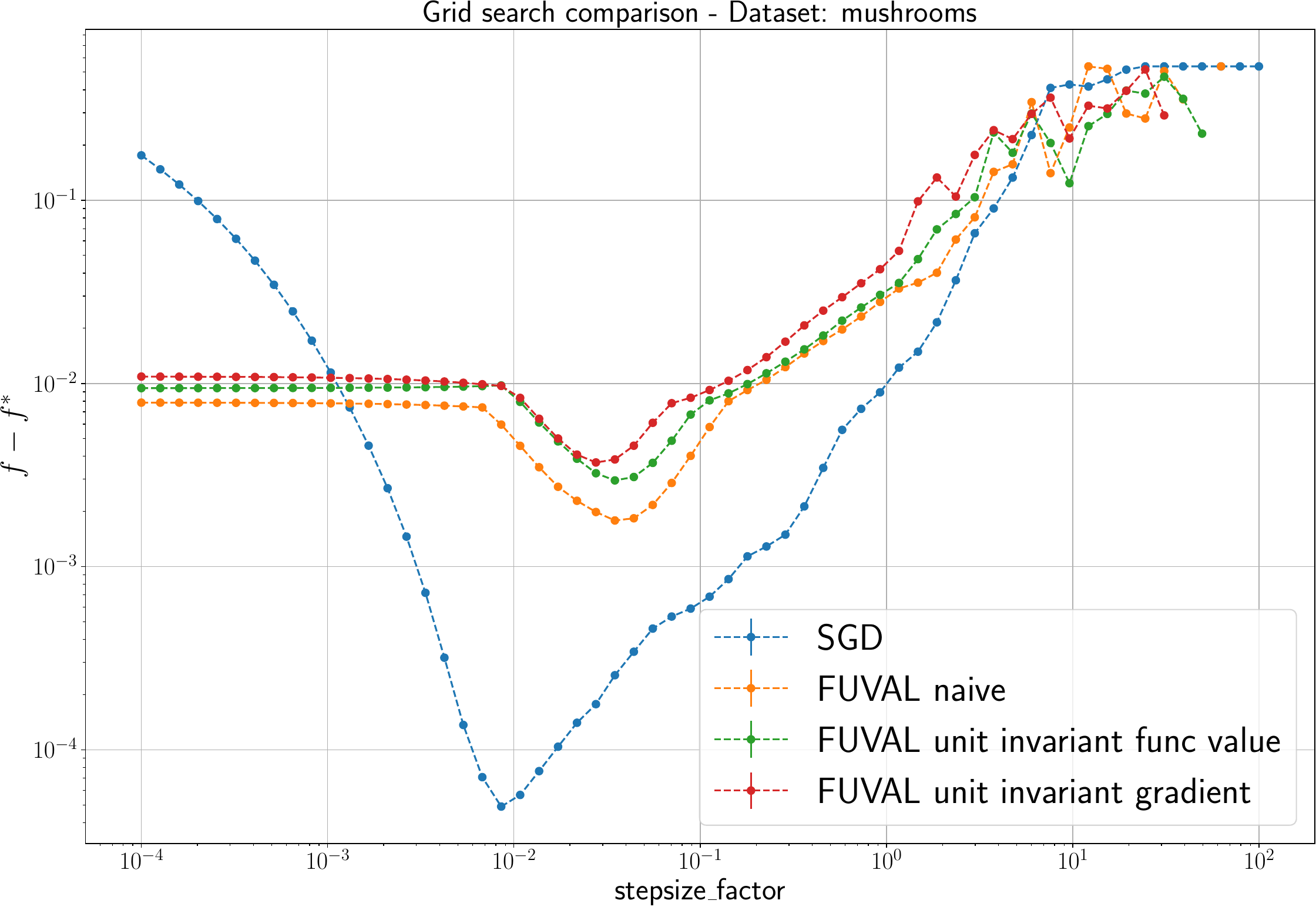}
    \hfill
    \includegraphics[width=\mywidth]{./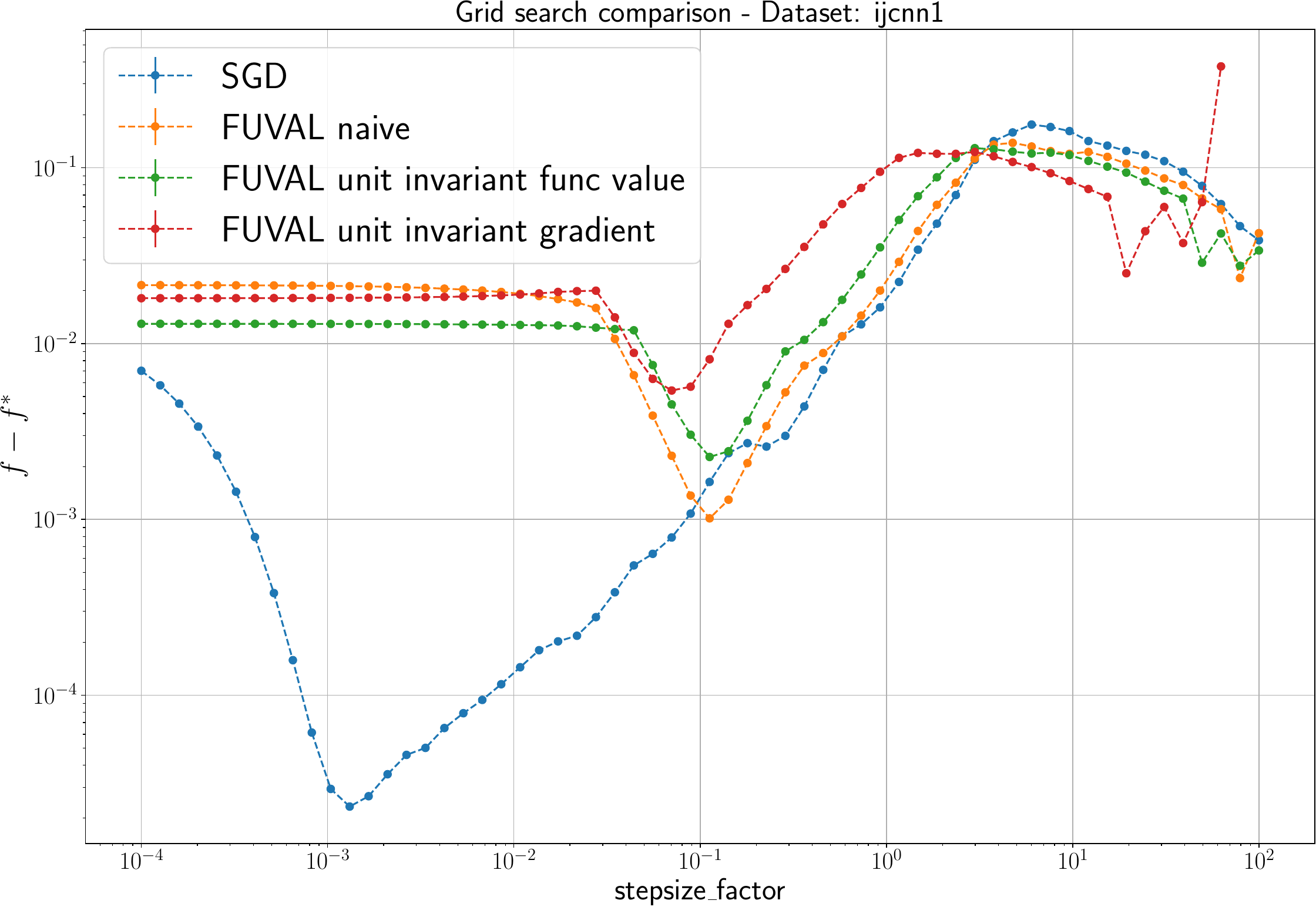} \\
    \includegraphics[width=\mywidth]{./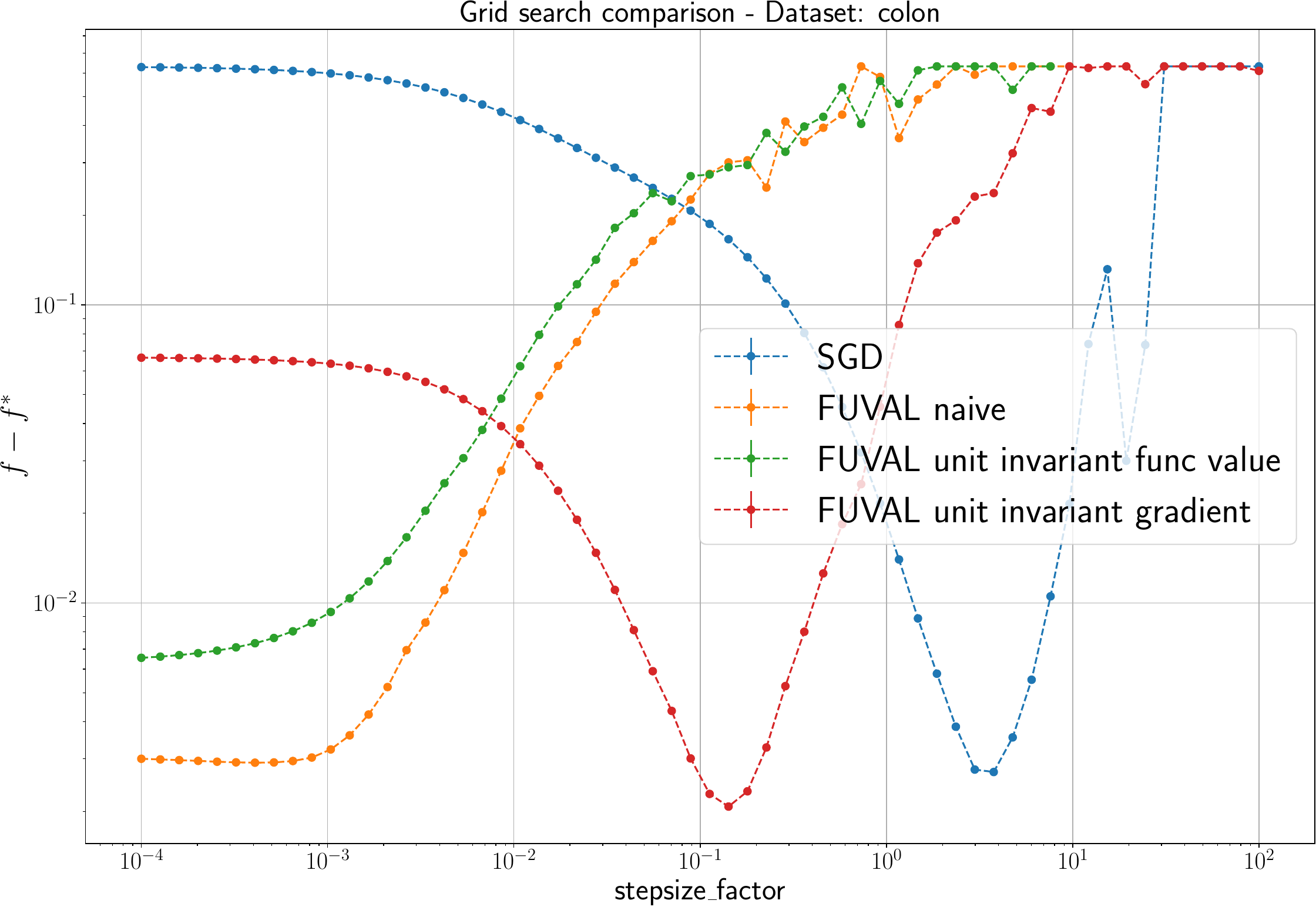}
    \hfill
    \includegraphics[width=\mywidth]{./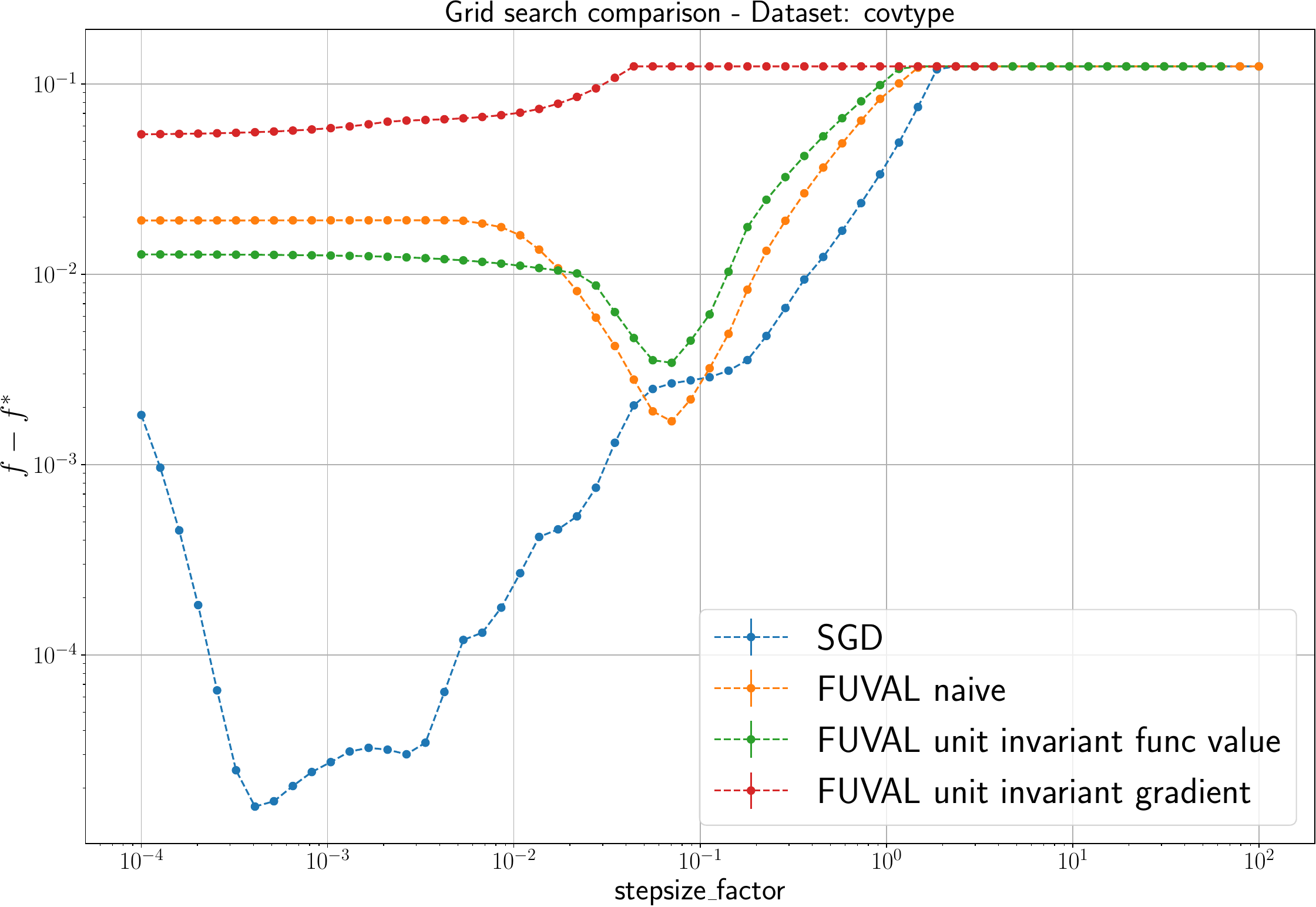}    
    \caption{Comparing SGD vs. \FUVAL{} in terms of sensitivity to their only tunable parameter: $c$ for \FUVAL{} and stepsize for SGD.}
    \label{fig:sgd vs fuval}
\end{figure}

% \subsection{Parameter setting}\label{sec:parameterexp}

\renewcommand*{\bibfont}{}
{ %\footnotesize 
\printbibliography
}

\appendix 

\tableofcontents

%%%%%%%%%%%%%%%%%
\appendix

%%%%%%%%%%%%%%%%%

\section{Auxiliary Lemmas}

This following lemma is taken from  Proposition 3 in~\cite{Khaled-nonconvex-2020}.

\begin{lemma}[Smoothness inequality] \label{lem:convsmoothinter}
Let $f_i:\mathbb{R}^d\to \mathbb{R}$ be $L_i$--smooth and bounded from below, and let $L_{\max} = \displaystyle \max_{i=1,\ldots, n} L_i$. Then, for all $w\in\mathbb{R}^d$ and $i=1,\dots,n$ it holds 
\begin{equation}
    \label{eq:convsmoothinter}
    f_i(w) - \inf f_i \geq \frac{1}{2L_i} \norm{\nabla f_i (w)}^2 \enspace.
\end{equation}
Consequently, if $i \sim \frac{1}{n}$ then taking expectation we have that for any $w\in \mathbb{R}^d$ it holds
\begin{equation}
    \label{eq:convsmoothinterE}
    \E{\norm{\nabla f_i (w)}^2} \leq 2L_{\max}( f(w) -\inf f + \inf f- \E{\inf f_i}) \enspace.
\end{equation}
If all $f_i$ are additionally convex, and if $w^* \in \argmin f(w)$, then
\begin{equation}
    \label{eq:convsmoothE}
    f(w)-f(w^*) = \E{f_i(w) -f_i(w^*)} \geq \frac{1}{2L_{\max}} \E{\norm{\nabla f_i (w)-\nabla f_i (w^*)}^2} \enspace.
\end{equation}
\end{lemma}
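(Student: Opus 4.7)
The lemma bundles three standard facts about $L$-smooth functions, so my plan is to prove each of the three inequalities in turn, with the first serving as the main building block.

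For inequality \eqref{eq:convsmoothinter}, the plan is to apply the descent lemma: $L_i$-smoothness of $f_i$ gives
\begin{equation*}
    f_i(y) \leq f_i(w) + \langle \nabla f_i(w), y-w\rangle + \tfrac{L_i}{2}\|y-w\|^2
    \qquad \text{for all } y\in\mathbb{R}^d.
\end{equation*}
Minimising the right-hand side in $y$ (achieved at $y = w - \tfrac{1}{L_i}\nabla f_i(w)$), I obtain the upper bound $f_i(w) - \tfrac{1}{2L_i}\|\nabla f_i(w)\|^2$, and since $f_i$ is bounded below by $\inf f_i$, the left-hand side satisfies $\inf f_i \leq f_i(y)$. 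Rearranging yields \eqref{eq:convsmoothinter}.

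For \eqref{eq:convsmoothinterE}, I bound $\tfrac{1}{L_i} \geq \tfrac{1}{L_{\max}}$ in \eqref{eq:convsmoothinter} and take expectation over $i\sim\tfrac{1}{n}$. This gives
\begin{equation*}
    \mathbb{E}[f_i(w)] - \mathbb{E}[\inf f_i] \;\geq\; \tfrac{1}{2L_{\max}}\mathbb{E}\bigl[\|\nabla f_i(w)\|^2\bigr].
\end{equation*}
Since $\mathbb{E}[f_i(w)] = f(w)$, I decompose $f(w) - \mathbb{E}[\inf f_i] = (f(w) - \inf f) + (\inf f - \mathbb{E}[\inf f_i])$ and rearrange to recover \eqref{eq:convsmoothinterE}.

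For \eqref{eq:convsmoothE}, the key trick is to apply \eqref{eq:convsmoothinter} to the shifted function
\begin{equation*}
    g_i(w) \;\eqdef\; f_i(w) - f_i(w^*) - \langle \nabla f_i(w^*), w-w^*\rangle.
\end{equation*}
Since $f_i$ is convex and $L_i$-smooth, so is $g_i$ (the Hessian is unchanged), and $\nabla g_i(w^*) = 0$ together with convexity implies $\inf g_i = g_i(w^*) = 0$. Applying \eqref{eq:convsmoothinter} to $g_i$ at an arbitrary $w$ yields
\begin{equation*}
    f_i(w) - f_i(w^*) - \langle \nabla f_i(w^*), w-w^*\rangle \;\geq\; \tfrac{1}{2L_i}\|\nabla f_i(w) - \nabla f_i(w^*)\|^2.
\end{equation*}
Bounding $\tfrac{1}{L_i}\geq \tfrac{1}{L_{\max}}$ and taking expectation, I use that $\mathbb{E}[\nabla f_i(w^*)] = \nabla f(w^*) = 0$ (because $w^* \in \argmin f$ and $f$ is convex), and that $\mathbb{E}[f_i(w)-f_i(w^*)] = f(w)-f(w^*)$, to obtain \eqref{eq:convsmoothE}.

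No step looks genuinely difficult; the only point that requires a bit of care is the reduction in the third part, where one must verify that $w^*$ is truly a minimiser of the auxiliary function $g_i$ (not just a stationary point) — this is where convexity of $f_i$ is needed. Everything else is descent-lemma bookkeeping and taking expectations.
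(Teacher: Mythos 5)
Your proof is correct and follows essentially the same route as the paper: the descent-lemma minimisation for \eqref{eq:convsmoothinter} and the add-and-subtract of $\inf f$ for \eqref{eq:convsmoothinterE} are identical to the paper's argument. The only difference is in the third part, where the paper simply cites inequality (2.1.10) of Nesterov's Theorem 2.1.5, whereas you re-derive that inequality from \eqref{eq:convsmoothinter} via the shifted function $g_i$ — a valid, self-contained version of the standard proof of that same fact.
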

\begin{proof}
Using Lemma 5.7 in~\cite{Beck2017} for $f_i$ , we obtain that for all $u,w \in \mathbb{R}^d$ it holds
\begin{align*}
   \inf f_i\leq f_i(u) \leq f_i(w) + \langle \nabla f_i(v),u-w\rangle + \frac{L_i}{2}\|u-w\|^2.
\end{align*}
Minimizing the right-hand side over $u$, the minimum is attained at $\hat u = w- \frac{1}{L_i}\nabla f_i(w)$. Plugging in yields
\begin{align*}
   \inf f_i\leq f_i(w) - \langle \nabla f_i(w), \frac{1}{L_i}\nabla f_i(w) \rangle + \frac{L_i}{2}\|\frac{1}{L_i}\nabla f_i(w)\|^2 = f_i(w) - \frac{1}{2L_i}\|\nabla f_i(w)\|^2.
\end{align*}
Using that $L_i\leq L_{\max}$ for all $i=1,\dots,n$, we have $\|\nabla f_i(w)\|^2 \leq 2L_{\max}(f_i(w)-\inf f_i)$. To prove \eqref{eq:convsmoothinterE}, we simply add and subtract $\inf f$ then take expectation and note that $\E{f_i(w)} = f(w)$.\\
% Let $w^*_i \in \argmin_w f_i(w)$. Setting  $x \leftarrow w_i^*$ and $y \leftarrow w$ gives
% \begin{equation}
%     \label{eq:temspjso9hs8sh}
%     f_i(w) - f_i(w^\star_i)  \geq  \frac{1}{2L_{\max}} \norm{ \nabla f_i (w)}^2 \enspace,
% \end{equation}
% which concludes the proof of~\eqref{eq:convsmoothinter}. To prove~\eqref{eq:convsmoothinterE} we simply re-arrange, add and subtract $\inf f$ then take expectation and note that $\E{f_i(w)} = f(w).$
% Finally the result follows since $f_i(w^\star) \geq 0 $ and thus  $f_i(w^t) \geq f_i(w^t) - f_i(w^\star_i) $.
%
Finally in order to proof~\eqref{eq:convsmoothE}, by assumption $f_i$ is convex and $L_{\max}$--smooth.
Applying inequality (2.1.10) of Theorem 2.1.5 in~\cite{nesterov2013introductory} for $f_i$ and substituting $x \leftarrow w^*$ and $y \leftarrow w$, we obtain
% \begin{equation}
%     \label{eq:nesterov_ineq_217}
%     f_i(y) \geq f_i(x) + \dotprod{\nabla f_i (x), y-x} + \frac{1}{2L_{\max}} \norm{\nabla f_i (x) - \nabla f_i (y)}^2 \enspace.
% \end{equation}
\[   f_i(w) - f_i(w^*) \geq \dotprod{\nabla f_i (w^*), w-w^*} + \frac{1}{2L_{\max}} \norm{\nabla f_i (w) - \nabla f_i (w^*)}^2. \]
Taking expectation and using that 
\[\E{\dotprod{\nabla f_i (w^*), w-w^*}}= \dotprod{\nabla f (w^*), w-w^*} =0 , \]
give the result~\eqref{eq:convsmoothE}.
\end{proof}

\begin{lemma}\label{lem:max-of-lin-update}
Let $y,a\in \mathbb{R}^p$ and $c\in \mathbb{R}$. The solution to 
\begin{align*}
    y^+ = \arg \min_y \Big(c+\langle a, y-y^0\rangle\Big)_+  + \frac{1}{2\beta} \|y-y^0\|^2
\end{align*}
is given by 
\[y^+ = y^0 - \min \big\{\beta, \frac{(c)_+}{\|a\|^2}\big\}a.\]
\end{lemma}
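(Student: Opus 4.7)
My plan is to reduce the problem to a one-dimensional convex minimization along the direction $a$, and then solve it by a short case analysis on the subdifferential of the positive-part term.

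First I would translate variables by setting $z := y - y^0$, so that the problem becomes
\[
z^+ \;=\; \arg\min_{z \in \mathbb{R}^p} \; G(z), \qquad G(z) := (c + \langle a, z\rangle)_+ + \tfrac{1}{2\beta}\|z\|^2.
\]
Since $G$ is the sum of a convex function and a strictly convex coercive quadratic, it has a unique minimizer. The degenerate case $a = 0$ is immediate: $G(z) = (c)_+ + \tfrac{1}{2\beta}\|z\|^2$ is minimized at $z = 0$, consistent with the claimed formula (under the convention that $(c)_+/\|a\|^2 = +\infty$). So I will assume $a \neq 0$ from here on.

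Next I would show that the minimizer lies on the line $\mathbb{R}a$. Decompose any candidate $z = \alpha a + w$ with $\langle w, a\rangle = 0$; then the first term of $G$ depends only on $\alpha$, while $\|z\|^2 = \alpha^2 \|a\|^2 + \|w\|^2$ is minimized by $w = 0$. So it suffices to minimize the scalar convex function
\[
h(\alpha) := (c + \alpha\|a\|^2)_+ + \tfrac{\alpha^2 \|a\|^2}{2\beta}.
\]
I would now use the subdifferential $\partial h$ to locate $\alpha^*$. Away from the kink $\alpha_0 := -c/\|a\|^2$, the function $h$ is differentiable with derivative $\|a\|^2 \mathbb{1}\{c + \alpha\|a\|^2 > 0\} + \alpha\|a\|^2/\beta$. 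Setting $h'=0$ on the branch $\alpha > \alpha_0$ gives $\alpha = -\beta$; at $\alpha = \alpha_0$ the subdifferential is the interval $[\alpha_0\|a\|^2/\beta,\; \|a\|^2 + \alpha_0\|a\|^2/\beta]$, which contains $0$ iff $c \geq 0$ and $\beta\|a\|^2 \geq c$.

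Combining the cases: if $c \leq 0$, then $\alpha^* = 0$; if $c > 0$ and $\beta\|a\|^2 < c$, then $\alpha^* = -\beta$; if $c > 0$ and $\beta\|a\|^2 \geq c$, then $\alpha^* = -c/\|a\|^2$. All three can be written uniformly as $\alpha^* = -\min\{\beta, (c)_+/\|a\|^2\}$, giving
\[
y^+ = y^0 + \alpha^* a = y^0 - \min\Big\{\beta, \tfrac{(c)_+}{\|a\|^2}\Big\}\, a,
\]
as claimed. The only mildly delicate point is the subdifferential check at the kink $\alpha_0$, where one must confirm the two one-sided conditions simultaneously encode $\beta \geq (c)_+/\|a\|^2$; the rest is a direct computation.
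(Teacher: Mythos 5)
Your proof is correct. The paper itself does not supply a proof of this lemma; it simply cites Proposition~3 of \cite{ALI-G}, so your self-contained argument is a genuinely different route. Reducing to the one-dimensional profile $h(\alpha) = (c+\alpha\|a\|^2)_+ + \tfrac{\alpha^2\|a\|^2}{2\beta}$ by orthogonal decomposition along $a$, and then reading off the minimizer from $\partial h$ at the kink and the two smooth branches, is exactly the sort of elementary argument that would make the result self-contained rather than delegated to an external reference. Your case analysis checks out: for $c\le 0$ the left-branch (or kink, when $c=0$) stationary point $\alpha^*=0$ is feasible since $\alpha_0 = -c/\|a\|^2 \ge 0$; for $c>0$ and $\beta\|a\|^2 < c$ the right-branch stationary point $\alpha^*=-\beta$ lies strictly above $\alpha_0$; and otherwise $0 \in \partial h(\alpha_0) = \bigl[\alpha_0\|a\|^2/\beta,\ \|a\|^2 + \alpha_0\|a\|^2/\beta\bigr]$ exactly when $c\ge 0$ and $\beta\|a\|^2 \ge c$. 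These three cases do coincide with $\alpha^* = -\min\{\beta, (c)_+/\|a\|^2\}$. One tiny presentational note: you only explicitly solve $h'=0$ on the branch $\alpha > \alpha_0$; it would be worth one line to record that on $\alpha<\alpha_0$ the stationary point is $\alpha=0$, which is what underpins the $c\le 0$ case. Also, flagging the convention $(c)_+/\|a\|^2 = +\infty$ when $a=0$ is a good catch, since the lemma as stated does not exclude $a=0$.
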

\begin{proof}
See Proposition 3 in~\cite{ALI-G}.
\end{proof}

%%%%%%%%%%%%

\begin{lemma}[Projection Inequality Constraint] \label{lem:L2ineqconst}
Let $ c\in \R$ and $w,w^0,a\in \R^d$ .
The closed form solution to 
\begin{align}
    \label{eq:L2ineqconstproj}
    w'=& \argmin_{w\in\R^d } \norm{w - w^0}^2  \nonumber \\
    &\, \mbox{subject to } a^\top (w-w^0) +c \leq 0 \enspace,
\end{align}
is given by 
\begin{align} \label{eq:L2ineqconstprojsolw}
w' & =  w^0 -  \frac{(c)_+}{  \norm{a}^2} a , 
\end{align}
where we denote $(x)_+ =\begin{cases}  x & \mbox{ if } x \geq 0 \\ 0 & \mbox{otherwise} \end{cases} .$ 
\end{lemma}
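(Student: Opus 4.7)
The plan is to treat the two cases of the constraint separately via KKT conditions, since the problem is a quadratic program with a single affine inequality constraint. Writing the Lagrangian
\[
L(w,\mu) = \|w-w^0\|^2 + \mu\bigl(a^\top(w-w^0) + c\bigr), \qquad \mu \geq 0,
\]
the KKT stationarity condition gives $2(w-w^0) + \mu a = 0$, hence any candidate solution has the form $w = w^0 - \tfrac{\mu}{2} a$ for some $\mu \geq 0$.

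The first case is when the unconstrained minimizer $w^0$ is already feasible, i.e.\ $c \leq 0$. Then by complementary slackness we take $\mu = 0$, giving $w' = w^0$. Since $(c)_+ = 0$ in this case, the claimed formula $w' = w^0 - \tfrac{(c)_+}{\|a\|^2}a$ holds trivially.

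The second case is when $c > 0$, so that $w^0$ violates the constraint and the constraint must be active at the optimum. Substituting $w = w^0 - \tfrac{\mu}{2}a$ into $a^\top(w-w^0) + c = 0$ yields $-\tfrac{\mu}{2}\|a\|^2 + c = 0$, so $\mu = 2c/\|a\|^2 > 0$, which is admissible. This gives
\[
w' = w^0 - \frac{c}{\|a\|^2}a = w^0 - \frac{(c)_+}{\|a\|^2} a,
\]
matching the claim. Combining the two cases and using that the objective is strictly convex (so the KKT conditions are sufficient as well as necessary) completes the proof. The only subtlety to mention is that $a \neq 0$ is implicitly assumed whenever $c > 0$, since otherwise the constraint would be infeasible; if $a = 0$ and $c \leq 0$, the constraint holds vacuously and the formula still returns $w^0$ with the convention $0/\|a\|^2 \cdot a = 0$. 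This is a very short standard calculation, so no step is really hard.
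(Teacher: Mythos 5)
Your proof is correct and takes essentially the same route as the paper: both split on whether $w^0$ is already feasible ($c\le 0$) and, if not, project onto the active hyperplane $a^\top(w-w^0)+c=0$. The only cosmetic difference is that you derive the active-constraint case explicitly from KKT stationarity and complementary slackness, whereas the paper simply invokes the pseudoinverse formula $w'=w^0-ca^\dagger$; your version also spells out sufficiency via strict convexity and flags the degenerate $a=0$ case, neither of which the paper bothers to state.
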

\begin{proof}
If $c \leq 0$ then clearly $w=w^0$ is the solution.  Otherwise, the solution is given by projecting onto the affine space $ a^\top (w-w^0) +c = 0.$ The solution to this projection is given by the pseudoinverse,  that is
\[w' = w^0 -c a^{\dagger} = w^0 -\frac{c}{\norm{a}^2} a. \qed  \]

\end{proof}

\begin{lemma}[Projection Inequality Constraint with slack] \label{lem:slackL2ineqconst}
Let $\delta >0, c\in \R$ and $w,w^0,a\in \R^d$ .
The closed form solution to 
\begin{align}
    \label{eq:slackL2ineqconstproj}
    w',s' =& \argmin_{w\in\R^d, s \in \R } \norm{w - w^0}^2 + \delta (s-s^0)^2 \nonumber \\
    &\, \mbox{subject to } a^\top (w-w^0) +c \leq s \enspace,
\end{align}
is given by 
\begin{align} \label{eq:slackL2ineqconstprojsolw}
w' & =  w^0 - \delta \frac{(c-s^0)_+}{ 1 +\delta \norm{a}^2} a , \\
s' & = s^0+   \frac{(c-s^0)_+}{ 1 +\delta \norm{a}^2}, \label{eq:slackL2ineqconstprojsolb}
\end{align}
where we denote $(x)_+ =\begin{cases}  x & \mbox{ if } x \geq 0 \\ 0 & \mbox{otherwise} \end{cases} .$ 
\end{lemma}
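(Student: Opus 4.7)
The objective $(w,s)\mapsto \norm{w-w^0}^2+\delta(s-s^0)^2$ is strictly convex and coercive on $\R^{d+1}$, and the feasible set is a non-empty closed half-space, so the problem admits a unique minimizer characterized by the KKT conditions. I would therefore proceed by writing the Lagrangian
\[
\cL(w,s,\mu) \;=\; \norm{w-w^0}^2+\delta(s-s^0)^2 + \mu\bigl(a^\top(w-w^0)+c-s\bigr),\qquad \mu\geq 0,
\]
and split the analysis into the inactive and active cases, exactly as was done for Lemma~\ref{lem:L2ineqconst}.

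In the \emph{inactive case} where $c-s^0\leq 0$, the unconstrained minimizer $(w^0,s^0)$ of the quadratic objective already satisfies $a^\top(w-w^0)+c = c \leq s^0 = s$, so $w'=w^0$ and $s'=s^0$; note that the proposed formulas~\eqref{eq:slackL2ineqconstprojsolw}--\eqref{eq:slackL2ineqconstprojsolb} reduce to this since $(c-s^0)_+=0$.

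In the \emph{active case} where $c-s^0>0$, stationarity in $w$ and in $s$ yields the two equations $2(w-w^0)+\mu a=0$ and $2\delta(s-s^0)-\mu=0$, so that
\[
w-w^0 \;=\; -\tfrac{\mu}{2}\,a,\qquad s-s^0 \;=\; \tfrac{\mu}{2\delta}.
\]
Substituting these into the activated constraint $a^\top(w-w^0)+c=s$ gives the scalar equation $-\tfrac{\mu}{2}\norm{a}^2+c = s^0+\tfrac{\mu}{2\delta}$, which I would solve for $\mu = \tfrac{2\delta(c-s^0)}{1+\delta\norm{a}^2}$. Since $c-s^0>0$ in this branch, $\mu\geq 0$ as required. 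Plugging $\mu$ back into the stationarity equations yields the claimed formulas with $(c-s^0)_+$ replaced by $(c-s^0)$.

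Finally, unifying both cases with the positive-part notation $(c-s^0)_+$ gives exactly~\eqref{eq:slackL2ineqconstprojsolw}--\eqref{eq:slackL2ineqconstprojsolb}. There is no real obstacle here: the only thing to be slightly careful about is verifying sign-consistency of $\mu$ in the active case (so that complementary slackness is genuinely compatible with $\mu\geq 0$), and noting that strict convexity plus a feasible region with non-empty interior (take $s$ large) ensures Slater's condition holds, so KKT is both necessary and sufficient.
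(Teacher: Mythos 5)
Your proof is correct and follows essentially the same two-case split as the paper (feasible start versus projection onto the boundary hyperplane); the paper simply quotes the equality-constrained projection formula by reference rather than deriving it via the Lagrangian as you do. The KKT details you spell out, including the sign check $\mu\geq 0$, are all fine.
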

\begin{proof}
The proof is given in Lemma C.2 in~\cite{slackpolyak2022}.  But for completeness we give an outline of the proof here.

The problem~\eqref{eq:slackL2ineqconstproj} is an L2 projection onto a halfspace. 
The solution depends if the projected vector $(w,s) =(w^0,s^0)$ is in the halfspace.

If $w = w^0$ and $s=s^0$ satisfies in the linear inequality constraint, that is if $c \leq s^0$, in which case the solution is simply $w' = w^0$ and $s' = s^0.$

Else, $(w^0,s^0)$ is out of the feasible set, that is $c > s^0$, then we need to project $(w^0,s^0)$ onto the boundary of the halfspace, which means project onto 
\[ \{ (w, s) \in \R^d \times \R^d \, | \, a^\top (w - w^0) + c = s \}  \enspace. \]
In which case the solution is given in by the equality constrained version
\begin{align}
    \label{eq:o8oh48hs48hz4}
    w',s' =& \argmin_{w\in\R^d, s \in \R^b } \norm{w - w^0}^2 + \delta \norm{s-s^0}^2 \nonumber \\
    &\, \mbox{subject to } a^\top (w-w^0) +c = s \enspace,
\end{align}
which is 
\begin{align} \label{eq:tseltjhoets}
w' & =  w^0 - \delta \frac{c-s^0}{ 1 +\delta \norm{a}^2} a , \\
s' & = s^0+   \frac{c-s^0}{ 1 +\delta \norm{a}^2}. \label{eq:tenso8tens}
\end{align}
Combining both cases gives~\eqref{eq:slackL2ineqconstprojsolw}--\eqref{eq:slackL2ineqconstprojsolb}.
\end{proof}

\section{Missing Proofs}

Here we give the missing proofs.  All the associated statements are also repeated for ease of reference.

% \subsection{Proof of Theorem~\ref{theo:sps} }
% %%%RESTATE theorem%%%%
% \theosps*
% %%%RESTATE theorem%%%%

\subsection{Proof of Lemma~\ref{lem:projupdate} }

%%%RESTATE theorem%%%%
\projupdate*
%%%RESTATE theorem%%%%
\begin{proof}
The proof follows by re-writing the objective function of the projection~\eqref{eq:funcvallearn} as 
\begin{align*}
       s_j +  \frac{1}{2\lambda}\norm{w- w^t}^2+\frac{1}{2\delta} (s_j- s^t_j)^2 &=
 \frac{1}{2\lambda}\norm{w- w^t}^2 + \frac{1}{2\delta}(s_j -s^t_j+\delta )^2 +\mbox{constants}.
\end{align*}
Thus we can re-write the projection~\eqref{eq:funcvallearn} as
\begin{align}
w^{t+1}, s^{t+1} = & \underset{w \in \R^d, s_j \in \mathbb{R}^n}{\rm{argmin}}~   \norm{w- w^t}^2+\frac{\lambda}{\delta} (s_j- (s_j^t - \delta))^2  + \frac{\lambda}{\delta} \sum_{i \neq j}(s_i -s_i^t)^2\nonumber\\
&\mbox{ subject to }f_j(w^t)+\dotprod{\nabla f_j(w^t),w -w^t } \leq  s_j, \label{eq:funclearnprojgrad}
\end{align}
This projection is now separable in the slack variables $s_i$ for $i \neq j.$ Indeed, for these slack variables the solution is simply $s_i^{t+1} = s_i^t$ for every $i \neq j.$ 

Consequently we can now apply Lemma~\ref{lem:slackL2ineqconst}
where $\delta \leftarrow \frac{\lambda}{\delta}$
and $s^0 \leftarrow s^t-\delta$,
which gives the solution
\begin{align} 
w^{t+1} & =  w^t -\lambda  \frac{(f_j(w^t)-s_j^t+\delta)_+}{ \delta +\lambda\norm{\nabla f_j(w^t)}^2} \nabla f_j(w^t) \label{eq:funcvallearnsolw}\\
s_j^{t+1} & = s_j^t-\delta+   \delta\frac{(f_j(w^t)-s_j^t+\delta)_+}{ \delta +\lambda\norm{\nabla f_j(w^t)}^2}.\label{eq:funcvallearnsols}
\end{align}
\end{proof}

\subsection{Proof of Lemma~\ref{lem:equivpenalty}}

%%%RESTATE theorem%%%%
\equivpenalty*
%%%RESTATE theorem%%%%
\begin{proof}
first note that if $f_i$ are assumed to be convex, then \eqref{prob:pos-part} is a convex problem, as $(\cdot)_+$ is nondecreasing and convex.
We derive the necessary first-order optimality conditions of \eqref{prob:pos-part} (cf.\ \cite[Thm.\ 3.63]{Beck2017}). 
For $(w,s)$, they are
\begin{align}
    u_i &\in \partial (f_i(w)-s_i)_+, \quad i=1,\dots,n, \label{eqn:opt-cond-i}\\
    0 &\in \frac{1}{n}\sum_{i=1}^n c u_i \partial f_i(w), \label{eqn:opt-cond-ii}\\
    0 &= 1- c u_i, \quad i=1,\dots,n. \label{eqn:opt-cond-iii}
\end{align}
Here, we used the fact that $(f_i(w)-s_i)_+ = \max\{f_i(w)-s_i,0\}$ is the pointwise maximum of convex functions and applied \cite[Cor.\ 4.3.2]{HiriartUrruty2001}. 
We do a simple case distinction: 
\begin{enumerate}
    \item If $s_i > f_i(w)$, then $\partial (f_i(w)-s_i)_+ = \{0\}$. In this case, \eqref{eqn:opt-cond-iii} cannot be fulfilled.
    \item  If $s_i < f_i(w)$, then $\partial (f_i(w)-s_i)_+ = \{1\}$. From \eqref{eqn:opt-cond-iii} we have $u_i=1/c$ which implies $c=1$.
    \item  If $s_i = f_i(w)$, then $\partial (f_i(w)-s_i)_+ = [0,1]$. From \eqref{eqn:opt-cond-iii} we have $u_i=1/c$ which implies $c \geq 1$.
\end{enumerate}
At any solution $(w^*,s^*)$, the necessary first-order optimality conditions are fulfilled and hence it must hold $s_i^* \leq f_i(w^*)$ for all $i\in[n]$. Plugging \eqref{eqn:opt-cond-iii} into \eqref{eqn:opt-cond-ii} gives $0\in\frac{1}{n}\sum_{i=1}^n \partial f_i(w^*)$ and hence $0\in \partial f(w^*)$. Hence, $w^*$ is a (global) minimum of $f$ (due to convexity).
Moreover, as $s_i^* \leq f_i(w^*)$ and $c\geq 1$, we have
\[g(w^*,s^*) = \frac{1}{n}\sum_{i=1}^n s_i^* + c(f_i(w^*) -s_i^*)_+ = \frac{1}{n}\sum_{i=1}^n (1-c)s_i^* + cf_i(w^*).\]
Now, we either have $s_i^* < f_i(w^*)$ in which case $c=1$ and hence 
\[(1-c)s_i^* + cf_i(w^*) = f_i(w^*).\]
Otherwise, $c>1$, but then $s_i^*=f_i(w^*)$ from \eqref{eqn:opt-cond-iii} and hence $(1-c)s_i^* + cf_i(w^*) = f_i(w^*)$. Altogether, we get $g(w^*,s^*)=f(w^*)$.
\end{proof}

\subsection{Proof of Lemma~\ref{lem:modelupdate}}

%%%RESTATE theorem%%%%
\modelupdate*
%%%RESTATE theorem%%%%
\begin{proof}
Introducing the variables $\hat{s} := \sqrt{\frac{\lambda_t}{\delta_t}}s$ and hence $\hat{s}^t := \sqrt{\frac{\lambda_t}{\delta_t}}s^t$, problem \eqref{eqn:two-scale-update} is equivalent to 
\begin{align}
    w^{t+1}, \hat s^{t+1} = \argmin_{w,\hat s} \sqrt{\frac{\delta_t}{\lambda_t}}\hat s_{j_t} &+ c\Big(f_{j_t}(w^t)+ \langle \nabla f_{j_t}(w^t), w-w^t \rangle - \sqrt{\frac{\delta_t}{\lambda_t}}\hat s_{j_t}\Big)_+  \nonumber\\
    & + \frac{1}{2\lambda_t}\|w-w^t\|^2 + \frac{1}{2\lambda_t}\|\hat s-\hat s^t\|^2.\label{eq:rescaledhats}
\end{align}
Denote $\nu_t := \sqrt{\delta_t/\lambda_t}$. Completing the squares gives 
\[\nu_t\hat s_i+ \frac{1}{2\lambda_t}(\hat s_i- \hat s_i^t)^2 = \frac{1}{2\lambda_t}(\hat s_i - \hat s_i^t + \sqrt{\lambda_t\delta_t})^2 + \mathrm{constants}(\hat s_i),\]
and hence
\begin{align*}
    w^{t+1}, \hat s^{t+1} = &\argmin_{w,\hat s} c\Big(f_{j_t}(w^t)+ \langle \nabla f_{j_t}(w^t), w-w^t \rangle - \nu_t\hat s_{j_t}\Big)_+ + \frac{1}{2\lambda_t}\|w-w^t\|^2 + \frac{1}{2\lambda_t}(\hat s_{j_t}- (\hat s_{j_t}^t - \sqrt{\lambda_t\delta_t}) )^2, \\
    &\text{subject to } \hat s_i = \hat s_i^t, \quad i\neq j_t.
\end{align*}
Applying Lemma \ref{lem:max-of-lin-update} with 
\[y^0 = \begin{bmatrix}w^t\\\hat s_{j_t}^t - \sqrt{\lambda_t\delta_t}\end{bmatrix},~\beta = c\lambda_t,~ a = \begin{bmatrix}\nabla f_{j_t}(w^t) \\ -\nu_t\end{bmatrix},~ c = f_{j_t}(w^t) -\nu_t\hat s_{j_t}^t + \sqrt{\lambda_t\delta_t}\nu_t \]
gives the update 
\begin{align*}
    \hat \tau_t &:= \min\Big\{\lambda_t c, \frac{\big(f_{j_t}(w^t) - \nu_t \hat s_{j_t}^t + \delta_t\big)_+}{\|\nabla f_{j_t}(w^t)\|^2+\nu_t^2} \Big\}, \\
    w^{t+1} &= w^t - \hat \tau_t \nabla f_{j_t}(w^t), \\
    \hat s^{t+1}_j &= \hat s_j^t - \sqrt{\lambda_t\delta_t} + \hat \tau_t \nu_t, \quad \text{if } j = j_t, \\
    \hat s^{t+1}_j &= \hat s_j^t, \quad \text{if } j \neq j_t,
\end{align*}
where we used that $\sqrt{\lambda_t\delta_t}\nu_t = \delta_t$. 
Substituting back $s^t := \nu_t\hat{s}^t$ and using $\nu_t=\sqrt{\delta_t/\lambda_t}$ and $\hat \tau_t = \lambda_t \tau_t$ gives the solution~\eqref{eq:prox-lin-method}.
\end{proof}

\subsection{Proof of Lemma~\ref{cor:fmodelbasedconvhat} }

\fmodelbasedconvhat*
\begin{proof}
Using again the substitution $s^t := \sqrt{\frac{\delta_t}{\lambda_t}}\hat{s}^t  = \sqrt{\frac{\delta}{\lambda}}\hat{s}^t  $ and returning to~\eqref{eq:rescaledhats} have again that 
\begin{align}
    w^{t+1}, \hat s^{t+1} = \argmin_{w,\hat s} \sqrt{\frac{\delta}{\lambda}}\hat s_{j_t} &+ c\Big(f_{j_t}(w^t)+ \langle \nabla f_{j_t}(w^t), w-w^t \rangle - \sqrt{\frac{\delta}{\lambda}}\hat s_{j_t}\Big)_+  \nonumber\\
    & + \frac{1}{2\lambda_t}\|w-w^t\|^2 + \frac{1}{2\lambda_t}\|\hat s-\hat s^t\|^2.\label{eq:rescaledhats2}
\end{align}
Multiplying the objective by $\sqrt{\frac{\lambda}{\delta}}$ and defining $\hat{f}_j = \sqrt{\frac{\lambda}{\delta}}f_j $ and $\rho_t := \sqrt{\frac{\delta}{\lambda}}\lambda_t= \frac{\sqrt{\lambda\delta}}{\sqrt{t+1}}$ gives
\begin{align}
    w^{t+1}, \hat s^{t+1} = \argmin_{w,\hat s} \hat s_{j_t} &+ c\Big(\hat f_{j_t}(w^t)+ \langle \nabla \hat{f}_{j_t}(w^t), w-w^t \rangle - \hat s_{j_t}\Big)_+  \nonumber\\
    & + \frac{1}{2\rho_t}\|w-w^t\|^2 + \frac{1}{2\rho_t}\|\hat s-\hat s^t\|^2.\label{eq:rescaledhats2}
\end{align}
The above is now the model-based method Algorithm \ref{alg:model-based-penalty} with step size $\rho_t$ applied to minimizing
\begin{align}
    \label{prob:pos-part-hat}
    \min_{w \in \mathbb{R}^d,\hat{s}\in \mathbb{R}^n} \frac{1}{n}\sum_{i=1}^n \Big(\hat{s}_i + c (\hat{f}_i(w) - \hat{s}_i)_+ \Big).
\end{align}
Since $f_i$ is $G_i$--Lipschitz, we have that $\hat{f}_i$ is $\sqrt{\frac{\lambda}{\delta}}G_i$--Lipschitz. Consequently by Lemma~\ref{lem:one-sided-model} the model is
 $$\hat{M}_i:= \big(1+c  \sqrt{\frac{\lambda}{\delta}G_i^2 + 1}\big)$$
Lipschitz.  Let $\hat{\mathsf{M}} \eqdef \sqrt{\tfrac{1}{n} \sum_{i=1}^n \hat{M}_i^2 } .$ Since $\rho_t = \frac{\sqrt{\lambda\delta}}{\sqrt{t+1}} =: \frac{\rho}{\sqrt{t+1}},$ we have by Corollary~\ref{cor:fmodelbasedconv} we have the following convergence
\begin{align}\label{eqn:estimate-convex-temp}
    \mathbb{E}\Big[\hat{f}(\bar{w}^{T}) - \hat{f}(w^*)\Big] \leq \frac{\|w^0-w^*\|^2 +\|\hat{s}^0-\hat{s}^*\|^2}{4\rho(\sqrt{T+2}-1) } + \frac{\hat{\mathsf{M}}^2\rho(1+\ln(T+1))}{2\sqrt{T+2}-2}.
\end{align}
Substituting back $s^t = \sqrt{\frac{\delta}{\lambda}}\hat{s}^t$,  $f = \sqrt{\frac{\delta}{\lambda}} \hat{f}$ and $\rho=\sqrt{\delta \lambda} $ gives~\eqref{eqn:estimate-convex-hat}.
% \begin{align}\label{eqn:estimate-convex-temp-2}
%     \mathbb{E}\Big[\hat{f}(\hat{w}^{T}) - \hat{f}(w^*)\Big] \leq \frac{\|w^0-w^*\|^2 +\|\hat{s}^0-\hat{s}^*\|^2}{4\sqrt{\delta \lambda}(\sqrt{T+2}-1) } + \frac{\hat{M}^2\sqrt{\delta \lambda}(1+\ln(T+1))}{2\sqrt{T+2}-2}.
% \end{align}

\end{proof}

\subsection{Proof of Lemma~\ref{L:common minimizers}}
%%%%RESTATE theorem%%%%
\commonminimizers*
\begin{proof}
Since both $f$ and $\phi_t$ are convex, their minimizers are exactly their critical points.
The gradient of $\phi_{i,t}$ is given by 
\begin{equation}\label{eq:gradphiit}
 \nabla \phi_{i,t}(w,s) =
 \frac{(f_{i}(w) - s_i + \delta)_+}{\delta + \lambda \Vert \nabla f_{i}(w^t) \Vert^2}
 \begin{pmatrix}
 \nabla f_{i}(w) \\
 -e_i
 \end{pmatrix}
 +
 \begin{pmatrix}
 0 \\ e_i
 \end{pmatrix}
\end{equation}
where $e_j \in \mathbb{R}^n$ denotes the $j$-th vector of the canonical basis.
%To compute a minimizer of $\phi_t$~\eqref{eq:L1sgd} we first drop the positivity part
Let  $(w^*,s^*_t)$ be a critical point of $\phi_t$~\eqref{eq:L1sgd} and thus satisfying
\begin{align}\label{eq:temp1smomsi41}
\frac{1}{n} \sum_{i=1}^n \frac{(f_{i}(w^*) - (s^*_t)_i + \delta)_+}{\delta + \lambda\Vert \nabla f_{i}(w^t) \Vert^2}  \nabla f_{i}(w^*) & = 0,\\
 \frac{(f_{i}(w^*) - (s^*_t)_i + \delta)_+}{\delta + \lambda\Vert \nabla f_{i}(w^t) \Vert^2}  &= 1, \quad \mbox{for }j=1, \ldots, n. \label{eq:temp2smomsi421} 
%u_i &\in \partial (f_{i}(w^*) - s_{i,t}^* + \delta)
\end{align}
% Due to~\eqref{eq:temp2smomsi421} we have that $u_i >0 $ and thus $f_{i}(w^*) - s_{i,t}^* + \delta >0.$ Consequently $u_i = f_{i}(w^*) - s_{i,t}^* + \delta$ and  
% % Let  $(w^*,s^*_t)$ be a critical point of $\phi_t$~\eqref{eq:L1sgd} and thus satisfying
% \begin{align}\label{eq:temp1smomsi4}
% \frac{1}{n} \sum_{i=1}^n \frac{f_{i}(w^*) - s_{i,t}^* + \delta}{\delta + \lambda\Vert \nabla f_{i}(w^t) \Vert^2}  \nabla f_{i}(w^*) & = 0,\\
%  \frac{f_{j}(w^*) - s_{j,t}^* + \delta}{\delta + \lambda\Vert \nabla f_{i}(w^t) \Vert^2}  &= 1, \quad \mbox{for }j=1, \ldots, n. \label{eq:temp2smomsi4}
% \end{align}
% Isolating $u_i$ in~\eqref{eq:temp2smomsi42} and inserting into~\eqref{eq:temp1smomsi4} ... the 
Inserting the second row of the above into the first gives
\begin{equation}\label{eq:selrtnso84hjr}
\frac{1}{n}  \sum_{i=1}^n  \nabla f_{i}(w^*) =0.
\end{equation}
Consequently $w^*$ is a critical point of $f$.
 
Now let $w^*$ be a critical point of $f$. By choosing $s_{i,t}^*$ given in~\eqref{eq:sstar} it easy to see that~\eqref{eq:temp2smomsi421} holds. Consequently, plugging~\eqref{eq:temp2smomsi421} into~\eqref{eq:temp1smomsi41} and using ~\eqref{eq:selrtnso84hjr}, we conclude that  $(w^*,s^*_t)$ is a critical point of $\phi_{t}$.
\end{proof}

\subsection{Proof of Lemma~\ref{L:online SGD equivalence} }
\onlineSGDequivalence*
\begin{proof}
%To condense notation let $z^t=(w^t,s^t) \in \mathbb{R}^d \times \mathbb{R}^n$.
% The online SGD method to~\eqref{eq:L1sgd} with a stepsize equal to $\gamma$ generates a sequence of iterates  with update according to 
% $$z^{t+1} = z^t - \gamma \nabla \phi_{j_t,t}(z^t).$$
% Let us verify that this corresponds to \eqref{alg:relaxedversion}.
Recall \eqref{eq:gradphiit}. Therefore, 
\begin{align*}
     \nabla_w \phi_{j_t,t}(w^t,s^t) &= \tau_t \nabla f_{j_t}(w^t) \\ 
     \nabla_{s_{j_t}} \phi_{j_t,t}(w^t,s^t) &= 1- \tau_t, \mbox{ and, } \\
     \nabla_{s_i}  \phi_{j_t,t}(w^t,s^t) &= 0, \quad \mbox{ for }i \neq j_t.
\end{align*}
Plugging the above gradients into~\eqref{eq:sgdview} gives \eqref{alg:relaxedversion}.
\end{proof}

%\subsection{Proof of Lemma~\ref{L:lower bound slack variable}}
%\lowerboundslackvariable*
%

\subsection{Proof of Theorem~\ref{T:cvconvexsmooth}}
%%%%%%% REAPEATING STATEMENT%%%%%%%%
\cvconvexsmooth*
%%%%%%% REAPEATING STATEMENT%%%%%%%%

\begin{proof}
Denote $z^t=(w^t,s^t)$. Consider  the estimate~\eqref{eq:conv1ststep} in Lemma \ref{L:lyapunov estimate}, together with the notation $\bar \sigma_t^*$ in~\eqref{eq:sigmat}.
Reordering the terms in~\eqref{eq:conv1ststep}, summing both sides for $t = 0, \ldots, T-1$  and dividing by $T$ we have (after cancellations due to a telescopic sum of positive terms) that
\begin{equation}\label{eq:tempo8jr5xx}
    \frac{2\gamma(1-\gamma)}{T}
    \sum\limits_{t=0}^{T-1}
    \mathbb{E}[\phi_{t}(z^t) -  \phi_{t}(z^*)]
    \leq
    \frac{1}{T}
    \mathbb{E}[\norm{z^{0} -z^*}^2_{\mD}]
    + \frac{2\gamma^2}{T} 
    \sum\limits_{t=0}^{T-1}
   \bar  \sigma_t^*.    
\end{equation}

Consider now that $w^* \in {\rm{argmin}}~f$ and $s_i^*=f_i(w^*)$.
In particular, from Lemma \ref{L:lyapunov estimate} we have that $\bar \sigma_t^* \leq \sigma$.
Moreover, from the definition of $\phi_t$ in~\eqref{eq:L1sgd} is easy to see that $\phi_t(z^*) \leq \inf f + \frac{\delta}{2}$. Furthermore using Lemma~\ref{lem:lowerphi} we have that

\begin{equation*}
    \phi_t(z^t) \geq 
    \inf f +
    \frac{\delta}{2} - \lambda L_{\max}\sigma
    +(1- \lambda L_{\max} )(f(w^t) - \inf f).
\end{equation*}
Using these observations in~\eqref{eq:tempo8jr5xx} we have 
\begin{equation*}
    \frac{2\gamma(1-\gamma)}{T}
    \sum\limits_{t=0}^{T-1}
    \mathbb{E}[
    (1- \lambda L_{\max} )(f(w^t) - \inf f) - \lambda L_{\max} \sigma 
    ]
    \leq
    \frac{1}{T}
    \mathbb{E}[\norm{z^{0} -z^*}_{\mD}^2]
    + {2\gamma^2 } \sigma.
\end{equation*}
Reordering the terms, and using Jensen inequality, we get
\begin{equation*}
    {2\gamma(1-\gamma)(1- \lambda L_{\max} )}
    \mathbb{E}[
    f(\bar w^T) - \inf f  
    ]
    \leq
    \frac{1}{T}
    \mathbb{E}[\norm{z^{0} -z^*}_{\mD}^2]
    +2\gamma  \sigma (\gamma + \lambda L_{\max}(1-\gamma))
    % + {2\gamma^2\lambda} v
    % +   2\gamma(1-\gamma)\sigma^2\lambda^2
    .
\end{equation*}
Use  the fact that $\gamma <1$ and $2 \lambda L_{\max} <1$ so that we can divide by nonzero constants, and finally obtain
\begin{equation*}
    \mathbb{E}[
    f(\bar w^T) - \inf f  
    ]
    \leq
    \frac{1}{2\gamma(1-\gamma)(1- \lambda L_{\max} )T}
    \mathbb{E}[\norm{z^{0} -z^*}_{\mD}^2]
    +  \sigma \frac{\gamma + \lambda L_{\max}(1-\gamma)}{(1-\gamma)(1- \lambda L_{\max})}.
    %  +   \frac{{\gamma} v(1-\gamma)\sigma^2\lambda}{(1-\gamma)(1- 2\lambda L_{\max})}.
\end{equation*}
\end{proof}

\section{Prox-linear as a Model Based Method}
\label{sec:model-prox}
Here we clarify the connection between the method~\eqref{eq:prox-lin-method} and the prox-linear method.  To do so,  we adopt much of the notation of model based methods~\cite{Davis2019}.  We then show how to adapt the results given in~\cite{Davis2019} to our setting.

Define $h_i:\mathbb{R}^2 \to \mathbb{R}$ and $c_i:\mathbb{R}^{d+n} \to \mathbb{R}^2$ as follows: for $z=(z_1,z_2)\in \R^2$, and $u=(w,s)\in \R^{d+n}$ let
\begin{align*}
    h_i(z) &:= z_2 + c(z_1)_+, \\
    c_i(w,s) &:= \begin{bmatrix} f_i(w) -s_i \\ s_i\end{bmatrix}.
\end{align*}
In a slight abuse of notation, we will write both $c_i(u)$ and $c_i(w,s)$ interchangeably when $u=(w,s)$. With the above, \eqref{prob:pos-part} is equivalent to the problem
\begin{align}\label{prob:penalty}
    % \tag{R}
    \min_{(w,s)\in \mathbb{R}^{d+n}} \frac{1}{n}\sum_{i=1}^n h_i(c_i(w,s)).
\end{align}
Let us define the objective function of \eqref{prob:penalty} as
\[g(w,s) := \frac{1}{n}\sum_{i=1}^n h_i(c_i(w,s)).\]
In the philosophy of mode-based stochastic proximal point \cite{Davis2019}, we construct the prox-linear model of the objective function: for $u=(w,s) \in \R^{d+n}$ and $y=(v,q) \in \R^{d+n}$, define
\begin{align}
    \label{eqn:model_of_g}
    \begin{split}
    g_u(y;i) :&= h_i\big(c_i(u) + \langle \nabla c_i(u), y-u \rangle\big) \\
    &= q_i + c \big(f_i(w) + \langle \nabla f_i(w), v-w \rangle - q_i \big)_+.
    \end{split}
\end{align}
In iteration $t$, if $j_t\in[n]$ is drawn at random, the model-based update is given by
\begin{align*}
    u^{t+1} = \arg \min_u h_{j_t}(c_{j_t}(u^t) + \langle\nabla c_{j_t}(u^t), u-u^t\rangle) + \frac{1}{2\lambda_t}\|u-u^t\|^2,
\end{align*}
where $\lambda_t>0$ is the step size. Rewriting in terms of $u^t = (w^t,s^t)$ and plugging in the definition of $h_i, c_i$ we have~\eqref{prob-model-based-update}.

\begin{align}\label{prob-model-based-update}
    w^{t+1}, s^{t+1} = \arg \min_{w,s} s_{j_t} + c \Big(f_{j_t}(w^t) + \langle\nabla f_{j_t}(w^t), w-w^t\rangle - s_{j_t} \Big)_+ + \frac{1}{2\lambda_t}\Big(\|w-w^t\|^2 + \|s-s^t\|^2\Big).
\end{align}
We now give a closed-form solution for the iterate update \eqref{prob-model-based-update}.
\begin{lemma} \label{lem:unscaled-model-based-update}
The solution to \eqref{prob-model-based-update} is given by 
\begin{align*}
    \tau_t &:= \min\Big\{\lambda_t c, \frac{\big(f_{j_t}(w^t) - s_{j_t}^t + \lambda_t\big)_+}{\|\nabla f_{j_t}(w^t)\|^2+1} \Big\}, \\
    w^{t+1} &= w^t - \tau_t \nabla f_{j_t}(w^t), \\
    s^{t+1}_j &= s_j^t - \lambda_t + \tau_t, \quad \text{if } j = j_t, \\
    s^{t+1}_j &= s_j^t, \quad \text{if } j \neq j_t.
\end{align*}
\end{lemma}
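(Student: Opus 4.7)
The plan is to observe that this lemma is the $\delta_t = \lambda_t$ specialization of Lemma~\ref{lem:modelupdate}, but rather than invoking that result (whose proof relies on a two-scale rescaling to absorb unequal penalties), I would give a direct derivation from Lemma~\ref{lem:max-of-lin-update} in four steps.

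First, I would note that the problem decouples across the slack coordinates $s_j$ with $j \neq j_t$: such $s_j$ appear only through the quadratic $\frac{1}{2\lambda_t}(s_j - s_j^t)^2$ and are not touched by the $(\cdot)_+$ term, so their unique minimizers are $s_j^{t+1} = s_j^t$. This reduces the problem to an optimization in the pair $y := (w, s_{j_t}) \in \R^{d+1}$.

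Second, I would complete the square to fold the linear $s_{j_t}$ term into the quadratic: since
\begin{equation*}
s_{j_t} + \frac{1}{2\lambda_t}(s_{j_t} - s_{j_t}^t)^2 = \frac{1}{2\lambda_t}\bigl(s_{j_t} - (s_{j_t}^t - \lambda_t)\bigr)^2 + \text{const},
\end{equation*}
the effective center shifts from $s_{j_t}^t$ to $s_{j_t}^t - \lambda_t$. Setting $y^0 := (w^t,\, s_{j_t}^t - \lambda_t)$, the remaining problem becomes
\begin{equation*}
y^{t+1} = \arg\min_y\, c\bigl(f_{j_t}(w^t) + \langle \nabla f_{j_t}(w^t), w - w^t\rangle - s_{j_t}\bigr)_+ + \frac{1}{2\lambda_t}\|y - y^0\|^2.
\end{equation*}

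Third, I would cast the argument of the positive part in the canonical form $\kappa + \langle a, y - y^0\rangle$ required by Lemma~\ref{lem:max-of-lin-update}. Taking $a := (\nabla f_{j_t}(w^t), -1)$ and $\kappa := f_{j_t}(w^t) - s_{j_t}^t + \lambda_t$, the identity $f_{j_t}(w^t) + \langle \nabla f_{j_t}(w^t), w - w^t\rangle - s_{j_t} = \kappa + \langle a, y - y^0\rangle$ holds directly by expanding $y - y^0 = (w - w^t,\, s_{j_t} - s_{j_t}^t + \lambda_t)$.

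Fourth, using the positive homogeneity $c(x)_+ = (cx)_+$ valid for $c \geq 1 \geq 0$, I rewrite the objective as $(c\kappa + \langle ca, y - y^0\rangle)_+ + \frac{1}{2\lambda_t}\|y - y^0\|^2$ and apply Lemma~\ref{lem:max-of-lin-update} with vector $ca$, scalar $c\kappa$, and step $\beta = \lambda_t$. The lemma yields
\begin{equation*}
y^{t+1} = y^0 - \min\Bigl\{\lambda_t,\, \frac{(c\kappa)_+}{\|ca\|^2}\Bigr\} (ca) = y^0 - \min\Bigl\{\lambda_t c,\, \frac{(\kappa)_+}{\|a\|^2}\Bigr\} a,
\end{equation*}
after pulling the factor of $c$ out of both the $\min$ and the denominator. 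Substituting $\|a\|^2 = \|\nabla f_{j_t}(w^t)\|^2 + 1$ and reading off the two blocks of $y^{t+1}$ gives exactly the stated $\tau_t$, $w^{t+1}$, and $s_{j_t}^{t+1} = (s_{j_t}^t - \lambda_t) + \tau_t$.

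There is no substantive obstacle here; the only step that deserves care is the positive-homogeneity trick $c(x)_+ = (cx)_+$, which is what allows the external multiplier $c$ to be absorbed into the form required by Lemma~\ref{lem:max-of-lin-update}, after which the result drops out by direct substitution.
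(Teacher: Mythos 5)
Your proof is correct and follows essentially the same route as the paper's: decouple the untouched slack coordinates, complete the square in $s_{j_t}$ to shift the center to $s_{j_t}^t - \lambda_t$, and reduce to a single application of Lemma~\ref{lem:max-of-lin-update}. The only (cosmetic) deviation is in how you handle the penalty multiplier $c$: the paper divides the whole reduced objective by $c$ so that the quadratic coefficient becomes $\frac{1}{2\lambda_t c}$ and applies Lemma~\ref{lem:max-of-lin-update} with $\beta = \lambda_t c$, whereas you keep $\beta = \lambda_t$ and use positive homogeneity $c(x)_+ = (cx)_+$ to absorb $c$ into the linear data, then pull it back out of the $\min$ at the end. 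These are algebraically equivalent, and both give the stated update.
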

\begin{proof}
For any $i$, we have $s_i + \frac{1}{2\lambda_t} (s_i-s_i^t)^2 = \frac{1}{2\lambda_t}(s_i - s_i^t + \lambda_t)^2 +s_i^t - \frac{\lambda_t}{2}$. Note that $s_i^t - \frac{\lambda_t}{2}$ is constant in $w,s$. Thus, problem \eqref{prob-model-based-update} is equivalent to solving
\begin{align*}
    w^{t+1}, s^{t+1} = &\arg \min_{w,s} c \Big(f_{j_t}(w^t) + \langle\nabla f_{j_t}(w^t), w-w^t\rangle - s_{j_t} \Big)_+ + \frac{1}{2\lambda_t}\Big(\|w-w^t\|^2 + (s_{j_t} - (s^t_{j_t}-\lambda_t))^2\Big)\\
    &\text{subject to}\quad s_j = s_j^t, \quad  j \neq j_t.
\end{align*}
Dividing by $c$ and applying Lemma~\ref{lem:max-of-lin-update} with $y^0 = (w^t, s_{j_t}^t-\lambda_t)$, $a=(\nabla f_{j_t}(w^t), -1)$, $\beta = \lambda_tc$ and $c=f_{j_t}(w^t) - s_{j_t}^t + \lambda_t$, we get the claimed update.
\end{proof}

We next show properties of the model function $g_u(y;i)$.
\begin{lemma}\label{lem:one-sided-model}
Let $P$ be the uniform probability measure on $\{1,\dots,n\}$, i.e.\ $P(\{i\})=\frac{1}{n}$ for all $i\in[n]$. Let further $u=(w,s) \in \mathbb{R}^{d+n}$ and $y=(v,q)\in \mathbb{R}^{d+n}$. For $i\in[n]$ recall 
\begin{align*}
    g_u(y;i) = h_i(c_i(u) + \langle \nabla c_i(u), y-u \rangle) = q_i + c \big(f_i(w) + \langle \nabla f_i(w), v-w \rangle - q_i \big)_+.
\end{align*}
Then, it holds:
\begin{enumerate}
    \item[(B1)] It is possible to generate i.i.d. realizations $j_1,j_2,\dots \sim P$.
    \item[(B2)] We have $\mathbb{E}_{j\sim P}[g_u(u;j)] = \frac1n \sum_{i=1}^n g_u(u;i) = g(w,s)$ for all $u\in \R^{d+n}$. Further, if $f_i$ is convex for all $i\in[n]$, then
    \begin{align*}
        \mathbb{E}_{j\sim P}[g_u(y;j)]=\frac1n \sum_{i=1}^n g_u(y;i) \leq g(v,q) \quad \forall u,y \in \mathbb{R}^{d+n}.
    \end{align*}
    \item[(B3)] The mapping $y\mapsto g_u(y;i)$ is convex for all $u$ and all $i\in [n]$.
    \item[(B4)] If $f_i$ is $G_i$-Lipschitz for all $i\in[n]$, define $M_i:= \big(1+c  \sqrt{G_i^2 + 1}\big)$ and $\mathsf{M}:= \sqrt{\frac1n \sum_{i=1}^n M_i^2}$. Then, 
    \begin{align*}
        g_u(u;i) - g_u(y;i) \leq M_i \|u-y\| \quad \forall u,y \in \mathbb{R}^{d+n}.
    \end{align*}
\end{enumerate}
\end{lemma}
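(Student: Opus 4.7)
The plan is to verify each of the four properties in turn, exploiting the fact that the only nonlinearity in $y$ is a positive part composed with an affine map. Concretely, writing $y=(v,q)$ and
\begin{equation*}
\ell_{u,i}(y) := f_i(w) + \langle \nabla f_i(w), v-w\rangle - q_i,
\end{equation*}
we have $g_u(y;i) = q_i + c\bigl(\ell_{u,i}(y)\bigr)_+$, which is affine in $q_i$ plus the positive part of an affine function of $y$.

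Property (B1) is immediate from the construction of uniform sampling on $[n]$. For the equality in (B2), I would plug in $y=u$, giving $\ell_{u,i}(u) = f_i(w) - s_i$, so $g_u(u;i) = s_i + c(f_i(w)-s_i)_+$; averaging over $i$ recovers $g(w,s)$ by definition. For the inequality, convexity of $f_i$ yields $f_i(w) + \langle\nabla f_i(w), v-w\rangle \leq f_i(v)$, hence $\ell_{u,i}(y) \leq f_i(v) - q_i$; since $t\mapsto(t)_+$ is nondecreasing, $g_u(y;i) \leq q_i + c(f_i(v)-q_i)_+$, and averaging gives $g(v,q)$. Property (B3) is immediate as well: $\ell_{u,i}$ is affine in $y$, $(\cdot)_+$ is convex and nondecreasing, so $(\ell_{u,i}(\cdot))_+$ is convex in $y$, and adding the linear term $q_i$ preserves convexity.

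For (B4) I would bound the subdifferential of $y\mapsto g_u(y;i)$. Denoting by $\mathbf{e}_i$ the coordinate vector in the $s$-block that selects $q_i$, and by $0_d$ the zero vector in the $w$-block, the chain rule for the composition with $(\cdot)_+$ gives
\begin{equation*}
\partial_y g_u(y;i) \subseteq \Bigl\{(0_d, \mathbf{e}_i) + c\alpha (\nabla f_i(w), -\mathbf{e}_i) : \alpha \in [0,1]\Bigr\}.
\end{equation*}
Any such subgradient has norm at most $1 + c\sqrt{\|\nabla f_i(w)\|^2 + 1} \leq 1 + c\sqrt{G_i^2+1} = M_i$, using the triangle inequality and the Lipschitz bound $\|\nabla f_i(w)\|\leq G_i$. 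Convexity of $y\mapsto g_u(y;i)$ then yields, for any $\xi \in \partial_y g_u(u;i)$, the one-sided inequality $g_u(u;i) - g_u(y;i) \leq \langle \xi, u-y\rangle \leq M_i\|u-y\|$, as claimed.

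I do not anticipate a serious obstacle; the key conceptual step is simply to separate the affine dependence of $\ell_{u,i}$ on $y$ from the nonlinear $(\cdot)_+$ and $q_i$ pieces, after which each property reduces to an elementary convex-analytic fact. The only mild care needed is the bookkeeping for the subgradient bound in (B4), where one must keep track of both the direct dependence of $g_u(y;i)$ on $q_i$ and the indirect dependence through $\ell_{u,i}(y)$; both contribute to the constant $M_i$ via the $+1$ and the $c\sqrt{G_i^2+1}$ terms respectively.
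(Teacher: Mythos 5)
Your argument is correct and, for (B1)--(B3), follows the paper's proof essentially verbatim (plugging in $y=u$, convexity of $f_i$ plus monotonicity of $(\cdot)_+$, and composition of convex with affine). For (B4) you take a slightly different but equally valid route: the paper applies the algebraic identity $(a)_+ - (b)_+ \leq (a-b)_+$ and then bounds the resulting linear form by Cauchy--Schwarz, whereas you bound the norm of every subgradient of $y\mapsto g_u(y;i)$ by $M_i$ and invoke the convex subgradient inequality $g_u(u;i) - g_u(y;i) \le \langle \xi, u-y\rangle$ for $\xi\in\partial_y g_u(u;i)$. Both approaches are elementary and of comparable length; yours has the minor advantage of making the (local) $M_i$-Lipschitz continuity of the model explicit, while the paper's is marginally more self-contained in that it avoids appealing to subdifferential calculus. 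One small bookkeeping point worth making explicit: when you pass from the subgradient $(0_d,\mathbf{e}_i) + c\alpha(\nabla f_i(w),-\mathbf{e}_i)$ to the bound $1 + c\sqrt{G_i^2+1}$, the triangle inequality and $\alpha\le 1$ are both used, and this is correct since $\alpha\ge 0$ as well.
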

\begin{proof}
\begin{enumerate}
    \item[(B1)]  Evident.
    \item[(B2)] The first statement follows immediately from the definition of $g_u(\cdot;i)$ and $g$. For the second statement, due to convexity of $f_i$ we have
    \begin{align*}
        c_i(u) + \langle c_i(u), y-u \rangle = 
        \begin{bmatrix}
        f_i(w) + \langle \nabla f_i(w), v-w \rangle - q_i \\ q_i \end{bmatrix} 
        \overset{\text{componentwise}}{\leq}
        \begin{bmatrix} f_i(v) - q_i \\ q_i \end{bmatrix} = c_i(y).
    \end{align*}
    Since $h_i$ is monotone in each component, we get 
    \begin{align*}
    g_u(y;i) = h_i(c_i(u) + \langle c_i(u), y-u \rangle) \leq h_i(c_i(y)).
    \end{align*}
    Summing over $=1,\dots,n$ and dividing by $n$ gives the result.
    \item[(B3)] The function $h_i$ is convex as $(\cdot)_+$ is convex. The function $y \mapsto g_u(y;i)$ is a composition of a convex and a linear mapping and therefore convex.
    \item[(B4)] Denote with $e_i$ the $i$-t element of the standard Euclidean basis. Using $(a)_+ - (b)_+ \leq (a-b)_+ $
    we have
    \begin{align*}
        g_u(u;i) - g_u(y;i) &= s_i + c \big(\underbrace{f_i(w) - s_i}_{=:a} \big)_+ - q_i - c \big(\underbrace{f_i(w) + \langle \nabla f_i(w), v-w \rangle - q_i }_{=:b}\big)_+ \\
        &\leq
        s_i - q_i +c\big( \langle \nabla f_i(w), w-v \rangle + q_i-s_i  \big)_+\\
        & = s_i - q_i + c\Big([\nabla f_i(w), -e_i]^T\begin{bmatrix}w - v \\ s-q\end{bmatrix}\Big)_+ \\
        &\leq s_i - q_i + c\|[\nabla f_i(w), -e_i]\|\cdot \|y-u\| \leq \big(1+c\|[\nabla f_i(w), -e_i]\|\big) \|y-u\|.
    \end{align*}
    With $\|[\nabla f_i(w), -e_i]\| = \sqrt{\|\nabla f_i(w)\|^2 + 1} \leq \sqrt{G_i^2 + 1} $, we conclude
    \begin{align*}
        g_u(u;i) - g_u(y;i) \leq \big(1+c  \sqrt{G_i^2 + 1}\big)\|y-u\| \quad \forall x,y.
    \end{align*}
\end{enumerate}
\end{proof}
\begin{corollary}
Let $f_i$ be convex and $G_i$-Lipschitz for all $i\in[n]$. Then, $g_u(y;i)$ is a stochastic one-sided model in the sense of \cite[Assum.\ B]{Davis2019}.
\end{corollary}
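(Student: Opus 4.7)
The plan is to observe that this corollary is an immediate bookkeeping consequence of Lemma~\ref{lem:one-sided-model}: Davis \emph{et al.}'s Assumption~B for a stochastic one-sided model consists of exactly four ingredients, and each one is supplied verbatim by a clause of that lemma. So essentially no new work is required; I only need to exhibit the matching.

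Concretely, I would proceed as follows. First, I would quote (or paraphrase) the four components of \cite[Assum.\ B]{Davis2019}: (i) a sampling oracle generating i.i.d.\ indices from a fixed distribution $P$, (ii) an ``expected model'' that agrees with the objective at the anchor point and lower-bounds the objective in expectation elsewhere, (iii) convexity of each model in its free variable, and (iv) a uniform (in expectation, $L^2$ sense) Lipschitz-type one-sided envelope. Then I would pair them off: (i) is (B1); (ii) is (B2), where equality at $y=u$ gives the exact identification $\mathbb{E}_{j\sim P}[g_u(u;j)] = g(u)$ and the inequality $\mathbb{E}_{j\sim P}[g_u(y;j)]\leq g(y)$ provides the required one-sided bound; (iii) is (B3); and (iv) is (B4), with the Lipschitz constants $M_i$ and $\mathsf{M}=\sqrt{\tfrac{1}{n}\sum_i M_i^2}$ serving as the pointwise and $L^2(P)$ Lipschitz moduli respectively.

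The only mild subtlety is the convexity hypothesis in the corollary, which I would note is used exclusively to invoke clause (B2) of Lemma~\ref{lem:one-sided-model} (since (B1), (B3), (B4) hold without convexity, while (B2) requires $f_i$ convex for the inequality $g_u(y;i)\leq h_i(c_i(y))$). The Lipschitz hypothesis, in turn, is exactly what powers clause (B4), yielding finite $M_i$ and hence finite $\mathsf{M}$. There is no real obstacle here; the proof is a one-line reference to the four clauses of Lemma~\ref{lem:one-sided-model}, together with the remark that $\mathsf{M}^2 = \mathbb{E}_{j\sim P}[M_j^2]<\infty$ since each $M_i$ is finite under the stated Lipschitz assumption.
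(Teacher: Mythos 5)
Your proposal is correct and is essentially identical to the paper's proof: the paper simply observes that clauses (B1)--(B4) of Lemma~\ref{lem:one-sided-model} coincide with (B1)--(B4) of \cite[Assum.\ B]{Davis2019} under the parameter specialization $r=0$, $\tau=\eta=0$, $\mathsf{L}=\mathsf{M}$. The only thing you omit is that explicit parameter dictionary, which you could add for completeness, but the substance is the same.
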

\begin{proof}
The statements (B1)-(B4) in Lemma \ref{lem:one-sided-model} coincide with (B1)-(B4) in \cite[Assum.\ B]{Davis2019} for (in the notation of \cite{Davis2019}) $r=0$, and $\tau=\eta=0$, $\mathsf{L} = \mathsf{M}$.
\end{proof}
\subsection{Convergence analysis}
%
% Algorithm \ref{alg:model-based-davis} restates a simplified version of \cite[Alg.\ 4.1]{Davis2019}: here, we do not consider a regularization function ($r=0$ in the notation of \cite{Davis2019}) and write $\varphi_{x}(y;\xi)$ for the quantity denoted by $f_x(y,\xi)$ in \cite{Davis2019}. 
% \begin{algorithm}
% \begin{algorithmic}[1]
% \State {\bf Inputs:}  step sizes $\beta_t >0$.
% \State {\bf Initialize:} $x^0 \in\mathbb{R}^p$.
% \For{$t =0,\ldots, T$} 
% \State Sample $\xi_t\sim P$.
% \State Set $x^{t+1} = \argmin_x \varphi_{x^t}(x;\xi_t) + \tfrac{\beta_t}{2}\|x-x^t\|^2$.
% \EndFor
% \end{algorithmic}
% \caption{}
% \label{alg:model-based-davis}
% \end{algorithm}
% %
% %
% \begin{lemma}
% Let $\varphi_{x}(\cdot;\xi)$ be convex. Let $\{x^t\}$ be generated by Algorithm \ref{alg:model-based-davis} and set $\lambda_t=\beta^{-1}$. Then, it holds:
% \begin{align*}
%     \mathbb{E}_t\|x^{t+1}-x^*\|^2 \leq \|x^t -x^*\|^2 - \frac{1}{\beta_t} \mathbb{E}_t[\varphi(x^{t+1}-\varphi(x^*)] + \frac{2\mathsf{L}^2}{\beta_t^2}.
% \end{align*}
% \end{lemma}
%
\begin{algorithm}
\begin{algorithmic}[1]
\State {\bf Inputs:}  step sizes $\lambda_t >0,$ penalty multiplier $c \geq 1$.
\State {\bf Initialize:} $w^0 \in\mathbb{R}^d$ and $s_i^0 \in \mathbb{R}$ for $i=1,\ldots, n.$
\For{$t =0,\ldots, T$} 
\State Sample $j_t$ randomly from $[n]$.
\State Compute $\tau_t = \min\Big\{\lambda_t c, \frac{\big(f_{j_t}(w^t) - s_{j_t}^t + \lambda_t\big)_+}{\|\nabla f_{j_t}(w^t)\|^2+1} \Big\}$ and update
\State $\displaystyle    w^{t+1} \;= w^t - \tau_t \nabla f_{j_t}(w^t) $
\State $\displaystyle   
    s^{t+1}_i \;= \begin{cases}
      s_i^t - \lambda_t + \tau_t, \quad &\text{if } i = j_t, \\
      s_i^t, \quad &\text{else.}
    \end{cases}
$
\EndFor
\State {\bf Output:} $w^{T+1}, s^{T+1}$
\end{algorithmic}
\caption{}
\label{alg:model-based-penalty}
\end{algorithm}

\begin{proposition}\label{prop:apply-davis}
Let $f_i$ be convex and $G_i$-Lipschitz for all $i\in[n]$. Let $(w^*,s^*) \in \argmin_{w,s} g(w,s)$. 
Let the iterates $(w^t,s^t)$ be generated by Algorithm \ref{alg:model-based-penalty} with step sizes $\lambda_t>0$. Then, it holds
\begin{align}\label{eqn:davis-ineq-convex}
    2\lambda_t \mathbb{E}\Big[g(w^{t+1},s^{t+1}) - g(w^*,s^*)\Big]  \leq \mathbb{E}\|(w^t,s^t) - (w^*,s^*)\|^2  - \mathbb{E}\|(w^{t+1},s^{t+1}) - (w^*,s^*)\|^2 + 2\mathsf{M}^2 \lambda_t^2.
\end{align}
Define $\bar{w}^T := \frac{1}{T+1}\sum_{t=0}^{T}\lambda_t w^{t+1}$ and $\bar{s}^T := \frac{1}{T+1}\sum_{t=0}^{T} \lambda_t s^{t+1}$. Choosing $\lambda_t = \frac{\lambda}{\sqrt{t+1}}$, for some $\lambda>0$, we have
\begin{align}\label{eqn:davis-convex-anytime}
    \mathbb{E}\Big[g(\bar{w}^{T},\bar{s}^{T}) - g(w^*,s^*)\Big] \leq \frac{\|w^0-w^*\|^2 +\|s^0-s^*\|^2}{4\lambda(\sqrt{T+2}-1) } + \frac{\mathsf{M}^2\lambda(1+\ln(T+1))}{2\sqrt{T+2}-2}.
\end{align}
Choosing $\lambda_t = \frac{\lambda}{\sqrt{T+1}}$ instead, we get
\begin{align}\label{eqn:davis-convex-constant}
    \mathbb{E}\Big[g(\bar{w}^{T},\bar{s}^{T}) - g(w^*,s^*)\Big] \leq \frac{\|w^0-w^*\|^2 +\|s^0-s^*\|^2}{2\lambda\sqrt{T+1}} + \frac{\mathsf{M}^2\lambda}{\sqrt{T+1}} = \mathcal{O}(\frac{1}{\sqrt{T+1}}),
\end{align}
\end{proposition}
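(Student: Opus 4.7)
The proof is an application of the general convergence theory for stochastic model-based proximal-point methods (as in \cite{Davis2019}). The crucial prerequisite has already been established in Lemma~\ref{lem:one-sided-model}: the models $g_u(y;i)$ defined in \eqref{eqn:model_of_g} are exact at $u$ in expectation (B2), a pointwise lower bound on $g$ in expectation (B2 again), convex in $y$ (B3), and the gap $g_u(u;i) - g_u(y;i)$ is Lipschitz in $y$ with mean-square constant $\mathsf{M}^2$ (B4). The plan is to use (B3) for strong convexity of the prox subproblem, (B4) to control the model error against the distance moved, and (B2) to pass from models to values of $g$.

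First I would derive the one-step recursion. Writing $u^t=(w^t,s^t)$ and $u^*=(w^*,s^*)$, property (B3) makes the map $u \mapsto g_{u^t}(u;j_t) + \tfrac{1}{2\lambda_t}\|u-u^t\|^2$ into a $(1/\lambda_t)$-strongly convex function, whose unique minimizer is $u^{t+1}$ by Lemma~\ref{lem:unscaled-model-based-update}. The resulting three-point inequality at $u=u^*$ reads
\[
2\lambda_t\bigl[g_{u^t}(u^{t+1};j_t) - g_{u^t}(u^*;j_t)\bigr] + \|u^{t+1}-u^t\|^2 + \|u^*-u^{t+1}\|^2 \leq \|u^*-u^t\|^2.
\]
To convert the model values into values of $g$, decompose the bracket as $[g_{u^t}(u^t;j_t) - g_{u^t}(u^*;j_t)] + [g_{u^t}(u^{t+1};j_t) - g_{u^t}(u^t;j_t)]$. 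The second piece is bounded below by $-M_{j_t}\|u^{t+1}-u^t\|$ via (B4), and Young's inequality $2\lambda_t M_{j_t}\|u^{t+1}-u^t\| \leq \|u^{t+1}-u^t\|^2 + \lambda_t^2 M_{j_t}^2$ absorbs this cross term into the quadratic already on the right-hand side. Taking conditional expectation over $j_t$ and applying (B2) in the forms $\mathbb{E}_{j_t}[g_{u^t}(u^t;j_t)] = g(u^t)$ and $\mathbb{E}_{j_t}[g_{u^t}(u^*;j_t)] \leq g(u^*)$ yields exactly the recursion \eqref{eqn:davis-ineq-convex} (up to an innocuous index shift between $g(u^t)$ and $g(u^{t+1})$ governed by the precise tuning of Young's inequality).

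With the recursion in hand, I would telescope over $t = 0,\ldots,T$ to obtain
\[
\sum_{t=0}^T 2\lambda_t\,\mathbb{E}\bigl[g(u^{t+1}) - g(u^*)\bigr] \;\leq\; \|u^0-u^*\|^2 + 2\mathsf{M}^2 \sum_{t=0}^T \lambda_t^2.
\]
Since $g$ is convex (Lemma~\ref{lem:equivpenalty}), Jensen's inequality applied to the weighted average $\bar u^T$ (interpreting the weights as $\lambda_t/\sum_s \lambda_s$) gives
\[
\mathbb{E}\bigl[g(\bar u^T) - g(u^*)\bigr] \;\leq\; \frac{\|u^0 - u^*\|^2 + 2\mathsf{M}^2 \sum_{t=0}^T \lambda_t^2}{2\sum_{t=0}^T \lambda_t}.
\]
The two stated rates then follow from elementary bounds on these sums: for $\lambda_t = \lambda/\sqrt{t+1}$, integral comparison gives $\sum_{t=0}^T \lambda_t \geq 2\lambda(\sqrt{T+2}-1)$ and $\sum_{t=0}^T \lambda_t^2 \leq \lambda^2(1 + \ln(T+1))$, leading to \eqref{eqn:davis-convex-anytime}; for the constant choice $\lambda_t = \lambda/\sqrt{T+1}$, the two sums evaluate immediately to $\lambda\sqrt{T+1}$ and $\lambda^2$, giving \eqref{eqn:davis-convex-constant}.

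The main obstacle will be the step that blends (B4) and Young's inequality with the three-point identity to pass from random model values to $g$-values; the rest is bookkeeping. A cleaner alternative would be simply to observe that Lemma~\ref{lem:one-sided-model} establishes precisely Assumption~B of \cite{Davis2019} with parameters $\tau=\eta=0$ and $\mathsf{L}=\mathsf{M}$, and then to invoke their Theorem~4.1 as a black box, which is the structure the paper itself already alludes to.
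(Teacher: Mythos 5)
Your approach matches the paper's: it verifies that Lemma~\ref{lem:one-sided-model} instantiates Assumption~B of \cite{Davis2019} (with $\tau=\eta=0$, $\mathsf{L}=\mathsf{M}$), obtains the one-step inequality from their Lemma~4.2 (Eq.~(4.8)), telescopes, applies Jensen to the $\lambda_t$-weighted average, and uses the same integral comparisons for $\sum 1/\sqrt{t+1}$ and $\sum 1/(t+1)$. One small caveat on your from-scratch sketch of the one-step recursion: combining the three-point inequality with (B4) and Young's inequality as you describe yields
\begin{equation*}
2\lambda_t\,\mathbb{E}\big[g(u^t)-g(u^*)\big] \;\le\; \mathbb{E}\|u^t-u^*\|^2 - \mathbb{E}\|u^{t+1}-u^*\|^2 + \mathsf{M}^2\lambda_t^2,
\end{equation*}
with $g(u^t)$ on the left rather than $g(u^{t+1})$. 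Replacing this by $g(u^{t+1})$, as in \eqref{eqn:davis-ineq-convex}, is not a matter of re-tuning Young's inequality: since $u^{t+1}$ is random, bounding $\mathbb{E}_{j_t}[g_{u^t}(u^{t+1};j_t)]$ below by $g(u^{t+1})$ minus an $\mathcal{O}(\lambda_t^2\mathsf{M}^2)$ slack is exactly the nontrivial content of \cite[Lem.~4.2]{Davis2019}. Your version is in fact tighter by a constant and telescopes equally well, but it would produce an average over $w^0,\dots,w^T$ rather than the stated $w^1,\dots,w^{T+1}$; the black-box invocation you mention as the ``cleaner alternative'' is what the paper actually does and is the route that reproduces the proposition as written.
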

\begin{proof}
We apply the theory of \cite{Davis2019} with $\varphi=g$, $x^t=(w^t,s^t)$, $\beta_t=\lambda_t^{-1}$ and $\bar \rho=\eta=\tau=0$.
Using (4.8) of \cite[Lem.\ 4.2]{Davis2019} with $x=(w^*,s^*)$ and taking expectation yields \eqref{eqn:davis-ineq-convex}. 
Now summing \eqref{eqn:davis-ineq-convex} from $t=0,\dots,T$ and dividing by $\sum_{t=0}^{T} \lambda_t$ yields
\begin{align}
    \tfrac{1}{\sum_{t=0}^{T}\lambda_t} \sum_{t=0}^{T} \lambda_t \mathbb{E}\Big[g(w^{t+1},s^{t+1}) - g(w^*,s^*)\Big] \leq \frac{\|w^0-w^*\|^2 +\|s^0-s^*\|^2}{2\sum_{t=0}^T\lambda_t} + \frac{\mathsf{M}^2 \sum_{t=0}^T\lambda_t^2}{\sum_{t=0}^T\lambda_t}.\label{eq:tempxo8lx5x5}
\end{align}
Using convexity of $g$ and Jensen's inequality we can estimate the left-hand side from below by $\mathbb{E}\Big[g(\bar{w}^{T},\bar{s}^{T}) - g(w^*,s^*)\Big]$. Using the integral bound, 
\[ \int_{N}^{M+1} h(x) dx \; \leq \;  \sum_{n=N}^M h(n)\; \leq \;  h(N)+ \int_{N}^{M} h(x) dx, \quad \mbox{for every decreasing }h(x), \mbox{ and every } N, M \in \N,  \]
we have that
\begin{align*}
    &\sum_{t=0}^{T} \tfrac{1}{\sqrt{t+1}} \geq \int_0^{T+1} \tfrac{1}{\sqrt{s+1}} ds = 2\sqrt{T+2} -2 ,\\
    &\sum_{t=0}^{T} \tfrac{1}{t+1} \leq 1+\int_0^{T} \tfrac{1}{s+1} ds = 1+\ln (T+1).
\end{align*}
 Plugging in these estimates gives \eqref{eqn:davis-convex-anytime}. 
 This same expression~\eqref{eqn:davis-convex-anytime} is essentially also given in (4.17) of \cite[Thm.\ 4.4]{Davis2019}.
 The last claim~\eqref{eqn:davis-convex-constant} follows by plugging in $\lambda_t = \lambda/\sqrt{T+1}$ into~\eqref{eq:tempxo8lx5x5}. 
\end{proof}
Now try to connect the values of $f$ and $g$.
\begin{lemma}\label{lem:f-and-g}
Let $c\geq 1$ for all $i\in[n]$ and let $(w,s)\in \mathbb{R}^{d+n}$. Then, it holds $g(w,s) \geq f(w)$.
\end{lemma}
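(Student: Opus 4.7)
The plan is to prove the pointwise inequality $s_i + c(f_i(w) - s_i)_+ \geq f_i(w)$ for each index $i\in[n]$ under the assumption $c\geq 1$, and then average over $i$ to obtain $g(w,s) \geq f(w)$.

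To establish the pointwise inequality, I would split into two cases according to the sign of $f_i(w) - s_i$. If $f_i(w) \leq s_i$, then $(f_i(w)-s_i)_+ = 0$ and the left-hand side reduces to $s_i$, which is already at least $f_i(w)$ by assumption, regardless of the value of $c$. If instead $f_i(w) > s_i$, then $(f_i(w)-s_i)_+ = f_i(w) - s_i$, and the left-hand side becomes
\[
s_i + c(f_i(w) - s_i) = f_i(w) + (c-1)(f_i(w) - s_i),
\]
where the second term is nonnegative because both $c-1 \geq 0$ and $f_i(w) - s_i > 0$. Hence the left-hand side is at least $f_i(w)$ in this case as well.

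Averaging the inequality $s_i + c(f_i(w)-s_i)_+ \geq f_i(w)$ over $i=1,\dots,n$ gives
\[
g(w,s) = \frac{1}{n}\sum_{i=1}^n \bigl(s_i + c(f_i(w)-s_i)_+\bigr) \geq \frac{1}{n}\sum_{i=1}^n f_i(w) = f(w),
\]
which completes the argument. There is essentially no obstacle here; the proof is a one-line case split followed by averaging. Convexity of the $f_i$ is not required, and no information about $s$ beyond its role in the definition of $g$ is used.
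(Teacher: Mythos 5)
Your proof is correct and follows the same case split on the sign of $f_i(w)-s_i$ as the paper's proof, differing only in the cosmetic algebraic rearrangement in the second case (you write the excess as $(c-1)(f_i(w)-s_i)\geq 0$, while the paper writes $(1-c)s_i + cf_i(w)\geq(1-c)f_i(w)+cf_i(w)$). Your added remark that convexity of the $f_i$ is not needed is also consistent with the paper, which does not invoke convexity in this lemma.
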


\begin{proof} 
We first show that $s_i + c(f_i(w) -s_i)_+ \geq f_i(w)$ for all $i\in[n]$ by case distinction:
\begin{enumerate}
    \item Assume $f_i(w) -s_i \leq 0$. Then $(f_i(w) -s_i)_+ = 0$ and 
    \[s_i + c(f_i(w) -s_i)_+ = s_i \geq f_i(w),\]
    where the last inequality follows from the assumption.
    \item Assume $f_i(w) - s_i > 0$. 
    Then $s_i + c(f_i(w) -s_i)_+ = (1-c) s_i + cf_i(w)$. Now $c\geq 1$ implies $1-c \leq 0$ and hence $(1-c) s_i \geq (1-c) f_i(w)$. Altogether,
    \[s_i + c(f_i(w) -s_i)_+ = (1-c) s_i + c f_i(w) \geq (1-c) f_i(w) + c f_i(w) = f_i(w). \]
\end{enumerate}
Applying this for all $i\in[n]$, we get
\begin{align*}
    g(w,s) &= \frac{1}{n}\sum_{i=1}^n s_i + c(f_i(w) -s_i)_+ \geq \frac{1}{n}\sum_{i=1}^n f_i(w) =  f(w).
\end{align*}
\end{proof}
\begin{corollary} \label{cor:fmodelbasedconv}
Let the assumptions of Proposition \ref{prop:apply-davis} hold. If $c\geq1$ for all $i\in[n]$, and $\lambda_t=\frac{\lambda}{\sqrt{t+1}}$, then we have
\begin{align}\label{eqn:estimate-convex}
    \mathbb{E}\Big[f(\bar{w}^{T}) - f(w^*)\Big] \leq \frac{\|w^0-w^*\|^2 +\|s^0-s^*\|^2}{4\lambda(\sqrt{T+2}-1) } + \frac{\mathsf{M}^2\lambda(1+\ln(T+1))}{2\sqrt{T+2}-2}.
\end{align}
\end{corollary}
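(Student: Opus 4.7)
The plan is to chain together three ingredients that are already in place: the convergence bound for $g$ from Proposition~\ref{prop:apply-davis}, the pointwise lower bound $g(w,s) \geq f(w)$ from Lemma~\ref{lem:f-and-g}, and the identity $g(w^*,s^*) = f(w^*)$ from Lemma~\ref{lem:equivpenalty}. This is a short deductive argument rather than a new analysis, since all the hard work (applying the Davis--Drusvyatskiy framework, verifying the model assumptions (B1)--(B4), picking the $\mathcal{O}(1/\sqrt{t+1})$ schedule) has been done upstream.

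First, I would take a pair $(w^*, s^*) \in \argmin g(w,s)$, which exists and, by Lemma~\ref{lem:equivpenalty}, satisfies the dual conclusion that $w^*$ is a minimizer of $f$ and that $g(w^*, s^*) = f(w^*)$. This makes the $w^*$ appearing in the corollary consistent with the $w^*$ of Proposition~\ref{prop:apply-davis}. Note that the hypothesis $c \geq 1$ is exactly what Lemma~\ref{lem:equivpenalty} requires in order to identify minimizers of $g$ with minimizers of $f$, and it is also what Lemma~\ref{lem:f-and-g} uses to get the one-sided inequality.

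Next I would observe the sandwich
\begin{equation*}
f(\bar{w}^T) - f(w^*) \;\leq\; g(\bar{w}^T, \bar{s}^T) - f(w^*) \;=\; g(\bar{w}^T, \bar{s}^T) - g(w^*, s^*),
\end{equation*}
where the inequality is Lemma~\ref{lem:f-and-g} applied at $(\bar{w}^T, \bar{s}^T)$ and the equality is Lemma~\ref{lem:equivpenalty}. Taking expectations and plugging in the bound~\eqref{eqn:davis-convex-anytime} from Proposition~\ref{prop:apply-davis} for the anytime schedule $\lambda_t = \lambda/\sqrt{t+1}$ then yields exactly the claimed inequality~\eqref{eqn:estimate-convex}.

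There is no real obstacle here; the only thing to be careful about is the bookkeeping of what $w^*$ and $s^*$ refer to across the statements, namely checking that choosing $(w^*, s^*) \in \argmin g$ simultaneously serves as a valid reference point in the Davis--Drusvyatskiy bound~\eqref{eqn:davis-convex-anytime} and as a minimizer of $f$ with $g(w^*, s^*) = f(w^*)$. Both roles are guaranteed by Lemma~\ref{lem:equivpenalty} under $c \geq 1$, so the deduction closes.
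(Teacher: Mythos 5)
Your proposal matches the paper's proof essentially verbatim: apply Lemma~\ref{lem:f-and-g} to get $f(\bar{w}^T) \leq g(\bar{w}^T,\bar{s}^T)$, apply Lemma~\ref{lem:equivpenalty} to get $f(w^*) = g(w^*,s^*)$, and then invoke~\eqref{eqn:davis-convex-anytime}. The extra care you take in checking that the same $(w^*,s^*)$ serves both as the Davis--Drusvyatskiy reference point and as a minimizer of $f$ is correct and welcome, but it is the same argument.
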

\begin{proof}
As $c\geq1$ for all $i\in[n]$, we can apply Lemma \ref{lem:f-and-g} to get $f(\bar{w}^T) \leq g(\bar{w}^T, \bar{s}^T)$ almost surely and Lemma \ref{lem:equivpenalty} to get $f(w^*) = g(w^*, s^*)$. Together with \eqref{eqn:davis-convex-anytime}, we conclude the proof.
\end{proof}
%
%
%\subsection{Lagrangian and Connection to Penalty Viewpoint}
%\label{app:lagrangian-penalty}
%
%The Lagrangian of~\eqref{eq:functionsplit_vanilla},  for Lagrange multipliers $c\geq 0$, is given by
%\begin{align*}
%    \mathcal{L}(w,s) = \frac1n \sum_{i=1}^n s_i + c (f_i(w) - s_i).
%\end{align*}
%We investigate the KKT conditions of problem \eqref{eq:functionsplit_vanilla} which are necessary optimality conditions, if a constraint qualification is satisfied \cite[Thm.\ 12.1]{Nocedal2006}. For example, if we assume all $f_i$ to be convex, Slater's condition is a constraint qualification and is clearly satisfied.
%\begin{restatable}[Lagrange Multipliers]{lemma}{lagrangemult}
%\label{lem:lagrangemult}
%Let $f_i$ be continuously differentiable for all $i\in[n]$ and let $(w^*, s^*)$ be a KKT point of \eqref{eq:functionsplit_vanilla}. Then, $w^*$ is stationary point of $f$ and $s_i^*=f_i(w^*)$ and all Lagrange multipliers are equal to one.
%\end{restatable}
%\begin{proof}
%The KKT conditions of \eqref{eq:functionsplit_vanilla} are
%\begin{align*}
%    &\nabla_w\mathcal{L}(w^*,s^*) = \frac1n \sum_{i=1}^n c \nabla f_i(w^*)=0,\\
%    &\nabla_{s_i}\mathcal{L}(w^*,s^*) = \frac1n - \frac1n c=0,\\
%    &c \geq 0,\quad s_i^* \geq f_i(w^*), \quad c(f_i(w^*)-s_i^*) = 0 \quad \forall i\in[n].
%\end{align*}
%The stationarity condition wrt.\ $s_i$ implies $c=1$ for all $i\in[n]$. From the stationarity condition wrt.\ $w$ follows that $w^*$ is a stationary point of $f$. The complementarity conditions then imply $s_i^*=f_i(w^*)$.
%\end{proof}

\section{Convergence Theorem through SGD Viewpoint}

In this section we will make use of the following assumptions.
\begin{assumption}[Convexity]\label{ass:convexity}
Let $f_i$ be convex for all $i\in \{1,\dots , n\}$.
\end{assumption}
\begin{assumption}[Smoothness]\label{ass:smoothness}
Let $f_i$ be $L_i$--Lipschitz smooth for all $i\in \{1,\dots , n\}$, meaning that $\nabla f_i$ is $L_i$--Lipschitz continuous. Let $L_{\max}:= \max_{i=1,\dots,n} L_i$. In this case, $f$ is $L$--Lipschitz smooth for some $L\leq L_{\max}$.
\end{assumption}
\begin{assumption}[Lipschitz continuity]\label{ass:lipschitz}
Let $f_i$ be $G_i$--Lipschitz continuous for all $i\in \{1,\dots , n\}$, and define $G^2 := \sum_i G_i^2$.
\end{assumption}

First we prove that each slack variable is lower bounded by the infinum of the function it is tracking. 
\begin{restatable}[Lower bound on the slack variables]{lemma}{lowerboundslackvariable}
\label{L:lower bound slack variable}
 Consider the iterates $(w^t,s^t)$ given by~\eqref{alg:relaxedversion} and let
\begin{equation}
     \hat{s}_{j_t}^{t+1}   \; = \; s_{j_t}^{t} +\delta (\tau_t -1).  
\end{equation}
 Let  Assumption \ref{ass:smoothness} hold and let $\lambda \in (0, \tfrac{1}{2L_{\max}}]$. If   $s_i^0 \geq \inf f_i$  for every $i \in \{1, \dots, n\}$, 
then  $s_i^t \geq \inf f_i$ and $\hat s_i^t \geq \inf f_i$ for every $t \in \mathbb{N}$ and $i \in \{1, \dots, n\}$
\end{restatable}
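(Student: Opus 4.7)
The plan is to argue by induction on $t$, tracking the invariant $s_i^t \geq \inf f_i$ for all $i$, and simultaneously showing the same inequality for $\hat s_i^{t+1}$. The base case $t=0$ is exactly the hypothesis. For the inductive step, coordinates $i \neq j_t$ are unchanged, so we only need to handle $i = j_t$. Observe that since $\gamma \in (0,1]$, the update $s_{j_t}^{t+1} = (1-\gamma)s_{j_t}^t + \gamma \hat s_{j_t}^{t+1}$ is a convex combination, so once we show $\hat s_{j_t}^{t+1} \geq \inf f_{j_t}$, the corresponding bound on $s_{j_t}^{t+1}$ is automatic from the inductive hypothesis.

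The heart of the proof is thus to show that $s_{j_t}^t + \delta(\tau_t - 1) \geq \inf f_{j_t}$, for which I would split on whether $\tau_t$ vanishes. If $f_{j_t}(w^t) - s_{j_t}^t + \delta \leq 0$, then $\tau_t = 0$, and the condition for the positive part to vanish already gives $s_{j_t}^t \geq f_{j_t}(w^t) + \delta \geq \inf f_{j_t} + \delta$, so $\hat s_{j_t}^{t+1} = s_{j_t}^t - \delta \geq \inf f_{j_t}$. Otherwise, plugging in the explicit formula for $\tau_t$, the inequality to verify reduces (after multiplying through by $\delta + \lambda\|\nabla f_{j_t}(w^t)\|^2$ and cancelling) to
\begin{equation*}
\delta\bigl(f_{j_t}(w^t) - \inf f_{j_t}\bigr) \;\geq\; \lambda\,\|\nabla f_{j_t}(w^t)\|^2\bigl(\delta - (s_{j_t}^t - \inf f_{j_t})\bigr).
\end{equation*}
When $\delta \leq s_{j_t}^t - \inf f_{j_t}$, the right-hand side is non-positive and the bound is immediate. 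When $\delta > s_{j_t}^t - \inf f_{j_t}$, invoke the smoothness inequality from Lemma~\ref{lem:convsmoothinter}, namely $\|\nabla f_{j_t}(w^t)\|^2 \leq 2L_{\max}\bigl(f_{j_t}(w^t) - \inf f_{j_t}\bigr)$, combined with the step size restriction $\lambda \leq 1/(2L_{\max})$, to get $\lambda \|\nabla f_{j_t}(w^t)\|^2 \leq f_{j_t}(w^t) - \inf f_{j_t}$. The desired inequality then follows from $\delta \geq \delta - (s_{j_t}^t - \inf f_{j_t})$, which holds by the inductive hypothesis $s_{j_t}^t \geq \inf f_{j_t}$.

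The main (modest) obstacle is bookkeeping: one has to reparameterize with $a := s_{j_t}^t - \inf f_{j_t}$ and $F := f_{j_t}(w^t) - \inf f_{j_t}$ to cleanly see that the smoothness-plus-stepsize bound $\lambda\|\nabla f_{j_t}(w^t)\|^2 \leq F$ is exactly what makes the two-cases collapse into a single short verification. Beyond that, once $\hat s_{j_t}^{t+1} \geq \inf f_{j_t}$ is established, the convex-combination argument closes the induction for both sequences $s^{t+1}$ and $\hat s^{t+1}$ simultaneously.
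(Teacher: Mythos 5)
Your proof is correct and takes essentially the same route as the paper: induction, the convex-combination observation $s_{j_t}^{t+1}=(1-\gamma)s_{j_t}^t + \gamma\hat s_{j_t}^{t+1}$, a case split on whether $\tau_t$ vanishes, and the smoothness bound $\lambda\|\nabla f_{j_t}(w^t)\|^2 \leq f_{j_t}(w^t)-\inf f_{j_t}$ via Lemma~\ref{lem:convsmoothinter}. The only cosmetic difference is your extra case split on $\delta \lessgtr s_{j_t}^t - \inf f_{j_t}$, which the paper sidesteps by noting directly that $\delta F \geq \delta\lambda g \geq (\delta-a)\lambda g$ once $a\geq 0$; both chains rest on identical ingredients.
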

\begin{proof}

Let $j = j_t$ be the index sampled at the iteration $t+1$, and let $\tau_t$ be the corresponding stepsize of \eqref{alg:relaxedversion}.
We have, for every $i \neq j$, that $s_i^{t+1} = s_i^t$, thus any hypothesis on $s_i^t$ carries over immediately for $s_i^{t+1}.$

If $\tau_t =0$, then $s_j^t\geq f_j(w^t)+\delta$. Hence, 
$$s_j^{t+1} = s_j^t -\gamma \delta \geq f_j(w^t)+\delta(1-\gamma)\geq f_j(w^t)  \geq \inf f_j, $$
regardless of the induction hypothesis. 

As for $\hat{s}_j^{t+1}$ we have that 
$$\hat s_j^{t+1} = s_j^t - \delta \geq f_j(w^t)\geq \inf f_j. $$

Now assume that $s_i^0 \geq \inf f_i $  and that $f_i$ is $L_i$--smooth for every $i \in \{ 1,\ldots, n\}.$
We have 
\[ s_j^{t+1} = s_j^t + \gamma \delta (\tau_t-1)=(1-\gamma)s_j^t+\gamma (s_j^t+\delta\tau_t-\delta).\]
Moreover, it holds
\begin{align*}
    s_j^t+\delta\tau_t-\delta &=  s_j^t +  \delta \left( \frac{f_{j}(w^t)  -s_{j}^t+ \delta}{\delta+\lambda \Vert \nabla f_{j}(w^t) \Vert^2} -1\right) \nonumber \\
    &= s_j^t +  \delta \frac{f_{j}(w^t)  -s_{j}^t -\lambda \Vert \nabla f_{j}(w^t) \Vert^2 }{\delta+\lambda \Vert \nabla f_{j}(w^t) \Vert^2} \nonumber\\
      &=  \frac{s_j^t(\delta+\lambda \Vert \nabla f_{j}(w^t) \Vert^2) +   \delta f_{j}(w^t)  -  \delta s_{j}^t -  \delta\lambda \Vert \nabla f_{j}(w^t) \Vert^2 }{\delta+\lambda \Vert \nabla f_{j}(w^t) \Vert^2} \nonumber\\
    &=  \frac{\delta  f_{j}(w^t) + \lambda(s_{j}^t -  \delta )\Vert \nabla f_{j}(w^t) \Vert^2 }{\delta+\lambda \Vert \nabla f_{j}(w^t) \Vert^2}.
\end{align*}

Since the function $f_j$ is  $L_j$--smooth and $L_{\max}\geq L_j$ for all $j\in[n]$, we have by Lemma~\ref{lem:convsmoothinter} and with $\lambda \leq \frac{1}{2L_{\max}}$ that
\begin{eqnarray*}
    \delta(f_j(w^t) - \inf f_j) & \geq & \frac{\delta}{2 L_{\max} }\Vert \nabla f_j(w^t) \Vert^2  \\
    & \geq &
    \delta \lambda \Vert \nabla f_j(w^t) \Vert^2 \notag \\
    &=&
    \lambda(\delta - s_j^t) \Vert \nabla f_j(w^t) \Vert^2 + \lambda s_j^t \Vert \nabla f_j(w^t) \Vert^2 \notag\\
    & \geq &
    \lambda(\delta - s_j^t)\Vert \nabla f_j(w^t) \Vert^2 + \lambda(\inf f_j) \Vert \nabla f_j(w^t) \Vert^2\notag,
\end{eqnarray*}
where we used the induction assumption $s_j^t\geq \inf f_j$ in the last step.
The above can be rearranged into
\begin{align*}
    s_j^t+\delta\tau_t-\delta = \frac{\delta  f_{j}(w^t) + \lambda(s_{j}^t -  \delta )\Vert \nabla f_{j}(w^t) \Vert^2 }{\delta+\lambda \Vert \nabla f_{j}(w^t) \Vert^2} \geq \inf f_j.
\end{align*}
We conclude 
\[s_j^{t+1} = (1-\gamma)s_j^t+\gamma (s_j^t+\delta\tau_t-\delta) \geq (1-\gamma)\inf f_j + \gamma \inf  f_j = \inf f_j.\]
Finally, note again that for $\gamma =1$ we have $\hat s_j^{t+1} = s_j^{t+1}$ and the above arguments hold verbatim, and thus $\hat s_j^{t+1} \geq \inf f_j.$ 
\end{proof}

\if{It can be interesting to rewrite this algorithm in an implicit form (to see that it is implicit, look at the dependency between $\tau_i^t$ and $s_i^{t+1}$):
\begin{center}
    $\begin{cases}
    \tau_i^t = \frac{f_i(w^t) - s_i^{t+1}}{\Vert \nabla f_i(w^t) \Vert^2} \\
    w^{t+1} = w^t - \tau_i^t \nabla f_i(w^t) \\
    s_i^{t+1} = s_i^t + \tau_i^t - \lambda
    \end{cases}$
\end{center}}\fi

\subsection{Properties of Surrogate Function}

\begin{lemma}[Lower bounds for the surrogate]\label{L:lower bouds phiit}
Let Assumption \ref{ass:convexity} hold. Let $w \in \mathbb{R}^d$, $t \in \mathbb{N}$. For all $i \in \{1, \dots, n\}$ it holds
\begin{align}
    \inf\limits_{s \in \mathbb{R}^n} \phi_{i,t}(w,s) &= f_i(w) + \frac{\delta}{2} -\frac{\lambda}{2}\Vert \nabla f_i(w^t) \Vert^2, \label{eq:minphi1}
     \\ 
    \inf\limits_{w \in \R^d, s\in \R^n} \phi_{i,t}(w,s) &= \inf f_i + \frac{\delta}{2} - \frac{\lambda}{2}\Vert \nabla f_i(w^t) \Vert^2,\label{eq:minphi2}\\
    \inf\limits_{s \in \mathbb{R}^n} \phi_{t}(w,s) &= f(w) + \frac{\delta}{2} -\frac{1}{n}\sum\limits_{i=1}^n \frac{\lambda}{2}\Vert \nabla f_i(w^t) \Vert^2,\label{eq:minphi3}\\
    % \inf \phi_{t}:=
    \inf\limits_{w \in \R^d, s\in \R^n}\phi_{t}(w,s) &= \inf f  + \frac{\delta}{2} -\frac{1}{n}\sum\limits_{i=1}^n \frac{\lambda}{2}\Vert \nabla f_i(w^t) \Vert^2.\label{eq:minphi4}
\end{align}

\end{lemma}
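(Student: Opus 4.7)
The plan is to observe that $\phi_{i,t}(w,s)$ depends on $s$ only through the single coordinate $s_i$, so the minimizations separate and reduce to a one-dimensional calculation. The four formulas \eqref{eq:minphi1}--\eqref{eq:minphi4} will then fall out in sequence, and convexity of $f_i$ will only be used at the very last step, to pass from $\inf_w f_i(w)$ to $\inf f_i$ on an attained minimizer (or simply the infimum otherwise; convexity is not strictly required for the identities themselves, only that $\inf f_i > -\infty$, which is part of the standing hypothesis that $f$ is bounded below).

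For \eqref{eq:minphi1} I would fix $w$ and $i$ and set $a = f_i(w) + \delta$ and $b = \delta + \lambda\|\nabla f_i(w^t)\|^2 > 0$, so that as a function of $s_i \in \R$ one has
\begin{equation*}
    \phi_{i,t}(w,s) \;=\; \frac{(a - s_i)_+^2}{2b} + s_i.
\end{equation*}
This is a smooth convex function of $s_i$ (piecewise quadratic for $s_i \leq a$ and linear for $s_i \geq a$), so its minimum is the unique critical point. Setting the derivative to zero in the quadratic region gives $(a-s_i)/b = 1$, i.e.\ $s_i^\star = a - b = f_i(w) - \lambda\|\nabla f_i(w^t)\|^2$, which indeed lies in the region $s_i \leq a$. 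Plugging back yields the optimal value $\tfrac{b}{2} + s_i^\star = f_i(w) + \tfrac{\delta}{2} - \tfrac{\lambda}{2}\|\nabla f_i(w^t)\|^2$, which is \eqref{eq:minphi1}.

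For \eqref{eq:minphi2} I would simply minimize \eqref{eq:minphi1} over $w$, noting that $\|\nabla f_i(w^t)\|^2$ is a constant in $w$ (the gradient is evaluated at the frozen iterate $w^t$). For \eqref{eq:minphi3} I would exploit separability: since $\phi_{i,t}(w,\cdot)$ only depends on the $i$-th coordinate of $s$, and $\phi_t = \tfrac{1}{n}\sum_i \phi_{i,t}$, the joint infimum over $s \in \R^n$ factors as
\begin{equation*}
    \inf_{s\in \R^n} \phi_t(w,s) \;=\; \frac{1}{n}\sum_{i=1}^n \inf_{s_i \in \R} \phi_{i,t}(w,s) \;=\; \frac{1}{n}\sum_{i=1}^n \Bigl( f_i(w) + \tfrac{\delta}{2} - \tfrac{\lambda}{2}\|\nabla f_i(w^t)\|^2\Bigr),
\end{equation*}
which is exactly \eqref{eq:minphi3}. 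Finally, \eqref{eq:minphi4} follows by minimizing \eqref{eq:minphi3} over $w$, where again the last sum is constant in $w$, and $\inf_w \tfrac{1}{n}\sum_i f_i(w) = \inf f$.

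There is no real obstacle here; the only mild subtlety is verifying that the critical point $s_i^\star$ indeed lies in the quadratic branch (so that the $(\cdot)_+$ does not collapse the formula), which is immediate because the positive part at $s_i^\star$ equals $b > 0$.
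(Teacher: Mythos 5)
Your proof is correct, and it is slightly more elementary than the paper's at one point. For \eqref{eq:minphi1} both you and the paper do essentially the same one-dimensional calculation: the paper invokes Lemma~\ref{L:convexity surrogate} to assert convexity of $\phi_{i,t}(w,\cdot)$ and then solves the stationarity condition coming from \eqref{eq:gradphiit}; you arrive at the same formula by direct inspection of $\tfrac{(a-s_i)_+^2}{2b}+s_i$. Where you genuinely diverge is in \eqref{eq:minphi2} and \eqref{eq:minphi4}: the paper picks $w_i^*\in\argmin f_i$ and then appeals to Lemma~\ref{L:common minimizers} (stated for $\phi_t$, and here implicitly applied to the single summand $\phi_{i,t}$) to conclude that $(w_i^*,s^*_t)$ is a global minimizer. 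You instead simply write $\inf_{w,s}=\inf_w\inf_s$, plug in \eqref{eq:minphi1}, and note that the term involving $\|\nabla f_i(w^t)\|^2$ is constant in $w$. This is cleaner: it avoids any existence-of-argmin assumption, avoids a somewhat forced reference to a lemma about critical points of $\phi_t$, and extends verbatim to $\inf f_i=-\infty$, in which case both sides of \eqref{eq:minphi2} are $-\infty$. Your separability argument for \eqref{eq:minphi3} is exactly what the paper means by ``same arguments''.

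One small inaccuracy in your side remark: you say $\inf f_i>-\infty$ is part of the standing hypothesis because ``$f$ is bounded below''. Boundedness below of the average $f=\tfrac1n\sum f_i$ does not imply that each $f_i$ is bounded below (e.g.\ $f_1=-f_2$). Fortunately this does not matter: as you observe, the identities hold whether or not the infima are finite, since your argument never requires that a minimizer of $f_i$ exists, only the valid decomposition $\inf_{w,s}=\inf_w\inf_s$. So the parenthetical justification should be dropped, but the proof itself stands.
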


\begin{proof}
\noindent {\bf Proof of~\eqref{eq:minphi1}.} Let $w \in \mathbb{R}^d$ be fixed. By Lemma \ref{L:convexity surrogate}, $\phi_{i,t}(w, \cdot)$ is convex. The stationarity condition can be derived by \eqref{eq:gradphiit} and is fulfilled for any $s^*_t \in \mathbb{R}^n$ such that $s_{i,t}^*:=(s_t^*)_i = f_i(w) - \lambda \Vert \nabla f_i(w^t) \Vert^2$. We deduce that $s^*_t$ is a minimizer of $\phi_{i,t}(w, \cdot)$, and plugging in yields
\begin{eqnarray*}
    \inf\limits_{s \in \mathbb{R}^n} \phi_{i,t}(w,s) 
    &= &
    \phi_{i,t}(w,s^*_t)
    =
     s_{i,t}^* + \frac{1}{2} \frac{(f_i(w) - s_{i,t}^* + \delta)_+^2}{\delta + \lambda\Vert \nabla f_{i}(w^t) \Vert^2} \\
    &=&
     f_i(w) - \lambda \Vert \nabla f_i(w^t) \Vert^2 + \frac{1}{2} (\delta+ \lambda\Vert \nabla f_i(w^t) \Vert^2)\\
    &=&f_i(w) + \frac{\delta}{2} -\frac{\lambda}{2}\Vert \nabla f_i(w^t) \Vert^2.
\end{eqnarray*}

{\bf Proof of~\eqref{eq:minphi2}.}
Let $w_i^* \in {\rm{argmin}}_w~f_i(w)$, and define $s^*_t$ as above.
From Lemma \ref{L:common minimizers}, we have that 
\begin{equation*}
    \inf\limits_{w \in \R^d, s\in \R^n} \phi_{i,t}(w,s) = \phi_{i,t}(w_i^*,s^*_t) =\inf f_i +\frac{\delta}{2}  -\frac{\lambda}{2}  \Vert \nabla f_i(w^t) \Vert^2.
\end{equation*}

The proofs of~\eqref{eq:minphi3} and~\eqref{eq:minphi4} follow from the same arguments above.

\end{proof}

As a consequence of the previous lemma we have that
\begin{lemma}\label{lem:lowerphi}
Let Assumptions \ref{ass:convexity} and \ref{ass:smoothness} hold.
Let
\begin{equation} \label{eq:sigma}
    \sigma \; := \; \inf f - \frac{1}{n} \sum_{i=1}^n \inf f_i \;\geq \; 0.
\end{equation} 
% $\sigma^2 := \mathbb{E}\left[ \Vert \nabla f_i(w^*)\Vert^2 \right]$ for some $w^* \in {\rm{argmin}}~f \neq \emptyset$.
It holds
\begin{equation}
    \inf\limits_{s \in \mathbb{R}^n} \phi_{t}(w^t,s) \geq \inf f +
    \frac{\delta}{2} - \lambda L_{\max}\sigma
    +(1- \lambda L_{\max} )(f(w^t) - \inf f).
\end{equation} 
\end{lemma}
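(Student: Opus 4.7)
The plan is to combine the exact formula for $\inf_{s\in\R^n}\phi_t(w,s)$ given in equation \eqref{eq:minphi3} of Lemma \ref{L:lower bouds phiit} with the smoothness bound \eqref{eq:convsmoothinterE} of Lemma \ref{lem:convsmoothinter}. There is really no hard step here — it is a direct substitution, and the main task is bookkeeping to make the quantity $\sigma$ appear naturally.

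First, I would instantiate \eqref{eq:minphi3} at $w=w^t$, yielding
\begin{equation*}
    \inf_{s\in\R^n}\phi_t(w^t,s) \;=\; f(w^t) + \frac{\delta}{2} - \frac{\lambda}{2n}\sum_{i=1}^n \|\nabla f_i(w^t)\|^2.
\end{equation*}
So it only remains to upper bound the average squared gradient $\tfrac{1}{n}\sum_{i=1}^n \|\nabla f_i(w^t)\|^2$.

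For this upper bound I would invoke \eqref{eq:convsmoothinterE}, which under Assumption \ref{ass:smoothness} (plus uniform sampling $i\sim \tfrac{1}{n}$) reads
\begin{equation*}
    \frac{1}{n}\sum_{i=1}^n \|\nabla f_i(w^t)\|^2 \;=\; \E{\|\nabla f_i(w^t)\|^2} \;\leq\; 2L_{\max}\bigl(f(w^t) - \inf f + \sigma\bigr),
\end{equation*}
where $\sigma = \inf f - \tfrac{1}{n}\sum_i \inf f_i$ as defined in \eqref{eq:sigma}.

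Plugging this into the expression for $\inf_{s}\phi_t(w^t,s)$ gives
\begin{align*}
    \inf_{s\in\R^n}\phi_t(w^t,s)
    &\;\geq\; f(w^t) + \frac{\delta}{2} - \lambda L_{\max}\bigl(f(w^t)-\inf f + \sigma\bigr) \\
    &\;=\; \inf f + \frac{\delta}{2} - \lambda L_{\max}\sigma + (1-\lambda L_{\max})\bigl(f(w^t) - \inf f\bigr),
\end{align*}
which is exactly the claimed inequality. Note that Assumption \ref{ass:convexity} is only needed to apply Lemma \ref{L:lower bouds phiit}, and the coefficient $(1-\lambda L_{\max})$ becoming nonnegative (which is what makes the bound useful downstream) is guaranteed by the stepsize restriction $\lambda < \tfrac{1}{2L_{\max}}$ assumed in Theorem \ref{T:cvconvexsmooth}.
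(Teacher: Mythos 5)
Your proof is correct and follows essentially the same route as the paper: instantiate \eqref{eq:minphi3} at $w=w^t$, bound the average squared gradient via \eqref{eq:convsmoothinterE}, and rearrange so that $\sigma$ appears. The only cosmetic difference is that the paper also spends a line verifying $\sigma \geq 0$ (via $\inf \frac{1}{n}\sum_i f_i \geq \frac{1}{n}\sum_i \inf f_i$), which you omit but which is stated in \eqref{eq:sigma} and is a trivial observation.
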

\begin{proof}
First note that $\sigma$ in~\eqref{eq:sigma} is positive because
 \[\inf f = \inf \frac{1}{n} \sum_{i=1}^n f_i \geq \frac{1}{n} \sum\limits_{i=1}^n \inf f_i.\]
Using smoothness, we have from Lemma~\ref{lem:convsmoothinter} and specifically~\eqref{eq:convsmoothinterE}
that
\begin{equation*}
    \mathbb{E}_t[\Vert \nabla f_i(w^t) \Vert^2]
    =
    \frac{1}{n}\sum\limits_{i=1}^n \Vert \nabla f_i(w^t) \Vert^2
    \leq 2 L_{\max}(f(w^t) - \inf f+ \sigma).
\end{equation*}
Using the above in~\eqref{eq:minphi3} we have that
\begin{eqnarray*}
    \inf\limits_{s \in \mathbb{R}^n} \phi_{t}(w^t,s)
    &\geq &
    f(w^t) + \frac{\delta}{2} - \frac{\lambda}{2} 2 L_{\max}\left(f(w^t) - \inf f + \sigma \right) \\
    &=&
    f(w^t)
    + \frac{\delta}{2} 
    - \lambda L_{\max}(f(w^t) - \inf f)
    - \lambda  L_{\max} \sigma \\
    &=&
    \inf f +
    \frac{\delta}{2} - \lambda  L_{\max} \sigma
    +(1- \lambda L_{\max} )(f(w^t) - \inf f).
\end{eqnarray*}
\end{proof}

\begin{lemma}[From the surrogate to $f$]\label{L:surrogate lower bound}
Let Assumption \ref{ass:convexity} hold and let $t \in \mathbb{N}$, let $w^* \in {\rm{argmin}}~f$. Let $s_i^*=f_i(w^*)$ and $z^*=(w^*,s^*)$ and $z^t=(w^t,s^t)$.
\begin{enumerate}
    \item\label{L:surrogate lower bound:lipschitz} If Assumption \ref{ass:lipschitz} holds,
    then
    \begin{equation*}
        f(w^t) - \inf f \leq \phi_t(z^t) - \phi_t(z^*) + \frac{\lambda G^2}{2n}.
    \end{equation*}
    \item\label{L:surrogate lower bound:smooth} If Assumption \ref{ass:smoothness} holds, and defining $\nu := \inf f - \E{\inf f_i}$,
    then
\begin{equation*}
    (1-\lambda L_{\max})(f(w^t)-f(w^*)) \leq 
    \phi_t(z^t) - \phi_t(z^*) + \lambda L_{\max}\nu.
\end{equation*}
\end{enumerate}
\end{lemma}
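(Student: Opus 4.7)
The plan is to get a two-sided control of $\phi_t(z^t) - \phi_t(z^*)$ in terms of $f(w^t)-f(w^*)$ plus an error term, and then bound that error term separately in each case. The two key quantities are $\phi_t(z^*)$, which is computable in closed form since $s_i^* = f_i(w^*)$ zeroes out part of the positive-part expression, and $\phi_t(z^t)$, which is lower bounded by $\inf_{s} \phi_t(w^t, s)$ already computed in Lemma~\ref{L:lower bouds phiit}, equation~\eqref{eq:minphi3}.

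\textbf{Step 1: compute the pieces.} Plugging $s_i^* = f_i(w^*)$ into $\phi_{i,t}$ collapses $(f_i(w^*) - s_i^* + \delta)_+^2$ to $\delta^2$, so
\[
\phi_t(z^*) \;=\; f(w^*) + \frac{1}{n}\sum_{i=1}^n \frac{\delta^2}{2(\delta + \lambda \|\nabla f_i(w^t)\|^2)}.
\]
From~\eqref{eq:minphi3},
\[
\phi_t(z^t) \;\geq\; \inf_{s\in\R^n} \phi_t(w^t,s) \;=\; f(w^t) + \frac{\delta}{2} - \frac{1}{n}\sum_{i=1}^n \frac{\lambda}{2}\|\nabla f_i(w^t)\|^2.
\]

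\textbf{Step 2: subtract and simplify.} Writing $g_i := \|\nabla f_i(w^t)\|^2$ and using the identity $\frac{\delta}{2} - \frac{\delta^2}{2(\delta+\lambda g_i)} = \frac{\delta \lambda g_i}{2(\delta+\lambda g_i)}$ followed by $\frac{\lambda g_i}{2} - \frac{\delta\lambda g_i}{2(\delta+\lambda g_i)} = \frac{\lambda^2 g_i^2}{2(\delta+\lambda g_i)}$, subtraction yields the clean inequality
\[
\phi_t(z^t) - \phi_t(z^*) \;\geq\; f(w^t) - f(w^*) - \frac{1}{n}\sum_{i=1}^n \frac{\lambda^2 g_i^2}{2(\delta + \lambda g_i)}.
\]
Using the trivial estimate $\frac{\lambda^2 g_i^2}{\delta + \lambda g_i} \leq \lambda g_i$ (since $\delta > 0$) gives the master inequality
\[
\phi_t(z^t) - \phi_t(z^*) \;\geq\; f(w^t) - f(w^*) - \frac{\lambda}{2n}\sum_{i=1}^n \|\nabla f_i(w^t)\|^2.
\]

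\textbf{Step 3: close each case.} For item~\ref{L:surrogate lower bound:lipschitz}, apply $G_i$-Lipschitzness: $\|\nabla f_i(w^t)\|^2 \leq G_i^2$, so $\sum_i \|\nabla f_i(w^t)\|^2 \leq G^2$ and the result follows by rearrangement. For item~\ref{L:surrogate lower bound:smooth}, apply~\eqref{eq:convsmoothinterE} of Lemma~\ref{lem:convsmoothinter} to get $\frac{1}{n}\sum_i \|\nabla f_i(w^t)\|^2 \leq 2L_{\max}(f(w^t)-\inf f + \nu)$ with $\nu = \inf f - \E{\inf f_i}$. Substituting and noting $\inf f = f(w^*)$, the $-\lambda L_{\max}(f(w^t)-f(w^*))$ term moves to the left to produce the coefficient $(1-\lambda L_{\max})$ on $f(w^t)-f(w^*)$, with residual additive error $\lambda L_{\max}\nu$.

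I don't anticipate a real obstacle here; the only slightly delicate part is the algebraic simplification in Step~2 and making sure the crude bound $\lambda^2 g_i^2/(\delta+\lambda g_i) \leq \lambda g_i$ is tight enough: it is, because the extra slack it introduces is exactly what lets the final constants match the statement.
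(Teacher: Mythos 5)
Your proposal is correct and follows essentially the same route as the paper: lower bound $\phi_t(z^t)$ via \eqref{eq:minphi3}, upper bound $\phi_t(z^*)$ using $s_i^*=f_i(w^*)$ and $\delta^2/(\delta+\lambda g_i)\le\delta$, arrive at the master inequality $\phi_t(z^t)-\phi_t(z^*)\ge f(w^t)-f(w^*)-\tfrac{\lambda}{2n}\sum_i\|\nabla f_i(w^t)\|^2$, then close with Lipschitzness or Lemma~\ref{lem:convsmoothinter}. The only cosmetic difference is that you keep the exact expression for $\phi_t(z^*)$ a step longer and collapse the two error terms into $\lambda^2 g_i^2/(2(\delta+\lambda g_i))$ before applying the same crude bound, whereas the paper bounds each piece separately first; the resulting inequality and constants are identical.
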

\begin{proof}
Item \ref{L:surrogate lower bound:lipschitz}: Lipschitz continuity of $f_i$ implies that $\|\nabla f_i(w)\|\leq G_i$ for all $w\in\mathbb{R}^d$. Using Lemma \ref{L:lower bouds phiit}, and the fact that $\lambda G^2 \leq \delta$, we have
\begin{equation}
    \phi_t(z^t) \geq f(w^t) +\frac{\delta}{2} - \frac{1}{n}\sum_{i=1}^n \frac{\lambda}{2}\|\nabla f_i(w^t)\|^2 
    \geq f(w^t) +\frac{\delta}{2}  -\frac{\lambda}{2n}\sum_{i=1}^n G_i^2
    %\geq 
    %f(w^t) +\frac{\delta}{2}  - \frac{\delta}{2n}. 
\end{equation}
Moreover, we can compute from the definition of $\phi_t$ and $z^*$ that
\begin{equation} \label{eq:upper-bound-phit-star}
    \phi_t(z^*) = \frac{1}{n}\sum_{i=1}^n \left( \frac{1}{2}\frac{\delta^2}{\delta+\lambda \|\nabla f_i(w^t)\|^2} + f_i(w^*)  \right)
    \leq \frac{\delta}{2} + f(w^*)
    = \frac{\delta}{2} + \inf f.
\end{equation}
Combining the above two equations, we finally obtain that
\begin{equation*}
    \phi_t(z^t) - \phi_t(z^*)  + \frac{\lambda G^2}{2n}\geq f(w^t) - \inf f .
\end{equation*}
Item \ref{L:surrogate lower bound:smooth}: From the smoothness and Lemma \ref{lem:convsmoothinter}, we have $\|\nabla f_i(w^t)\|^2 \leq 2 L_{\max}(f_i(w^t)-\inf f_i)$. 
Thus we can use Lemma \ref{L:lower bouds phiit} as before to write
\begin{equation}
    \phi_t(z^t) \geq f(w^t) +\frac{\delta}{2} - \frac{1}{n}\sum_{i=1}^n \frac{\lambda}{2}\|\nabla f_i(w^t)\|^2 \geq (1-\lambda L_{\max})f(w^t) +\frac{\delta}{2} + \lambda L_{\max}\E{\inf f_i}.
\end{equation}
Using again \eqref{eq:upper-bound-phit-star}, namely
\begin{equation*}
    \phi_t(z^*) \leq \frac{\delta}{2} + f(w^*) = \frac{\delta}{2} +(1-\lambda L_{\max})\inf f + \lambda L_{\max} \inf f.
\end{equation*}
we get
\begin{equation*}
    \phi_t(z^t) - \phi_t(z^*) \geq (1-\lambda L_{\max})(f(w^t)-f(w^*)) + \lambda L_{\max}(\E{\inf f_i} - \inf f).
\end{equation*}
\end{proof}
The next lemma shows that $\phi_{i,t}$ satisfies a property which is typically verified by convex functions with $1$--smooth  gradients.
This loosely justifies why it is legitimate to take a stepsize lower or equal to $1$ in Lemma \ref{L:online SGD equivalence}.

\begin{lemma}[Gradient bound]\label{L:smoothness phiit}
For every $t \in \mathbb{N}$, 
\begin{equation}\label{eq:gradphiboundeq}
   \frac{1}{2} \Vert \nabla \phi_{{j_t},t}(w^t,s^t) \Vert_{\mD^{-1}}^2 
   =
   \phi_{{j_t},t}(w^t,s^t) %+(1-\gamma )s_i^{t} -s_i^{t+1}- \frac{\gamma\delta}{2}.
%  s_{j_t}^t- \delta \tau_t +\frac{\delta}{2}
   - \hat s^{t+1}_{j_t} - \frac{\delta}{2},
\end{equation}
where 
% $s_{j_t}^{t+1} = (1- \gamma)s_{j_t}^t + \gamma \hat s_{j_t}^{t+1}$ (or
\begin{equation}\label{eq:shat}
    \hat s_{j_t}^{t+1} = s_{j_t}^t +\delta(\tau_t - 1)
\end{equation}  and 
$\mD$ is defined in \eqref{D:metric}.
% and $\tau_t$ in~\eqref{alg:relaxedversion}. 

% \rob{OLD below and not used.}
% Moreover, if $\lambda \in ]0, \frac{1}{2 \mathcal{L}}]$,
% then  (if we need this inequality then Lemma \ref{L:lower bound slack variable} is to be modified)
% \begin{equation}\label{eq:gradphiboundineq}
%     \frac{1}{2} \Vert \nabla \phi_{{j_t},t}(w^t,s^t) \Vert_{\mD^{-1}}^2 
%     \leq
%     \phi_{{j_t},t}(w^t,s^t) - \inf \phi_{j_t,t}.
% \end{equation}
\end{lemma}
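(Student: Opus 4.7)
The plan is to verify the identity by direct computation, matching both sides after expanding the gradient of $\phi_{j_t,t}$ at $(w^t,s^t)$ and using the definition of $\tau_t$ to cancel terms. This is essentially an algebraic identity, so the work is bookkeeping rather than any genuine difficulty.

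First, I would read off the gradient from~\eqref{eq:gradphiit} evaluated at $(w,s) = (w^t,s^t)$ and $i = j_t$. Writing $a := (f_{j_t}(w^t) - s_{j_t}^t + \delta)_+$ and $b := \delta + \lambda\|\nabla f_{j_t}(w^t)\|^2$, so that $\tau_t = a/b$, the components of $\nabla \phi_{j_t,t}(w^t,s^t)$ are $\tau_t \nabla f_{j_t}(w^t)$ in the $w$-block, $1-\tau_t$ in the $s_{j_t}$-coordinate, and $0$ in all other $s$-coordinates. Since $\mD^{-1}$ is block-diagonal with blocks $\lambda\mI_d$ and $\delta\mI_n$, the left-hand side equals
\[
\frac{1}{2}\|\nabla \phi_{j_t,t}(w^t,s^t)\|_{\mD^{-1}}^2 = \frac{\lambda \tau_t^2 \|\nabla f_{j_t}(w^t)\|^2}{2} + \frac{\delta (1-\tau_t)^2}{2}.
\]

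Next, I would use the key simplification $\lambda\|\nabla f_{j_t}(w^t)\|^2 = b - \delta$ together with $\tau_t b = a$, which gives $\lambda \tau_t^2\|\nabla f_{j_t}(w^t)\|^2 = \tau_t^2(b - \delta) = \tau_t a - \tau_t^2 \delta$. Substituting and expanding the square yields the compact form
\[
\frac{1}{2}\|\nabla \phi_{j_t,t}(w^t,s^t)\|_{\mD^{-1}}^2 = \frac{1}{2}\bigl(\tau_t a + \delta - 2\delta\tau_t\bigr).
\]

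For the right-hand side, I would observe that $\phi_{j_t,t}(w^t,s^t) = \tfrac{1}{2}\tau_t a + s_{j_t}^t$ (this is valid whether or not the positive part is active, since $a = 0$ forces both sides to reduce to $s_{j_t}^t$), and plug in $\hat s_{j_t}^{t+1} = s_{j_t}^t + \delta(\tau_t - 1)$ from~\eqref{eq:shat}. Then the right-hand side becomes
\[
\phi_{j_t,t}(w^t,s^t) - \hat s_{j_t}^{t+1} - \frac{\delta}{2} = \frac{1}{2}\tau_t a - \delta\tau_t + \delta - \frac{\delta}{2} = \frac{1}{2}\bigl(\tau_t a + \delta - 2\delta\tau_t\bigr),
\]
which matches the expression for the left-hand side obtained above.

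The only mildly delicate point, and the one I would want to state explicitly, is that the identity $\tau_t b = a$ holds as written because of how the positive part interacts with the division: if $f_{j_t}(w^t) - s_{j_t}^t + \delta \geq 0$ then $a$ is the numerator without the $(\cdot)_+$, and if $f_{j_t}(w^t) - s_{j_t}^t + \delta < 0$ then $a = 0$ and $\tau_t = 0$ so both sides reduce trivially to $\delta/2$. Once this case check is acknowledged, the remainder is a one-line algebraic verification and there is no real obstacle.
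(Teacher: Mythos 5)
Your proof is correct and follows essentially the same algebraic route as the paper: both compute the $\mD^{-1}$-weighted gradient norm from~\eqref{eq:gradphiit}, use $\lambda\|\nabla f_{j_t}(w^t)\|^2 = b-\delta$ and $\tau_t b = a$ to collapse the expression, and identify the result with $\phi_{j_t,t}(w^t,s^t) - \hat s_{j_t}^{t+1} - \delta/2$ via the definition of $\hat s_{j_t}^{t+1}$. The paper transforms the left-hand side into the right in one chain of equalities rather than expanding both sides to a common normal form as you do, but the underlying computation is identical.
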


\begin{proof}
Let $i:={j_t}$.
Using the expression of $\nabla \phi_{i,t}(w^t,s^t)$ in \eqref{eq:gradphiit}, together with the definition of $\phi_{i,t}$ \eqref{eq:L1sgd}, we can compute the following:  
\begin{eqnarray*}
\Vert \nabla \phi_{i,t}(w^t,s^t) \Vert_{\mD^{-1}}^2
& = &
 \lambda\frac{(f_{i}(w^t) - s^t_i + \delta)_+^2}{(\delta + \lambda \Vert \nabla f_{i}(w^t) \Vert^2)^2} \Vert \nabla f_{i}(w^t) \Vert^2 
 + 
 \delta \left( \frac{(f_{i}(w^t) - s^t_i + \delta)_+}{\delta + \lambda\Vert \nabla f_{i}(w^t) \Vert^2} - 1\right)^2 \\
 & = &
  \frac{(f_{i}(w^t) - s^t_i + \delta)_+^2}{(\delta + \lambda\Vert \nabla f_{i}(w^t) \Vert^2)^2} \left(\lambda \Vert \nabla f_{i}(w^t) \Vert^2 + \delta \right) 
  -2 \delta \frac{(f_{i}(w^t) - s^t_i + \delta)_+}{\delta + \lambda\Vert \nabla f_{i}(w^t) \Vert^2} 
  + \delta \\
 & = &
  \frac{(f_{i}(w^t) - s^t_i + \delta)_+^2}{\delta + \lambda\Vert \nabla f_{i}(w^t) \Vert^2}  
  -2 \delta \frac{(f_{i}(w^t) - s^t_i + \delta)_+}{\delta + \lambda\Vert \nabla f_{i}(w^t) \Vert^2} 
  + \delta \\
    & = &
    2\phi_{i,t}(w^t,s^t_i)  
    - 2 s_i^t
    -2\delta \frac{(f_{i}(w^t) - s^t_i + \delta)_+}{\delta + \lambda\Vert \nabla f_{i}(w^t) \Vert^2} 
    + \delta \\
    &=&
    2\phi_{i,t}(w^t,s^t_i) 
    - 2 \left( 
        s_i^t 
        + \delta (\tau_t - \tfrac{1}{2})
        % + \frac{\delta}{2}
    \right) \\
        \\
     &= & 
     2\phi_{i,t}(w^t,s^t_i) 
     - 2 \left( s_i^{t}+ \frac{\delta}{2} +\frac{1}{\gamma}(s_i^{t+1}-s_i^t)\right)
    \\
     &= & 
     2\phi_{i,t}(w^t,s^t_i) 
     - 2 (\hat s_i^{t+1}+ \frac{\delta}{2})
    ,\end{eqnarray*}
    where in the last equality we used~\eqref{alg:relaxedversion}.
%  where in the last equality we used the fact that $\hat s_i^{t+1} = s_i^t +\delta \tau_t - \delta$. Thus~\eqref{eq:gradphiboundeq} holds.

% \rob{OLD below. So far don't need it.}
% Let us assume now that $2 \mathcal{L} \lambda <1$.
% We know from Lemma \ref{L:lower bound slack variable} that $\inf f_i \leq \hat s_i^{t+1}$.
% Moreover we saw in Lemma \ref{L:lower bouds phiit} that $\inf \phi_{i,t} \leq  \inf f_i + \frac{\delta}{2}$.
% Therefore we conclude that $2\hat s_i^{t+1} + \delta \geq 2\inf \phi_{i,t}$.
\end{proof}

%\subsection{Inherited Convexity}
Finally we show that if $f_i$'s are convex,  then the surrogate functions  $\phi_{i,t}$ are convex.
\begin{lemma}[Convexity of the surrogate]\label{L:convexity surrogate}
Let Assumption \ref{ass:convexity} hold. Then, 
 $\phi_{i,t}$ is convex for every $t \in \mathbb{N}$ and $i \in \{1, \dots, n\}$ and it holds
 \begin{align*}
     \phi_{i,t}(z^*) - \phi_{i,t}(z) - \langle \nabla \phi_{i,t}(z), z^* - z \rangle \geq 0, \quad \forall z,z^* \in \mathbb{R}^{d+n},
 \end{align*}
where $z=(w,s)$ and $z^*=(w^*,s^*)$. Consequently $\phi_t$ is convex and it holds
\begin{align*}
    \phi_{t}(z^*) - \phi_{t}(z) - \langle \nabla \phi_{t}(z), z^* - z \rangle  \geq 0, \quad \forall z,z^* \in \mathbb{R}^{d+n}.
\end{align*}
\end{lemma}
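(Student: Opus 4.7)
My plan is to prove convexity of $\phi_{i,t}$ by exhibiting it as a sum of a linear function and a composition of a nondecreasing convex scalar function with a convex function on $\mathbb{R}^{d+n}$. Recall that
\[
\phi_{i,t}(w,s) \;=\; s_i \;+\; \frac{1}{2(\delta+\lambda\|\nabla f_i(w^t)\|^2)}\bigl(f_i(w)-s_i+\delta\bigr)_+^2,
\]
where the denominator $\delta+\lambda\|\nabla f_i(w^t)\|^2$ is a strictly positive \emph{constant} in the variables $(w,s)$ (it depends only on $w^t$, which is frozen). Thus the key object to analyze is $(w,s)\mapsto (f_i(w)-s_i+\delta)_+^2$.

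First I would observe that the affine map $A_i:(w,s)\mapsto f_i(w)-s_i+\delta$ is convex on $\mathbb{R}^{d+n}$: $f_i$ is convex in $w$ by Assumption \ref{ass:convexity}, and the remaining part $-s_i+\delta$ is affine in $s$. Next, the scalar function $h:\mathbb{R}\to\mathbb{R}$, $h(x):=(x)_+^2=\max\{x,0\}^2$ is convex and \emph{nondecreasing}, which follows from $h'(x)=2(x)_+\geq 0$ being nondecreasing. By the standard composition rule (``nondecreasing convex of convex is convex''; cf.\ \cite[Thm.\ 2.16]{Beck2017}), the function $h\circ A_i$ is convex on $\mathbb{R}^{d+n}$. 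Multiplying by the positive constant $\tfrac{1}{2(\delta+\lambda\|\nabla f_i(w^t)\|^2)}$ preserves convexity, and adding the linear term $s_i$ preserves it as well, so $\phi_{i,t}$ is convex.

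Having established convexity, the claimed gradient inequality is just the first-order characterization of convexity for differentiable functions. I would note that $h(x)=(x)_+^2$ is continuously differentiable with $h'(x)=2(x)_+$, so $\phi_{i,t}$ is $C^1$ (its gradient is given by \eqref{eq:gradphiit}). Hence
\[
\phi_{i,t}(z^*) - \phi_{i,t}(z) - \langle \nabla \phi_{i,t}(z),\, z^*-z\rangle \;\geq\; 0 \quad\text{for all } z,z^*\in\mathbb{R}^{d+n}.
\]
Finally, $\phi_t = \frac{1}{n}\sum_{i=1}^n \phi_{i,t}$ is an average of convex $C^1$ functions, therefore convex and $C^1$, and its own first-order inequality follows by averaging the pointwise inequalities above.

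There is no real obstacle here; the only mild point to be careful about is making explicit that the denominator is a constant (not a function of $(w,s)$), so that the quotient inherits convexity directly from the numerator rather than requiring more delicate joint convexity arguments. Once this is observed, the proof reduces to the elementary composition rule above.
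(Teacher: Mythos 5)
Your proof is correct, and it takes a genuinely different (and more conceptual) route than the paper's. You establish convexity of $\phi_{i,t}$ abstractly: the inner function $(w,s)\mapsto f_i(w)-s_i+\delta$ is convex (by Assumption~\ref{ass:convexity} and linearity in $s$), the outer scalar map $h(x)=(x)_+^2$ is convex and nondecreasing, and hence $h$ composed with the inner map is convex by the monotone-composition rule; adding a linear term and multiplying by a positive constant (since $\delta+\lambda\|\nabla f_i(w^t)\|^2$ depends only on the frozen $w^t$) preserves convexity. The first-order inequality then comes for free once you note $h\in C^1$ with $h'(x)=2(x)_+$, so $\phi_{i,t}\in C^1$. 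The paper instead proves the first-order inequality \emph{directly}: it introduces $r(z)=f_i(w)-s_i+\delta$, substitutes the explicit gradient~\eqref{eq:gradphiit}, cancels the $(s^*-s)$ terms, applies convexity of $f_i$, and finishes with a case distinction on the sign of $r(z)$, eventually exhibiting $Q \geq \frac{1}{2}\frac{(r(z^*)_+-r(z)_+)^2}{\delta+\lambda\|\nabla f_i(w^t)\|^2}\geq 0$. Your route is shorter and avoids the algebra, at the cost of invoking a textbook composition rule; the paper's route is self-contained and incidentally produces a quantitative lower bound on $Q$ (a squared difference of positive parts), which your argument does not. One small terminological slip: you call $A_i:(w,s)\mapsto f_i(w)-s_i+\delta$ an ``affine map,'' but it is merely convex (affine only in $s$), which is what your subsequent sentence correctly uses.
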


\begin{proof}
Let $t \in \mathbb{N}$ and $i \in \{1, \dots, n\}$ be fixed.
Since $\phi_{i,t}$ is differentiable, it is enough to show that
\begin{equation*}
    \phi_{i,t}(z^*) - \phi_{i,t}(z) - \langle \nabla \phi_{i,t}(z), z^* - z \rangle \geq 0, \quad \forall z,z^* \in \mathbb{R}^{d+n}.
\end{equation*}
Let $z,z^*$ be fixed, and let us call $Q \in \mathbb{R}$ the quantity in the left-hand side of this inequality.
From the definition of $\phi_{i,t}$, the value of its gradient \eqref{eq:gradphiit}, and with the introduction of the notation $r(z):=f_i(w) - s_i + \delta$, we can rewrite $Q$ as
\begin{align*}
Q &=
    s_i^* 
    + \frac{1}{2}\frac{r(z^*)^2_+}{\delta+\lambda\Vert \nabla f_i(w^t)\Vert^2}
    -s_i
    - \frac{1}{2}\frac{r(z)^2_+}{\delta+\lambda\Vert \nabla f_i(w^t)\Vert^2} \\
    & \quad 
    - \frac{r(z)_+}{\delta+\lambda\Vert \nabla f_i(w^t)\Vert^2} \langle \nabla f_i(w), w^* - w \rangle 
    - \left(1 - \frac{r(z)_+}{\delta+\lambda\Vert \nabla f_i(w^t)\Vert^2}\right)(s^*_i -s_i).
\end{align*}
Reordering the terms, and cancelling the $(s^* -s)$ terms, we obtain
\begin{equation*}
    Q
    =
    \frac{1}{2}\frac{r(z^*)^2_+}{\delta+\lambda\Vert \nabla f_i(w^t)\Vert^2}
    +
    \frac{r(z)_+}{\delta+\lambda\Vert \nabla f_i(w^t)\Vert^2}
    \left( 
    s^*_i -s_i - \langle \nabla f_i(w), w^* - w \rangle  - \frac{1}{2}r(z)_+
    \right).
\end{equation*}
Convexity of $f_i$ yields 
$- \langle \nabla f_i(w), w^* - w \rangle \geq f_i(w) - f_i(w^*) $.
Thus, as $r(z)_+ \geq 0$, we have
\begin{equation}\label{e:cosu1}
    Q
    \geq 
    \frac{1}{2}\frac{r(z^*)^2_+}{\delta+\lambda\Vert \nabla f_i(w^t)\Vert^2}
    +
    \frac{r(z)_+}{\delta+\lambda\Vert \nabla f_i(w^t)\Vert^2}
    \left( 
    s^*_i -s_i +f_i(w) - f_i(w^*) - \frac{1}{2}r(z)_+
    \right).
\end{equation}
Now we consider two cases.
First, if $f_i(w) - s_i + \delta \leq 0$, then $r(z) \leq 0$, which means that $r(z)_+ =0$, and thus from \eqref{e:cosu1} we have that $Q \geq 0$.\\
On the other hand, if $f_i(w) - s_i + \delta > 0$, then $r(z) > 0$ and $r(z)_+ = r(z) = f_i(w) - s_i + \delta$.
Therefore, we can write
\begin{equation*}
    s^*_i -s_i +f_i(w) - f_i(w^*)  - \frac{1}{2}r(z)_+
    =
     r(z)_+ -r(z^*)  - \frac{1}{2}r(z)_+
    =
      \frac{1}{2}r(z)_+  -r(z^*)
    \geq
      \frac{1}{2}r(z)_+  -r(z^*)_+,
\end{equation*}
where in the last inequality we used the fact that for any real number $r \in \mathbb{R}$, $r_+ \geq r$.
We can now apply this last result to  \eqref{e:cosu1}, and conclude that
\begin{align*}
    Q
    &\geq 
    \frac{1}{2}\frac{r(z^*)^2_+}{\delta+\lambda\Vert \nabla f_i(w^t)\Vert^2}
    +
    \frac{r(z)_+}{\delta+\lambda\Vert \nabla f_i(w^t)\Vert^2}
    \left( 
    -r(z^*)_+ + \frac{1}{2}r(z)_+ 
    \right) \\
    &=
    \frac{1}{2}
    \frac{1}{\delta+\lambda\Vert \nabla f_i(w^t)\Vert^2}
    \left(  
    r(z^*)_+
    -
    r(z)_+
    \right)^2   \\
    &\geq 0.
\end{align*}
Summing the above over $i=1,\dots,n$ and dividing by $n$ gives the stated result on $\phi_t$.
\end{proof}

\subsection{Lyapunov estimates}

Now we develop several Lyapunov functions and their bounds.
\begin{lemma}[Estimates on the iterates]\label{L:lyapunov estimate}
Let Assumption \ref{ass:convexity} hold and let $z^* \in \mathbb{R}^d \times \mathbb{R}^n$, and $t \in \mathbb{N}$. It follows that
\begin{align}\label{eq:conv1ststep}
   \E{\norm{z^{t+1} -z^*}_{\mD}^2} - \E{\norm{z^{t} -z^*}_{\mD}^2}
   & \leq \;
    2\gamma(1-\gamma) 
     \mathbb{E}[\phi_{t}(z^*) -  \phi_{t}(z^t)]
     +2\gamma^2 \bar \sigma_t^*,\\
\label{eq:lyapunov estimate:iterates z_t}
    \E{\norm{z^{t+1} -z^*}_{\mD}^2}-\E{\norm{z^{t} -z^*}_{\mD}^2}  &\leq -2\gamma \E{\phi_t(z^t) - \phi_t(z^*)}  +2\gamma^2 \bar \sigma_t,
\end{align}
where 
\begin{equation}\label{eq:sigmat}
    \bar \sigma_t^* \eqdef  \E{\phi_t(z^*) } - \E{\hat s^{t+1}_{j_t} }- \frac{\delta}{2} 
    \quad \text{ and } \quad 
    \bar \sigma_t \eqdef
    \E{\phi_t(z^t) } - \E{\hat s^{t+1}_{j_t} }- \frac{\delta}{2}.
\end{equation}
% Let $\sigma_t := \lambda^{-1}\mathbb{E}[\phi_{t}(z^*)] - \mathbb{E}[\hat s_i^{t+1}] - \frac{\lambda}{2}$.
Moreover, assume now that $w^* \in {\rm{argmin}}~f$ and $s_i^* = f_i(w^*)$.
\begin{enumerate}
    \item\label{L:lyapunov estimate:sigma_t raw} We have 
    \begin{equation*}
    \bar \sigma_t \leq\frac{\delta}{2} +  \frac{1}{2n\delta} \sum_{i=1}^n \E{(f_{i}(w^t) - s_i^t)^2}.
\end{equation*}
    \item\label{L:lyapunov estimate:sigma_t lipschitz} If Assumption \ref{ass:lipschitz} holds and $\lambda G^2 \leq \delta$, then
    \begin{equation*}
    \bar \sigma_t \leq
    \frac{\delta}{2} + 
    \frac{1}{n}
    \Vert z^t - z^* \Vert^2_{\mD}.
\end{equation*}
    \item\label{L:lyapunov estimate:sigma_t^* smooth} If Assumption \ref{ass:smoothness} holds and $2 \lambda L_{\max} \leq 1$,
    then
$$ \bar \sigma_t^* \leq \sigma^* :=  \inf f - \mathbb{E}[\inf f_i].$$
    \item\label{L:lyapunov estimate:sigma_t^* smooth interpolation} If Assumption \ref{ass:smoothness} holds and $2 \lambda L_{\max} \leq 1$, and interpolation~\eqref{eq:interpolation} holds, then $\bar \sigma_t^* \leq 0$.
\end{enumerate}

\end{lemma}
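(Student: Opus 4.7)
The plan is to view the update~\eqref{alg:relaxedversion} as online SGD applied to $\phi_t$ in the metric induced by $\mD$, as established by Lemma~\ref{L:online SGD equivalence}, and to perform a standard one-step descent analysis.

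Starting from $z^{t+1} = z^t - \gamma \mD^{-1} \nabla \phi_{j_t,t}(z^t)$ and using the identity $\norm{\mD^{-1} v}_{\mD}^2 = \norm{v}_{\mD^{-1}}^2$, I first expand
\begin{equation*}
\norm{z^{t+1}-z^*}_{\mD}^2 = \norm{z^{t}-z^*}_{\mD}^2 - 2\gamma\langle \nabla\phi_{j_t,t}(z^t), z^t - z^*\rangle + \gamma^2 \norm{\nabla\phi_{j_t,t}(z^t)}_{\mD^{-1}}^2.
\end{equation*}
For the cross term, convexity of $\phi_{j_t,t}$ (Lemma~\ref{L:convexity surrogate}) gives $\langle \nabla\phi_{j_t,t}(z^t), z^t - z^*\rangle \geq \phi_{j_t,t}(z^t) - \phi_{j_t,t}(z^*)$. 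For the squared-gradient term, Lemma~\ref{L:smoothness phiit} gives $\tfrac{1}{2}\norm{\nabla\phi_{j_t,t}(z^t)}_{\mD^{-1}}^2 = \phi_{j_t,t}(z^t) - \hat s^{t+1}_{j_t} - \delta/2$. Taking conditional expectation over $j_t$ (noting $\EE{t}{\phi_{j_t,t}(z)} = \phi_t(z)$ for any $z$) and then full expectation yields \eqref{eq:lyapunov estimate:iterates z_t}. The inequality \eqref{eq:conv1ststep} is then obtained by algebraic re-arrangement: writing $2\gamma^2\phi_t(z^t) = 2\gamma\phi_t(z^t) - 2\gamma(1-\gamma)\phi_t(z^t)$ and regrouping shows the two estimates are equivalent.

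For the four bounds on $\bar\sigma_t$ and $\bar\sigma_t^*$: Item 1 follows by substituting the explicit formulas for $\hat s^{t+1}_{j_t}$ and $\phi_t(z^t)$, which reduces $\bar\sigma_t - \delta/2$ to $\tfrac{1}{2n}\sum_i \tfrac{(a_i+\delta)_+^2 - 2\delta(a_i+\delta)_+}{\delta + \lambda\|\nabla f_i(w^t)\|^2}$ with $a_i := f_i(w^t) - s_i^t$. A case split on the sign of $a_i + \delta$ bounds each summand by $a_i^2/(2\delta)$: in the active case $(a_i+\delta)_+^2 - 2\delta(a_i+\delta)_+ = a_i^2 - \delta^2 \leq a_i^2$ and the denominator is at least $\delta$; in the saturated case the summand is zero. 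Item 2 follows from Item 1 via $(f_i(w^t) - s_i^t)^2 \leq 2(f_i(w^t) - f_i(w^*))^2 + 2(s_i^t - s_i^*)^2$, Lipschitz continuity of each $f_i$, and the condition $\lambda G^2 \leq \delta$ used to convert the weight on $\|w^t-w^*\|^2$ from $G^2/\delta$ to $1/\lambda$, matching the $\mD$-metric.

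For Item 3, combine the upper bound $\phi_t(z^*) \leq \delta/2 + \inf f$ from \eqref{eq:upper-bound-phit-star} (which uses $s_i^* = f_i(w^*)$) with the lower bound $\hat s^{t+1}_{j_t} \geq \inf f_{j_t}$ from Lemma~\ref{L:lower bound slack variable} (whose hypotheses hold under the smoothness and stepsize restriction given, together with the initialization $s_i^0 \geq \inf f_i$); taking expectation over $j_t$ cancels the $\delta/2$ terms and leaves $\inf f - \E{\inf f_i} = \sigma^*$. Item 4 is immediate from Item 3: under interpolation, $\inf f = f(w^*) = \tfrac{1}{n}\sum_i f_i(w^*) = \tfrac{1}{n}\sum_i \inf f_i$, so $\sigma^* = 0$. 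The main technical hurdle is the case analysis for Item 1 and keeping track of the implicit initialization assumption $s_i^0 \geq \inf f_i$ needed for Item 3 to invoke Lemma~\ref{L:lower bound slack variable}.
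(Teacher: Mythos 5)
Your proof follows essentially the same route as the paper: you derive the one-step descent estimate by expanding the $\mD$-weighted square, invoke convexity of the surrogate (Lemma~\ref{L:convexity surrogate}) for the cross term and the gradient identity of Lemma~\ref{L:smoothness phiit} for the quadratic term, and then handle items 1--4 by the same explicit computations (the $(a_i+\delta)_+$ case split for item 1, Young's inequality plus $\lambda G^2 \le \delta$ for item 2, the $\phi_t(z^*)\le\inf f+\delta/2$ bound together with Lemma~\ref{L:lower bound slack variable} for item 3, and interpolation for item 4). The only superficial difference is that the paper factors the first step through Theorem~\ref{theo:SGDVM} rather than expanding the square inline, and you rightly flag the implicit initialization hypothesis $s_i^0\ge\inf f_i$ needed to invoke Lemma~\ref{L:lower bound slack variable} in item 3, which the lemma statement leaves tacit.
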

\begin{proof}
Let $z^* \in \mathbb{R}^d \times \mathbb{R}^n$ and $t \in \mathbb{N}$.
From Theorem~\ref{theo:SGDVM} we have that
\begin{eqnarray}\label{eq:SGDVMmain}
\E{\norm{z^{t+1} -z^*}_{\mD}^2}-\E{\norm{z^{t} -z^*}_{\mD}^2} \; \leq \;-2\gamma \E{\phi_t(z^t) - \phi_t(z^*)} + \gamma^2 \E{\norm{\nabla \phi_{j_t,t}(z^t)}_{\mD^{-1}}^2}.
\end{eqnarray}
Using~\eqref{eq:gradphiboundeq} gives
\begin{align}
\E{\norm{z^{t+1} -z^*}_{\mD}^2}-\E{\norm{z^{t} -z^*}_{\mD}^2} & \leq -2\gamma \E{\phi_t(z^t) - \phi_t(z^*)}  +2\gamma^2\left(\E{\phi_t(z^t) } - \E{\hat s^{t+1}_{j_t} }- \frac{\delta}{2}\right) \nonumber \\
&=  -2\gamma(1-\gamma) \E{\phi_t(z^t) - \phi_t(z^*)}  +2\gamma^2\left(\E{\phi_t(z^*) } - \E{\hat s^{t+1}_{j_t} }- \frac{\delta}{2}\right)
\label{eq:SGDVMmain2}
\end{align}
The above inequality proves \eqref{eq:conv1ststep} and \eqref{eq:lyapunov estimate:iterates z_t}.

Now we turn to the proof of items 1-4. For this proof we use the less ambiguous notation for the step size $\tau_t$ in~\eqref{alg:relaxedversion} by explicating the dependency on the data point, that is 
$$ \tau_{i}^{t} := \tau_t =   \frac{(f_{i}(w^t)  -s_{i}^t+ \delta)_+}{\delta+\lambda \Vert \nabla f_{i}(w^t) \Vert^2}.$$

%  \ref{L:lyapunov estimate:sigma_t^* smooth}. 
\begin{enumerate}
\item
Starting with 
  \begin{align}
     {\sigma}_{t} & \eqdef {\phi_{t}(z^t) } - {\hat s^{t+1}_{j_{t}} }- \frac{\delta}{2},
  \end{align}  
such that $\bar \sigma_t = \E{\sigma_t}$.
Now from~\eqref{eq:shat}  we have that
\begin{align*}
  \EE{t}{\hat s^{t+1}_{j_{t}}} &=     \EE{t}{s_{j_{t}}^{t} +\delta 
   ({\tau_{j_{t}}^{t} - 1}) }= \frac{1}{n} \sum_{i=1}^n {(s_{i}^{t}+\delta\tau_{i}^{t} )} - \delta.
\end{align*}
We also have that
\begin{align*}
   \phi_{i,t}(w^t,s_i^t) &= \frac{1}{2} \frac{(f_{i}(w^t) - s_i^t+ \delta)_+^2}{\delta +  \lambda\Vert \nabla f_{i}(w^t) \Vert^2} +  s_i^t  = \frac{1}{2}(f_{i}(w^t) - s_i^t+ \delta)_+\tau^t_i +s_i^t.
\end{align*}
Consequently,
\begin{align*}
  { \phi_{t}(w^t,s^t)} 
   &= \frac{1}{2n} \sum_{i=1}^n {(f_{i}(w^t) - s_i^t+ \delta)_+\tau^t_i} +\frac{1}{n} \sum_{i=1}^n {s_i^t}.
\end{align*}
Thus
\begin{align*}
  \EE{t}{\sigma_{t}} &= \frac{1}{2n} \sum_{i=1}^n {(f_{i}(w^t) - s_i+ \delta)_+\tau_i^t} +\frac{1}{n} \sum_{i=1}^n {s_i^t} - \frac{1}{n} \sum_{i=1}^n {(s_{i}^{t}+\delta\tau_{i}^{t}) } + \delta -\frac{\delta}{2} \\
    &=\frac{1}{2n} \sum_{i=1}^n {((f_{i}(w^t) - s_i^t+ \delta)_+ -2\delta)\tau^t_i}+\frac{\delta}{2}.
\end{align*}
Now we can consider the set $I_t = \{ i \;: \; f_i(w^t) -s_i^t +\delta>0 \}$ and simplify the above since 
\begin{align*}
   \EE{t} {\sigma_{t} }
    &=\frac{\delta}{2} + \frac{1}{2n} \sum_{i\in I_t} {(f_{i}(w^t) - s_i^t-\delta)\tau^t_i} \\
    &=\frac{\delta}{2} + \frac{1}{2n} \sum_{i\in I_t} {\frac{(f_{i}(w^t) - s_i^t-\delta)(f_{i}(w^t) - s_i^t+\delta)}{\delta + \lambda \norm{\nabla f_i(w^t)}^2}} \\
    &= \frac{\delta}{2} + \frac{1}{2n} \sum_{i\in I_t} {\frac{(f_{i}(w^t) - s_i^t)^2-\delta^2}{\delta + \lambda \norm{\nabla f_i(w^t)}^2}}.
\end{align*}
From this we directly obtain that 
\begin{equation*}
    \bar \sigma_t \leq\frac{\delta}{2} +  \frac{1}{2n\delta} \sum_{i=1}^n \E{(f_{i}(w^t) - s_i^t)^2},
\end{equation*}
which proves item \ref{L:lyapunov estimate:sigma_t raw}.
\item
Assume now that $f_i$ is $G_i$-Lipschitz, recall $G = \sqrt{\sum_{i=1}^n G_i^2}$ and suppose that $\lambda G^2 \leq \delta$.
Using the fact that $f_i(w^*) = s_i^*$, adding and subtracting $s_i^*$, gives
\begin{eqnarray*}
    \frac{1}{2n\delta} \sum_{i=1}^n (f_{i}(w^t) - s_i^t)^2
    & \leq &
    \frac{1}{n\delta} \sum_{i=1}^n (f_{i}(w^t) - f_i(w^*))^2 + (s_i^* - s_i^t)^2 \\
    & \leq &
    \frac{1}{n\delta} \sum_{i=1}^n G_i^2 \Vert w^t - w^* \Vert^2 +
    \frac{1}{n\delta} \sum_{i=1}^n (s_i^* - s_i^t)^2 \\
    & = &
    \frac{G^2}{n\delta}  \Vert w^t - w^* \Vert^2 +
    \frac{1}{n\delta} \Vert s^t - s^* \Vert^2 \\
    & \leq &
    \frac{1}{n\lambda}  \Vert w^t - w^* \Vert^2 +
    \frac{1}{n\delta} \Vert s^t - s^* \Vert^2 
    =
    \frac{1}{n} \Vert z^t - z^* \Vert^2_\mD.
\end{eqnarray*}

\item Let $w^* \in {\rm{argmin}}~f$, $f_i(w^*) = s_i^*$,  assume that $2\lambda L_{\max} \leq 1$, and consider the \emph{noise} term
\[\bar \sigma_t^* := \E{\phi_t(z^*) } - \E{\hat s^{t+1}_{j_t} }- \frac{\delta}{2}.\]
We can upper bound $\bar \sigma_t^*$ by observing that
\begin{eqnarray}
    \phi_{i,t}(z^*)
    &=&
    s_i^* + \frac{1}{2}\frac{(f_i(w^*) - s_i^* + \delta)^2_+}{\delta + \lambda \Vert \nabla f_i(w^t) \Vert^2} \nonumber \\
    &=&
    f_i(w^*)  + \frac{\delta}{2}\frac{ 1}{1 + \tfrac{\lambda}{\delta} \Vert \nabla f_i(w^t) \Vert^2} \\
    &\leq &
     f_i(w^*)  + \frac{\delta}{2} \label{eq:tempmlo9h8x4}
\end{eqnarray}
Taking the expectation conditioned to $t$, we obtain
\begin{equation*}
    \phi_{t}(z^*) \leq  \inf f + \frac{\delta}{2}.
\end{equation*}
Finally, using that $\mathbb{E}[\hat s^{t+1}_{i}] \geq \mathbb{E}[\inf f_i]$ (see Lemma \ref{L:lower bound slack variable}), we obtain that
\begin{align*}
    \bar \sigma_t^* & \leq  \inf f + \frac{\delta}{2} -\mathbb{E}[\hat s^{t+1}_{j_t}] -\frac{\delta}{2} \\
    & = \inf f -\mathbb{E}[\hat s^{t+1}_i] \\
    & \leq \inf f -\E{\inf f_i}.
\end{align*}

\item The final statement follows since under interpolation~\eqref{eq:interpolation} we have that  $ f_i(w^*) = \inf f_i.$ Consequently from the previous item we have that 
\[    \bar \sigma_t^* \leq  \inf f -\E{\inf f_i} = \inf f -\E{f_i(w^*) } = \inf f - \inf f =0.\]

\end{enumerate}
\end{proof}

\begin{lemma}[Upper bound - General]\label{L:upper bound phi raw}
Let Assumption \ref{ass:convexity} hold and let $w^* \in {\rm{argmin}}~f$, and $s_i^* = f_i(w^*)$.
For all $T \geq 1$:
\begin{equation}
    \frac{1}{T}
    \sum\limits_{t=0}^{T-1}
    \mathbb{E}[\phi_{t}(z^t) -  \phi_{t}(z^*)]
    \leq
    \frac{1}{2 \gamma T}
    \mathbb{E}[\norm{z^{0} -z^*}^2_{\mD}]
    + \frac{\gamma \delta}{2}
    + \frac{\gamma}{2 Tn\delta} 
    \sum\limits_{t=0}^{T-1}
    \sum_{i=1}^n \E{(f_{i}(w^t) - s_i^t)^2}.    
\end{equation}
\end{lemma}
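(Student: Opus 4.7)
The plan is to start from the per-iteration Lyapunov estimate \eqref{eq:lyapunov estimate:iterates z_t} of Lemma \ref{L:lyapunov estimate}, which states
\[
\E{\norm{z^{t+1} -z^*}_{\mD}^2}-\E{\norm{z^{t} -z^*}_{\mD}^2}  \;\leq\; -2\gamma \E{\phi_t(z^t) - \phi_t(z^*)}  +2\gamma^2 \bar \sigma_t,
\]
and convert it into a bound on the running average $\frac{1}{T}\sum_{t=0}^{T-1}\mathbb{E}[\phi_t(z^t)-\phi_t(z^*)]$ by the standard telescoping trick. Note that the hypothesis $w^* \in {\rm{argmin}}~f$ and $s_i^*=f_i(w^*)$ is exactly what is needed to invoke item~\ref{L:lyapunov estimate:sigma_t raw} of Lemma \ref{L:lyapunov estimate}, which provides the concrete upper bound on $\bar\sigma_t$.

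First I would rearrange the inequality above to isolate the quantity of interest:
\[
2\gamma \E{\phi_t(z^t) - \phi_t(z^*)} \;\leq\; \E{\norm{z^{t} -z^*}_{\mD}^2}-\E{\norm{z^{t+1} -z^*}_{\mD}^2} +2\gamma^2 \bar \sigma_t.
\]
Summing this over $t=0,\ldots,T-1$, the left-hand side telescopes in $\norm{z^t-z^*}_\mD^2$, and after dropping the nonpositive term $-\E{\norm{z^{T} -z^*}_{\mD}^2}$ we get
\[
2\gamma \sum_{t=0}^{T-1}\E{\phi_t(z^t) - \phi_t(z^*)} \;\leq\; \E{\norm{z^{0} -z^*}_{\mD}^2} +2\gamma^2 \sum_{t=0}^{T-1}\bar \sigma_t.
\]
Dividing through by $2\gamma T$ yields
\[
\frac{1}{T}\sum_{t=0}^{T-1}\E{\phi_t(z^t) - \phi_t(z^*)} \;\leq\; \frac{1}{2\gamma T}\E{\norm{z^{0} -z^*}_{\mD}^2} + \frac{\gamma}{T}\sum_{t=0}^{T-1}\bar\sigma_t.
\]

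Finally, I would substitute in the bound from item~\ref{L:lyapunov estimate:sigma_t raw} of Lemma \ref{L:lyapunov estimate}, namely $\bar\sigma_t \leq \tfrac{\delta}{2} + \tfrac{1}{2n\delta}\sum_{i=1}^n \E{(f_i(w^t)-s_i^t)^2}$. Summing this bound over $t=0,\ldots,T-1$, dividing by $T$ and multiplying by $\gamma$ gives exactly the two remaining terms $\tfrac{\gamma\delta}{2}$ and $\tfrac{\gamma}{2Tn\delta}\sum_{t=0}^{T-1}\sum_{i=1}^n\E{(f_i(w^t)-s_i^t)^2}$ that appear on the right-hand side of the lemma. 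This is essentially a bookkeeping step; there is no real obstacle since both ingredients (the per-iteration Lyapunov drop and the raw bound on $\bar\sigma_t$) are already in hand.
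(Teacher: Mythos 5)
Your proposal is correct and matches the paper's own proof essentially line for line: sum the telescoping Lyapunov inequality~\eqref{eq:lyapunov estimate:iterates z_t} over $t=0,\dots,T-1$, drop the nonpositive $-\E{\norm{z^{T}-z^*}_{\mD}^2}$ term, divide by $2\gamma T$, then substitute the bound on $\bar\sigma_t$ from item~\ref{L:lyapunov estimate:sigma_t raw} of Lemma~\ref{L:lyapunov estimate}. No gaps.
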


\begin{proof}
Summing both sides of the inequality in \eqref{eq:lyapunov estimate:iterates z_t} from $t=0$ to $T-1$ gives
\[    \E{\norm{z^{T} -z^*}_{\mD}^2}-\E{\norm{z^{0} -z^*}_{\mD}^2}  \leq -2\gamma \sum_{t=0}^{T-1}\E{\phi_t(z^t) - \phi_t(z^*)}  +2\gamma^2 \sum_{t=0}^{T-1}\bar \sigma_t.\]
Using telescopic cancellation and dividing by $2 \gamma T$
gives
\begin{equation*}
    \frac{1}{T}
    \sum\limits_{t=0}^{T-1}
    \mathbb{E}[\phi_{t}(z^t) -  \phi_{t}(z^*)]
    \leq
    \frac{1}{2 \gamma T}
    \mathbb{E}[\norm{z^{0} -z^*}^2_{\mD}]
    + \frac{\gamma}{T} 
    \sum\limits_{t=0}^{T-1}
    \bar \sigma_t.    
\end{equation*}
Now using Lemma \ref{L:lyapunov estimate}.\ref{L:lyapunov estimate:sigma_t raw}  gives the desired bound.
\end{proof}

\subsection{Additional Non-smooth Lipschitz Convergence}

Here we provide some additional convergence results using the online SGD viewpoint for non-smooth Lipschitz functions.
\begin{theorem}[Upper bound - Lipschitz case]\label{L:upper bound phi lipschitz}
Let Assumptions \ref{ass:convexity} and \ref{ass:lipschitz} hold. Let $w^* \in {\rm{argmin}}~f$, and $s_i^* = f_i(w^*)$. If $\lambda G^2 \leq \delta$, then for all $T \geq 1$:
\begin{equation}\label{L:upper bound phi lipschitz:eq min phi}
    \min\limits_{t=0, \dots, T-1}
    \mathbb{E}[\phi_{t}(z^t) -  \phi_{t}(z^*)]
    \leq
    \frac{\gamma\norm{z^{0} -z^*}^2_{\mD}}{n(1 - \theta^T)}
    + \frac{\gamma \delta}{2},
    \quad \text{ with } \theta = \frac{n}{n+2 \gamma^2} \in ]0,1[.
\end{equation}
Moreover, for $\bar w^T:= \frac{1}{\sum_{t=1}^T \theta^t} \sum\limits_{t=0}^{T-1}\theta^{t+1}w^t$ it holds
\begin{equation}\label{L:upper bound phi lipschitz:eq f}
    \E{f(\bar w^T)-f(w^*)}
    \leq 
    \frac{\tfrac{\gamma}{\lambda}\norm{w^{0} -w^*}^2 +\tfrac{\gamma}{\delta}\norm{s^{0} -s^*}^2 }{n(1 - \theta^T)}
    + \frac{\delta}{2}(\gamma+\frac{1}{n}) .
\end{equation}
\end{theorem}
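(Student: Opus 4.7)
\textbf{Proof plan for Theorem \ref{L:upper bound phi lipschitz}.}

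The plan is to start from the general one-step Lyapunov estimate \eqref{eq:lyapunov estimate:iterates z_t} and substitute the Lipschitz noise bound from Lemma \ref{L:lyapunov estimate}.\ref{L:lyapunov estimate:sigma_t lipschitz} to obtain
\begin{equation*}
\E{\|z^{t+1}-z^*\|_{\mD}^2} \;\leq\; \Bigl(1+\tfrac{2\gamma^2}{n}\Bigr)\E{\|z^{t}-z^*\|_{\mD}^2} - 2\gamma\,\E{\phi_t(z^t)-\phi_t(z^*)} + \gamma^2\delta.
\end{equation*}
Observing that $1+\tfrac{2\gamma^2}{n} = 1/\theta$ with $\theta=\tfrac{n}{n+2\gamma^2}\in(0,1)$, multiplying the inequality by $\theta^{t+1}$ turns the recursion into a telescoping one. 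Summing for $t=0,\ldots,T-1$ and dropping the nonnegative $\theta^T\E{\|z^T-z^*\|_{\mD}^2}$ term on the left yields
\begin{equation*}
2\gamma\sum_{t=0}^{T-1}\theta^{t+1}\,\E{\phi_t(z^t)-\phi_t(z^*)} \;\leq\; \|z^0-z^*\|_{\mD}^2 + \gamma^2\delta\sum_{t=0}^{T-1}\theta^{t+1}.
\end{equation*}

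Next I would evaluate the geometric sum: since $1-\theta=\tfrac{2\gamma^2}{n+2\gamma^2}$, one gets $\sum_{t=0}^{T-1}\theta^{t+1}=\tfrac{n}{2\gamma^2}(1-\theta^T)$. Dividing the displayed inequality by $2\gamma\sum_{t=0}^{T-1}\theta^{t+1}$, the left-hand side is a convex combination of the quantities $\E{\phi_t(z^t)-\phi_t(z^*)}$ and hence upper-bounds $\min_t \E{\phi_t(z^t)-\phi_t(z^*)}$. After substituting the explicit value of the sum, this delivers~\eqref{L:upper bound phi lipschitz:eq min phi}.

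For the second claim, I would apply Lemma \ref{L:surrogate lower bound}.\ref{L:surrogate lower bound:lipschitz}, which together with $\lambda G^2\leq\delta$ gives $\E{f(w^t)-f(w^*)}\leq \E{\phi_t(z^t)-\phi_t(z^*)}+\tfrac{\delta}{2n}$. Taking the same weighted average with weights $\theta^{t+1}$ and invoking Jensen's inequality on the convex function $f$ for the definition of $\bar w^T$ produces
\begin{equation*}
\E{f(\bar w^T)-f(w^*)} \;\leq\; \frac{\gamma\|z^0-z^*\|_{\mD}^2}{n(1-\theta^T)} + \frac{\gamma\delta}{2} + \frac{\delta}{2n}.
\end{equation*}
Expanding $\|z^0-z^*\|_{\mD}^2=\tfrac{1}{\lambda}\|w^0-w^*\|^2+\tfrac{1}{\delta}\|s^0-s^*\|^2$ and collecting the two $\delta$-terms into $\tfrac{\delta}{2}(\gamma+\tfrac{1}{n})$ yields \eqref{L:upper bound phi lipschitz:eq f}.

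The only delicate point is the choice of the weights: the Lipschitz noise bound in Lemma \ref{L:lyapunov estimate}.\ref{L:lyapunov estimate:sigma_t lipschitz} contains $\|z^t-z^*\|_{\mD}^2$ itself, so a direct telescoping with unit weights does not work; the weight $\theta^{t+1}$ with $\theta=\tfrac{n}{n+2\gamma^2}$ is precisely calibrated to absorb the factor $1+\tfrac{2\gamma^2}{n}$ arising on the right-hand side. Everything else is bookkeeping of the geometric series and an application of Jensen's inequality.
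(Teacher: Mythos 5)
Your proposal is correct and follows essentially the same route as the paper's proof: starting from \eqref{eq:lyapunov estimate:iterates z_t}, substituting the Lipschitz bound of Lemma \ref{L:lyapunov estimate}.\ref{L:lyapunov estimate:sigma_t lipschitz}, multiplying by the geometric weights $\theta^{t+1}$ (the paper's $\alpha_{t+1}$) to telescope, evaluating $\sum_{t=0}^{T-1}\theta^{t+1}=\tfrac{n}{2\gamma^2}(1-\theta^T)$, and then invoking Lemma \ref{L:surrogate lower bound}.\ref{L:surrogate lower bound:lipschitz} with $\lambda G^2\le\delta$ plus Jensen's inequality for the second claim. The bookkeeping and the way the weights absorb the $1+\tfrac{2\gamma^2}{n}$ factor match the paper exactly.
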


\begin{proof}
Let $\theta := \frac{n}{n+2 \gamma^2}$, and $\alpha_t := \theta^t$, such that $\alpha_t =  \alpha_{t+1}(1 + \frac{2 \gamma^2}{n})$.
Consider the inequality
\eqref{eq:lyapunov estimate:iterates z_t} and multiply it by $\alpha_{t+1}$ to obtain, after reordering the terms:
\begin{equation*}
    2\gamma \alpha_{t+1}\E{\phi_t(z^t) - \phi_t(z^*)} 
     \leq \alpha_{t+1}\E{\norm{z^{t} -z^*}_{\mD}^2} - \alpha_{t+1}\E{\norm{z^{t+1} -z^*}_{\mD}^2}  +2\gamma^2\alpha_{t+1} \bar \sigma_t.
\end{equation*}
Use Lemma \ref{L:lyapunov estimate}.\ref{L:lyapunov estimate:sigma_t lipschitz} to write
\begin{eqnarray*}
    2\gamma \alpha_{t+1}\E{\phi_t(z^t) - \phi_t(z^*)} 
     &\leq & \alpha_{t+1}(1 +\frac{2\gamma^2}{n}) \E{\norm{z^{t} -z^*}_{\mD}^2} - \alpha_{t+1}\E{\norm{z^{t+1} -z^*}_{\mD}^2}  +\gamma^2\alpha_{t+1}{\delta}
     \\
     & \leq & 
     \alpha_t\E{\norm{z^{t} -z^*}_{\mD}^2} - \alpha_{t+1}\E{\norm{z^{t+1} -z^*}_{\mD}^2}  +\gamma^2\alpha_{t+1}{\delta}.
\end{eqnarray*}
Sum this inequality to obtain (we use the fact that $\alpha^0=1$):
\begin{equation*}
    2\gamma \sum\limits_{t=0}^{T-1}\alpha_{t+1}\E{\phi_t(z^t) - \phi_t(z^*)}
    \leq 
    \E{\norm{z^{0} -z^*}_{\mD}^2}  +\gamma^2{\delta}\sum\limits_{t=0}^{T-1}\alpha_{t+1}.
\end{equation*}
Define $A_T := \sum\limits_{t=0}^{T-1}\alpha_{t+1}$.
It is easy to see that $A_T$ is positive and that $A_T = \theta \frac{1 - \theta^T}{1-\theta}$.
Dividing the above inequality by $A_T$, we finally obtain
\begin{equation}
\label{eq:lipschitz-bound-phi}
    2\gamma\frac{1}{A_T} \sum\limits_{t=0}^{T-1}\alpha_{t+1}\E{\phi_t(z^t) - \phi_t(z^*)}
    \leq 
    \frac{1}{A_T}
    \E{\norm{z^{0} -z^*}_{\mD}^2}  +\gamma^2{\delta}.
\end{equation}
To obtain \eqref{L:upper bound phi lipschitz:eq min phi}, take the minimum among the $\E{\phi_t(z^t) - \phi_t(z^*)}$, divide by $2 \gamma$, and compute
\begin{equation*}
    \frac{1}{A_T}
    =
    \frac{1-\theta}{\theta} \frac{1}{1 - \theta^T}
    =
    \frac{2\gamma^2}{n} \frac{1}{1 - \theta^T}.
\end{equation*}
To obtain \eqref{L:upper bound phi lipschitz:eq f}, use \eqref{eq:lipschitz-bound-phi} together with Lemma \ref{L:surrogate lower bound}.\ref{L:surrogate lower bound:lipschitz} and $\lambda G^2 \leq \delta$ to obtain
\begin{equation*}
    2\gamma\frac{1}{A_T} \sum\limits_{t=0}^{T-1}\alpha_{t+1}\E{f(w^t)-f(w^*)} - 2\gamma \frac{\delta}{2n}
    \leq 
    \frac{1}{A_T}
    \E{\norm{z^{0} -z^*}_{\mD}^2}  +\gamma^2{\delta} .
\end{equation*}
Move now the term $2\gamma \frac{\delta}{2n}$, and apply Jensen's inequality to get
\begin{equation*}
    \E{f(\bar w^T) - f(w^*)} \leq  \frac{1}{A_T}\sum\limits_{t=0}^{T-1}\alpha_{t+1}\E{f(w^t)-f(w^*)}
    \quad \text{ with } \quad
    \bar w^T \eqdef \frac{1}{A_T}\sum\limits_{t=0}^{T-1}\alpha_{t+1}w^t.
\end{equation*}
which allows to conclude.
\end{proof}

\begin{theorem}[Complexity - Lipschitz case]
In the context of Theorem~\ref{L:upper bound phi lipschitz}, if $\gamma \leq \sqrt{\frac{n\log(n)}{T}}$ then 
\begin{equation*}
      \E{f(\bar w^T)-f(w^*)}
    % \leq 
    % \frac{\tfrac{\gamma}{\lambda}\norm{w^{0} -w^*}^2 +\tfrac{\gamma}{\delta}\norm{s^{0} -s^*}^2}{n(1 - \theta^T)}
    % + \frac{\delta}{2}(\gamma+\frac{1}{n}) 
    \leq \cO\left(\sqrt{\frac{n\log(n)}{T}}+\frac{1}{n}\right)
    \quad \text{ provided that } \quad
    T \geq 2\log(n).
\end{equation*}
In addition, if $\delta = \frac{G^2}{\sqrt{n}}$  then 
\begin{equation*}
      \E{f(\bar w^T)-f(w^*)}
    \leq \tilde{\cO}\left(\frac{1}{\sqrt{T}}\right)
    \quad \text{ provided that } \quad
   n^3 \geq  T \geq 2\log(n).
\end{equation*}
\end{theorem}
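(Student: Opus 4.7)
The plan is to apply Theorem~\ref{L:upper bound phi lipschitz} with the specific choice $\gamma = \sqrt{n\log n/T}$, which optimally balances the two sources of error in that bound. Denoting $\Gamma_0 := \tfrac{1}{\lambda}\norm{w^0-w^*}^2 + \tfrac{1}{\delta}\norm{s^0-s^*}^2$ and $\theta = n/(n+2\gamma^2)$, the bound to be optimized reads
\begin{equation*}
\E{f(\bar w^T)-f(w^*)} \;\leq\; \frac{\gamma\,\Gamma_0}{n(1-\theta^T)} + \frac{\delta}{2}\Big(\gamma + \tfrac{1}{n}\Big),
\end{equation*}
and the main task is to bound $\theta^T$ away from $1$ while keeping the products $\gamma\Gamma_0/n$ and $\delta\gamma$ small.

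The first step is to invoke the elementary inequality $\log(1+x)\geq x/(1+x)\geq x/2$ for $x\in[0,1]$ with $x=2\gamma^2/n$. This gives $\theta^T \leq \exp(-T\gamma^2/n)$ whenever $\gamma^2\leq n/2$. With the choice $\gamma=\sqrt{n\log n/T}$, the condition $\gamma^2\leq n/2$ is precisely the hypothesis $T\geq 2\log n$, and one computes $T\gamma^2/n=\log n$, hence $\theta^T\leq 1/n$ and therefore $1-\theta^T\geq 1/2$ for $n\geq 2$. Substituting back, the first term is at most $2\gamma\Gamma_0/n=(2\Gamma_0/\sqrt{n})\sqrt{\log n/T} \leq 2\Gamma_0\sqrt{n\log n/T}$, and the second is $(\delta/2)\sqrt{n\log n/T}+\delta/(2n)$. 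Summing yields the first claim $\cO(\sqrt{n\log n/T}+1/n)$.

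For the second claim, plug in $\delta=G^2/\sqrt{n}$. The compatibility condition $\lambda G^2\leq\delta$ required by Theorem~\ref{L:upper bound phi lipschitz} becomes $\lambda\leq 1/\sqrt{n}$, and taking $\lambda=\Theta(1/\sqrt{n})$ gives $\Gamma_0=\Theta(\sqrt{n})$. The three contributions then scale as $2\gamma\Gamma_0/n=\Theta(\sqrt{\log n/T})$, $\delta\gamma/2=\Theta(\sqrt{\log n/T})$, and $\delta/(2n)=\Theta(1/n^{3/2})$. The last quantity is at most $\cO(1/\sqrt{T})$ exactly when $T\leq n^3$, which is precisely the stated upper bound on $T$; in that regime all three terms are $\tilde{\cO}(1/\sqrt{T})$, which proves the second claim.

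The only real obstacle is the tension between shrinking $\gamma$ (which reduces the $\delta\gamma$ term and the $\gamma\Gamma_0/n$ numerator) and keeping $T\gamma^2/n$ large enough that $1-\theta^T$ is bounded away from $0$: for $\gamma$ too small the first term blows up like $n/(T\gamma)$. The value $\gamma=\sqrt{n\log n/T}$ is the sweet spot of this tradeoff, and the extra $\log n$ inside the square root is the unavoidable price for making $\theta^T$ polynomially small in $n$.
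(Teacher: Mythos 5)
Your proof is correct and follows essentially the same strategy as the paper: fix $\gamma=\sqrt{n\log n/T}$, use an elementary logarithm inequality (you use $\log(1+x)\geq x/2$ for $x\leq 1$, the paper uses $-\log\theta\geq 1-\theta$, both giving the same end) to deduce $\theta^T\leq 1/n$ under $T\geq 2\log n$, substitute into the bound of Theorem~\ref{L:upper bound phi lipschitz}, and then specialize $\delta=G^2/\sqrt{n}$ with the constraint $T\leq n^3$ to absorb the $\delta/(2n)$ term. The one place where you are a bit more careful than the paper is in tracking the implied choice $\lambda=\Theta(1/\sqrt{n})$ imposed by the compatibility condition $\lambda G^2\leq\delta$, which forces $\Gamma_0=\Theta(\sqrt{n})$ and is precisely what keeps the $\tfrac{\gamma}{\lambda}\norm{w^0-w^*}^2$ contribution at $\tilde\cO(1/\sqrt{T})$ rather than leaving a stray $\sqrt{n}$ factor.
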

\begin{proof}
For ease of reference we recall the result from 
Theorem~\ref{L:upper bound phi lipschitz}, that is
\begin{equation}\label{eq:templi8hx4z4}
      \E{f(\bar w^T)-f(w^*)}
    \leq 
    \frac{\tfrac{\gamma}{\lambda}\norm{w^{0} -w^*}^2 +\tfrac{\gamma}{\delta}\norm{s^{0} -s^*}^2}{n(1 - \theta^T)}
    + \frac{\delta}{2}(\gamma+\frac{1}{n}).
   \end{equation} 
First we choose $T$ so that $\theta^T \leq \frac{1}{n}.$ That is,
\[ \log(n) \leq T\log(\frac{1}{\theta}).\]
Since $\theta<1$ we have that
\begin{equation}\label{eq:tmepmxo9h84}
     \frac{\log(n)}{\log(\frac{1}{\theta})} \leq T.
\end{equation}
Now using \[ \log(\tfrac{1}{\theta}) \geq 1-\theta\]
we have that~\eqref{eq:tmepmxo9h84} holds if 
\begin{equation}\label{eq:tmepmxo9h842}
     \frac{\log(n)}{1-\theta} =     \frac{\log(n)(n+2\gamma^2)}{2\gamma^2} =\log(n) \left(\frac{n}{2\gamma^2} +1\right) \leq T.
\end{equation}
In particular, 
if $\gamma \leq \sqrt{\frac{n\log(n)}{T}}$ then~\eqref{eq:tmepmxo9h842} holds if
\[T \geq  2\log(n).\]
With this constraint on $\gamma$ we have 
by~\eqref{eq:templi8hx4z4} that
\begin{align}
      \E{f(\bar w^T)-f(w^*)}
    &\leq 
    \sqrt{\frac{n\log(n)}{T}}\frac{\tfrac{1}{\lambda}\norm{w^{0} -w^*}^2 +\tfrac{1}{\delta}\norm{s^{0} -s^*}^2}{n-1}
    + \frac{\delta}{2}\left(\sqrt{\frac{n\log(n)}{T}}+\frac{1}{n}\right) \nonumber\\
    & \leq  \frac{1}{2}\sqrt{\frac{\log(n)}{n}\frac{1}{T}}\left(\tfrac{1}{\lambda}\norm{w^{0} -w^*}^2 +\tfrac{1}{\delta}\norm{s^{0} -s^*}^2\right)
     + \frac{\delta}{2}\left(\sqrt{\frac{n\log(n)}{T}}+\frac{1}{n}\right). \nonumber % \label{eq:templi8hx4z42}
   \end{align} 
   This so far gives a complexity of $\cO\left(\sqrt{\frac{n\log(n)}{T}}+\frac{1}{n}\right).$
If $\delta = \frac{G^2}{\sqrt{n}}$ then we have from the above that
\begin{align}
      \E{f(\bar w^T)-f(w^*)}
    & \leq  \frac{1}{2}\sqrt{\frac{\log(n)}{T}}\left(\tfrac{1}{\lambda}\norm{w^{0} -w^*}^2 +\frac{1}{G^2}\norm{s^{0} -s^*}^2\right)
    + \frac{G^2}{2}\left(\sqrt{\frac{\log(n)}{T}}+\frac{1}{n^{\tfrac{3}{2}}}\right).\nonumber
   \end{align} 
  Thus if $T \leq n^3$ then the resulting complexity if $\cO\left(\sqrt{\frac{\log(n)}{T}} \right) = \tilde{\cO}\left(\frac{1}{\sqrt{T}} \right). $ 

\end{proof}

\section{SGD Convergence with Matrix Stepsize}

\begin{theorem} \label{theo:SGDVM}
Consider the problem of minimizing a sequence of functions given by
\begin{eqnarray}
\min_{z \in \R^d} \phi_t(z) = \frac{1}{n} \sum_{i=1}^n \phi_{i,t}(z).
\end{eqnarray}
Let $\mD \in \R^{p\times p}$ be a symmetric positive definite matrix.
Consider the online SGD method given by
\begin{eqnarray}\label{eq:SGDVMz}
z^{t+1} &=& z^t - \gamma \mD^{-1} \nabla \phi_{i_t,t}(z^t), 
\end{eqnarray}
where $i_t\in \{1,\ldots, n\}$ is sampled i.i.d. If $\phi_{t}(z)$ is convex around a given $z^*$, that is
\begin{equation}\label{temp:zlo9j3os9j}
    \phi_{t}(z^*) - \phi_{t}(z) \geq \langle \nabla \phi_{t}(z), z^* - z \rangle , \quad \forall z \in \mathbb{R}^{p}.
\end{equation}
then 
\begin{eqnarray}\label{eq:SGDVM}
\E{\norm{z^{t+1} -z^*}_{\mD}^2}-\E{\norm{z^{t} -z^*}_{\mD}^2} \; \leq \;-2\gamma \E{\phi_t(z^t) - \phi_t(z^*)} + \gamma^2 \E{\norm{\nabla \phi_{i_t,t}(z^t)}_{\mD^{-1}}^2}.
\end{eqnarray}
\end{theorem}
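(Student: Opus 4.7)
The plan is to expand the squared $\mD$-norm of $z^{t+1}-z^*$ using the update rule~\eqref{eq:SGDVMz}, exploit the compatibility between $\mD$ and $\mD^{-1}$ to turn the cross term into a standard inner product with $\nabla \phi_{i_t,t}(z^t)$, and finally invoke the convexity assumption~\eqref{temp:zlo9j3os9j} after taking expectation over $i_t$.

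More concretely, I would start by substituting the update $z^{t+1} - z^* = (z^t - z^*) - \gamma \mD^{-1}\nabla \phi_{i_t,t}(z^t)$ into $\norm{z^{t+1}-z^*}_{\mD}^2 = (z^{t+1}-z^*)^\top \mD (z^{t+1}-z^*)$ and expand the square. Two cancellations drive the proof: first, the cross term simplifies as
\[
\langle \mD^{-1}\nabla \phi_{i_t,t}(z^t),\, z^t - z^*\rangle_{\mD} \;=\; (z^t-z^*)^\top \mD \mD^{-1} \nabla \phi_{i_t,t}(z^t) \;=\; \langle \nabla \phi_{i_t,t}(z^t),\, z^t-z^*\rangle,
\]
and second, the quadratic term simplifies as
\[
\norm{\mD^{-1}\nabla \phi_{i_t,t}(z^t)}_{\mD}^2 \;=\; \nabla \phi_{i_t,t}(z^t)^\top \mD^{-1}\mD\mD^{-1}\nabla \phi_{i_t,t}(z^t) \;=\; \norm{\nabla \phi_{i_t,t}(z^t)}_{\mD^{-1}}^2.
\]
These two identities are what make the matrix-stepsize version structurally identical to the standard SGD descent lemma.

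Next, I would take expectation conditional on the history up to iteration $t$, using that $i_t$ is sampled i.i.d.\ uniformly so that $\EE{i_t}{\nabla \phi_{i_t,t}(z^t)} = \nabla \phi_t(z^t)$, to obtain
\[
\EE{t}{\norm{z^{t+1}-z^*}_{\mD}^2} \;=\; \norm{z^t-z^*}_{\mD}^2 \;-\; 2\gamma\,\langle \nabla \phi_t(z^t), z^t-z^*\rangle \;+\; \gamma^2\,\EE{t}{\norm{\nabla \phi_{i_t,t}(z^t)}_{\mD^{-1}}^2}.
\]
Then I would apply the convexity hypothesis~\eqref{temp:zlo9j3os9j}, rewritten as $\langle \nabla \phi_t(z^t), z^t-z^*\rangle \geq \phi_t(z^t) - \phi_t(z^*)$, to upper bound the cross term. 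Taking total expectation and rearranging yields~\eqref{eq:SGDVM}.

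I do not expect a significant obstacle: the only subtlety is tracking the $\mD$ vs.\ $\mD^{-1}$ norms correctly in the expansion, which is handled by the two identities displayed above. The rest is the textbook one-step SGD analysis, now carried out in the metric induced by $\mD$.
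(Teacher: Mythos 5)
Your proof is correct, and it takes a genuinely different route from the paper. You expand $\norm{z^{t+1}-z^*}_{\mD}^2$ directly in the $\mD$-metric and rely on the two cancellation identities $\langle \mD^{-1}g, v\rangle_{\mD} = \langle g, v\rangle$ and $\norm{\mD^{-1}g}_{\mD}^2 = \norm{g}_{\mD^{-1}}^2$ (both valid because $\mD$ is symmetric positive definite), then condition on the history, use unbiasedness of the sampled gradient, and apply the convexity hypothesis. The paper instead performs the change of variables $y = \mD^{1/2}z$, defines $h_{i,t}(y) = \phi_{i,t}(\mD^{-1/2}y)$, observes that the $\mD$-preconditioned update on $z$ becomes plain online SGD on $y$, runs the textbook Euclidean one-step argument there, and transforms back. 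The two proofs are algebraically equivalent --- the change of variables is precisely what generates your two identities --- but your version is more direct and avoids introducing auxiliary functions and the square-root factorization of $\mD$, whereas the paper's version makes explicit the structural point that matrix-stepsize SGD is just ordinary SGD in rotated coordinates, which lets it reuse the standard descent lemma verbatim. Either is a fine way to prove the statement; yours is arguably the cleaner self-contained argument, while the paper's is more suggestive of how to port other SGD results to the preconditioned setting.
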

\begin{proof}
The iterates~\eqref{eq:SGDVMz} are equivalent to applying online SGD with the functions $h_{i,t}(y) \eqdef \phi_{i,t}(\mD^{-1/2} y)$ which is
\begin{eqnarray}
y^{t+1} = y^t - \gamma \nabla h_{i,t}(y^t), 
\end{eqnarray}
and then setting $z^t = \mD^{-1/2} y^t.$ We now proceed with a standard SGD proof. Expanding the squares we have that
\begin{eqnarray}
\norm{y^{t+1} -y^*}^2 &= & \norm{y^{t} -y^*}^2 -2\gamma \dotprod{\nabla h_{i_t,t}( y^k), y^t -y^*} + \gamma^2 \norm{\nabla h_{i_t,t}(y^t)}^2.\label{eq:lsh489h4o89}
\end{eqnarray}
Let $h_t(y) \eqdef \frac{1}{n} \sum_{i=1}^n h_{i,t}(z).$
Now note that since $\phi_t$ is convex around $z^*$ we have that $h_t$ is convex around $y^* = \mD^{1/2}z^*.$ Indeed, this follows from~\eqref{temp:zlo9j3os9j} together with 
\[ \langle \nabla h_{t}(y), y^* - y \rangle = \langle \mD^{-1/2}\nabla \phi_{t}(z), \mD^{1/2}(z^* - z) \rangle
= \langle \nabla \phi_{t}(z), z^* - z\rangle.\]
Consequently we have that 
\begin{eqnarray}
h_t(y^*) - h_t(y) \geq \langle \nabla h_{t}(y), y^* - y \rangle , \quad \forall y \in \R^p.
\end{eqnarray}
With this, and 
taking expectation conditioned on $y^t$ in~\eqref{eq:lsh489h4o89} we have that
\begin{eqnarray}
\EE{t}{\norm{y^{t+1} -y^*}^2} &= & \norm{y^{t} -y^*}^2 -2\gamma \dotprod{\nabla h_{t}( y^k), y^t -y^*} + \gamma^2 \EE{t}{\norm{\nabla h_{i_t,t}(y^t)}^2} \nonumber\\
&\leq & \norm{y^{t} -y^*}^2 -2\gamma (h_t(y) - h_t(y^*)) + \gamma^2 \EE{t}{\norm{\nabla h_{i_t,t}(y^t)}^2}.
\end{eqnarray}
Taking expectation again and re-arranging gives
\begin{eqnarray}
\E{\norm{y^{t+1} -y^*}^2}-\E{\norm{y^{t} -y^*}^2} \; \leq \;-2\gamma \E{h_t(y) - h_t(y^*)} + \gamma^2 \E{\norm{\nabla h_{i_t,t}(y^t)}^2}.
\end{eqnarray}
Changing the variables $y^t = \mD^{1/2}z^t,$ and $y^* = \mD^{1/2}z^*,$ gives the result~\eqref{eq:SGDVM}.
\end{proof}

\end{document}